\pdfoutput=1

\PassOptionsToPackage{prologue,dvipsnames}{xcolor} 

\documentclass[11pt]{article}

\usepackage[]{ACL_template/acl}

\usepackage{times}
\usepackage{latexsym}

\usepackage[T1]{fontenc}
\usepackage{CJKutf8}

\usepackage[utf8]{inputenc}

\usepackage{microtype}

\usepackage{inconsolata}

\usepackage{algorithm}
\usepackage{algpseudocode}
\usepackage[inline]{enumitem}
\usepackage{algorithm}
\usepackage{algpseudocode}
\algrenewcommand\algorithmicrequire{\textbf{Input:}}
\algrenewcommand\algorithmicensure{\textbf{Output:}}
\usepackage{amsmath,amssymb,amsfonts,amsthm}
\newcommand{\bigcircledast}{\mathop{\circledast}\limits}
\newtheorem{proposition}{Proposition}
\newtheorem{definition}{Definition}
\newtheorem{lemma}{Lemma}

\usepackage{amsmath,amssymb,amsfonts,amsthm}
\usepackage{booktabs,multirow}
\usepackage{algpseudocode}
\usepackage{enumitem}
\usepackage{mathtools}

\newtheorem{theorem}{Theorem}

\newtheorem{corollary}{Corollary}
\newtheorem{remark}{Remark}

\DeclareMathOperator*{\clip}{clip}

\DeclareMathOperator{\tr}{tr}

\usepackage{multirow}

\usepackage{listings}
\usepackage{tabularx}
\usepackage{ltablex}
\usepackage{longtable}
\usepackage{graphicx}
\usepackage{colortbl}
\usepackage[dvipsnames]{xcolor}
\usepackage[table]{xcolor} 
\usepackage{fontawesome5}
\usepackage{pythonhighlight}

\usepackage{supertabular}
\usepackage{xltabular}
\usepackage{amsmath}
\usepackage{amssymb}
\usepackage{xspace}
\usepackage{mathrsfs}
\usepackage{bm,dutchcal}
\usepackage{float}
\usepackage{booktabs} 
\usepackage{makecell} 
\usepackage{amsmath} 
\usepackage{booktabs}
\usepackage{graphicx}
\usepackage{tcolorbox}
\tcbuselibrary{breakable, skins, listings}

\usepackage{marvosym}

\usepackage{adjustbox}
\usepackage{lineno}
\usepackage{array}
\usepackage{booktabs} 
\usepackage{makecell}

\usepackage[acronym]{glossaries}

\newacronym{3DVG}{3DVG}{3D Visual Grounding}
\newacronym{LLM}{LLM}{Large Language Model}
\newacronym{VLM}{VLM}{Vision Language Model}
\newacronym{LGSP}{LGSP}{Language-Guided Spatial Pruning}
\newacronym{MCDR}{MCDR}{Multi-View-Conditioned Description Reformulation}
\newacronym{LLM-Grounder}{LLM-Grounder}{Fine-tuned LLM-based Grounder}
\newacronym{MLP}{MLP}{Multilayer Perceptron}

\definecolor{custom_light_blue}{rgb}{0.85, 0.95, 1}
\definecolor{custom_light_pink}{rgb}{1, 0.85, 0.85}
\definecolor{custom_light_purple}{rgb}{0.98, 0.91, 0.973}
\definecolor{custom_light_purple_2}{rgb}{0.98, 0.91, 0.953}
\definecolor{custom_darkgreen}{rgb}{0.0, 0.5, 0.0}

\definecolor{custom_purple}{RGB}{180,114,200}   
\definecolor{custom_pink}{RGB}{230,150,190} 
\definecolor{custom_red}{RGB}{225, 30, 86}

\definecolor{gtred}{RGB}{220,20,60} 
\definecolor{m2mblue}{RGB}{0,102,204}

\usepackage[table]{xcolor}
\usepackage{worldflags}
\definecolor{softgrey}{gray}{0.85}

\definecolor{bg}{RGB}{248,248,255}
\definecolor{frame}{RGB}{80,80,180}
\definecolor{titlebg}{RGB}{60,60,150}
\definecolor{string}{RGB}{196,26,22}
\definecolor{keyword}{RGB}{0,102,204}
\definecolor{comment}{RGB}{0,140,0}
\definecolor{number}{RGB}{160,32,240}

\lstdefinestyle{mypython}{
    basicstyle=\ttfamily\footnotesize,
    keywordstyle=\color{keyword}\bfseries,
    stringstyle=\color{string},
    commentstyle=\color{comment}\itshape,
    numberstyle=\tiny\color{gray},
    numbers=left,
    stepnumber=1,
    numbersep=8pt,
    showstringspaces=false,
    breaklines=true,
    frame=none,
    columns=fullflexible
}

\DeclareRobustCommand{\RaptorTitle}{%
  \begingroup\normalfont
  \raisebox{-0.2em}{%
    \includegraphics[height=3em]{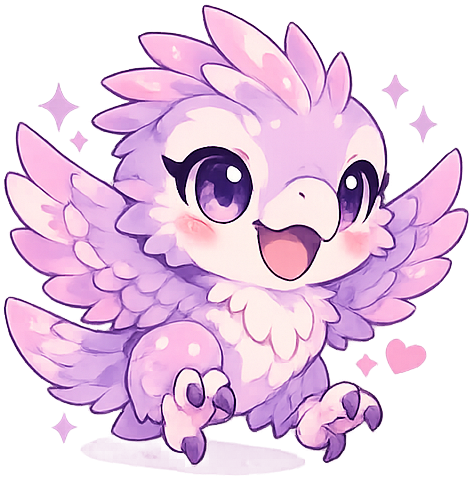}%
  }%
  \kern 0.0em 
  \textbf{\textcolor{custom_purple}{RAPT}\textcolor{custom_pink}{OR}}%
  \endgroup
}

\DeclareRobustCommand{\Raptor}{%
  \begingroup\normalfont
  \raisebox{-0.2em}{%
    \includegraphics[height=1.2em]{images/RAPTOR_icon.png}%
  }%
  \kern 0.0em 
  \textbf{\textcolor{custom_purple}{RAPT}\textcolor{custom_pink}{OR}}%
  \endgroup
}

\title{\RaptorTitle: \textcolor{custom_purple}{R}\textcolor{custom_pink}{ole-}\textcolor{custom_purple}{A}\textcolor{custom_pink}{ware} \textcolor{custom_purple}{P}\textcolor{custom_pink}{rivate} \textcolor{custom_purple}{T}\textcolor{custom_pink}{raining} \textcolor{custom_pink}{for Mixture-of-Experts}}
\author{Duc Dm$^{\text{\textcolor{custom_purple}{\faStar}} \;1}$,
Khai Le-Duc$^{\text{\textcolor{custom_purple}{\faStar}}\;2,3,4}$,
Nguyen Do$^{\text{\textcolor{custom_purple}{\faStar}}\;5}$, 
\\{\bf Minh Son Hoang$^{1}$, Florent Draye$^{6}$, Thai Hoang$^{7}$, Hoang Phuong Dam$^{1}$, Jiarui Liu$^{8}$, }
\\{\bf Chris Ngo$^{4}$, Terry Jingchen Zhang$^{9,10}$, Anh Le Duc Tran$^{11}$, Nhat Do Minh$^{12}$, Minh Ngoc Le$^{2, 3}$}
\\{\bf My T. Thai$^{5}$, Ran Xu$^{7}$, Silvio Savarese$^{7,13}$, Mona Diab$^{8}$,}
\\{\bf Bernhard Sch\"{o}lkopf$^{6,14}$, Zhijing Jin$^{6,10}$, Huy L. Nguyen$^{15}$, Daeyoung Kim$^{1}$}\\
$^1$ \resizebox{!}{0.7em}{\worldflag{KR}} KAIST
$^2$ \resizebox{!}{0.7em}{\worldflag{CA}} University of Toronto
$^3$ \resizebox{!}{0.7em}{\worldflag{CA}} Vector Institute
$^4$ \resizebox{!}{0.7em}{\worldflag{SG}} Knovel Engineering Lab\\
$^5$ \resizebox{!}{0.7em}{\worldflag{US}} University of Florida
$^6$ \resizebox{!}{0.7em}{\worldflag{DE}} MPI for Intelligent Systems, T{\"u}bingen
\\ 
$^7$ \resizebox{!}{0.7em}{\worldflag{US}} Salesforce AI Research
$^8$ \resizebox{!}{0.7em}{\worldflag{US}} Carnegie Mellon University\\
$^{9}$ \resizebox{!}{0.7em}{\worldflag{GB}} University of Oxford
$^{10}$ \resizebox{!}{0.7em}{\worldflag{CA}} Jinesis Lab, University of Toronto \& Vector Institute\\
$^{11}$ \resizebox{!}{0.7em}{\worldflag{VN}} Hanoi University of Science and Technology
$^{12}$ \resizebox{!}{0.7em}{\worldflag{VN}} Vietnam National University, Hanoi\\
$^{13}$ \resizebox{!}{0.7em}{\worldflag{US}} Stanford University
$^{14}$ \resizebox{!}{0.7em}{\worldflag{DE}} ELLIS Institute T{\"u}bingen
$^{15}$ \resizebox{!}{0.7em}{\worldflag{US}} Northeastern University 
\\$^{\text{\textcolor{custom_purple}{\faStar}}}$Co-first authors \quad
\textcolor{custom_purple}{\faEnvelope}  \texttt{ducdm200158@kaist.ac.kr } 
\textcolor{custom_purple}{\faEnvelope} \texttt{duckhai.le@mail.utoronto.ca}
\\ \Large {\faGithubSquare}   \href{https://github.com/leduckhai/RAPTOR}{leduckhai/\textbf{\textcolor{custom_purple}{RAPT}\textcolor{custom_pink}{OR}}}
}

\begin{document}
\maketitle
\begin{abstract}
Differentially private (DP) fine-tuning methods treat sparse
Mixture-of-Experts (MoE) models as a single dense block, ignoring
that shared layers see all data while experts only see routed
records. We identify and formally characterize three resulting
failure modes: global clipping suppresses expert gradients,
batch-level normalization dilutes sparse expert updates, and fixed
privacy noise degrades signal-to-noise ratio on low-load experts.
We introduce \Raptor - a \textbf{Role-Aware Private Training} framework, which alternates
shared and expert optimization and targets each failure directly,
using expert-specific clipping and noise together with a public
expected-owner denominator and a count-independent update schedule
that avoids conditioning on private, realized expert counts. We
prove the resulting mechanism satisfies $(\varepsilon,\delta)$-DP:
because each record is assigned to exactly one owner expert,
per-expert mechanisms within a layer compose in parallel, so
updating all $E$ experts costs no more, in privacy terms, than
updating one, with shared and expert streams composing sequentially
across training. We further derive a bias-variance decomposition of
the public-denominator estimator showing its bias grows predictably
with routing imbalance, yielding a privacy-free rule for selecting
which layer to protect from routing entropy measured on a small
public corpus. Experiments on Switch Transformer and OLMoE
fine-tuning across GLUE tasks, and on the DeepSeek-VL2-Tiny, show consistent gains over standard DP
baselines across several privacy levels ($\varepsilon$), with the largest margins typically at the tightest budgets.
\end{abstract}

\thispagestyle{plain}
\pagestyle{plain}

\addtocontents{toc}{\protect\setcounter{tocdepth}{-1}}

\begin{figure*}[t]
    \centering
    \includegraphics[width=0.7\textwidth]{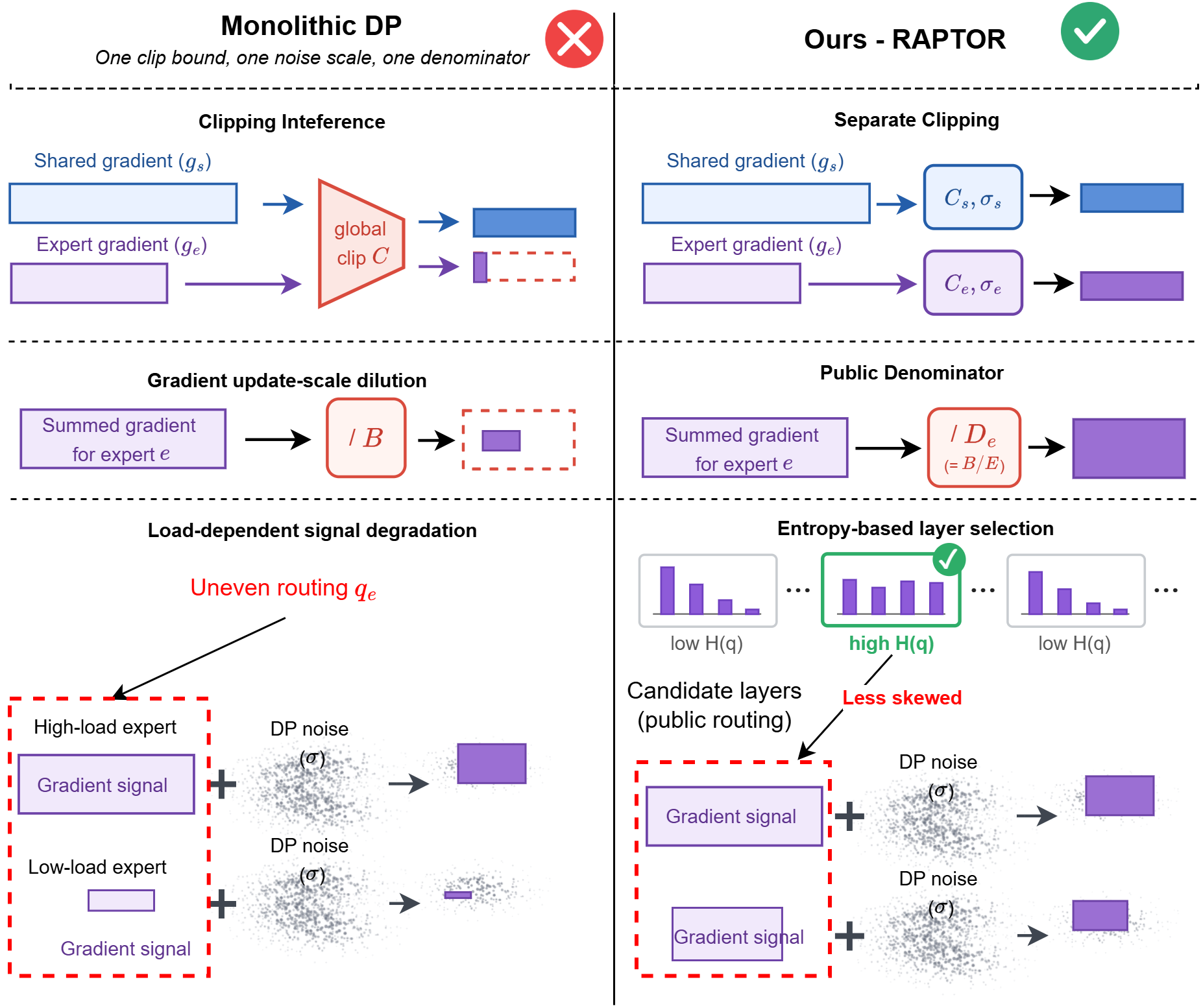}
    \caption{
        \textbf{Applying differential privacy to sparse MoE is not straightforward}:
        treating the model as dense leads to three structural failures -
        clipping interference, update-scale dilution, and load-dependent
        signal degradation. Our role-aware
        method is designed to
        address these mismatches.
    }
    \label{fig:teaser}
\end{figure*}

\section{Introduction}
Sparse Mixture-of-Experts (MoE) architectures decouple model capacity
from per-record computation, making conditional sparsity a central
strategy for scaling foundation
models~\cite{shazeer2017outrageously,fedus2022switch,jiang2024mixtral}. 
Yet many of their highest-value adaptations - on medical records,
enterprise documents, personal interactions, and other proprietary
corpora - require formal protection against memorization, for which
differential privacy (DP)~\cite{dwork2014algorithmic}, enforced via
per-record gradient clipping and Gaussian
noise~\cite{abadi2016deep}, is the prevailing standard. Combining the
two is not a straightforward application of existing tools: DP
optimizers were designed for dense models, where every parameter
receives signal from every record, so one clipping bound, one noise
scale, and one normalization denominator suffice. Sparse routing
breaks this premise - shared components still see every record, while
each expert sees only the records routed to it - and the only prior
work on private MoE training~\cite{tholoniat2024differentially}
sidesteps the asymmetry by treating the model as a single dense
block.
We show this mismatch is a structural incompatibility, not an
implementation detail (Fig.~\ref{fig:teaser}, left): monolithic DP
training suppresses expert gradients dominated by denser blocks
(\emph{clipping interference}, panel~1), dilutes sparse expert
updates through full-batch normalization (\emph{update-scale
dilution}, panel~2), and concentrates fixed privacy noise on
low-load experts (\emph{load-dependent SNR degradation}, panel~3) -
together undermining the expert specialization that makes MoEs useful
(Sec.~\ref{sec:diagnosis}).

To address these limitations, we propose \Raptor\, (\textbf{Role-Aware Private Training}), which aligns the DP
mechanism with the shared/expert role structure rather than ignoring
it (Fig.~\ref{fig:teaser}, right). The \emph{shared stream} updates
dense components on all records via standard per-record clipping and
noise. The \emph{expert stream} updates each expert only on its
owner records, with expert-specific clipping $C_e$ and noise
$\sigma_e$ (resolving clipping interference, panel~1), a public
expected-owner denominator $D_e=B/E$ that never touches realized
counts (resolving dilution without leaking routing decisions,
panel~2), a count-independent schedule that updates every expert at
every step, and privacy-free entropy-based layer selection
(resolving load-dependent SNR degradation, panel~3). Fig.~\ref{fig:architecture} gives an overview.

\begin{figure*}[t]
  \centering
  \includegraphics[width=1\textwidth]{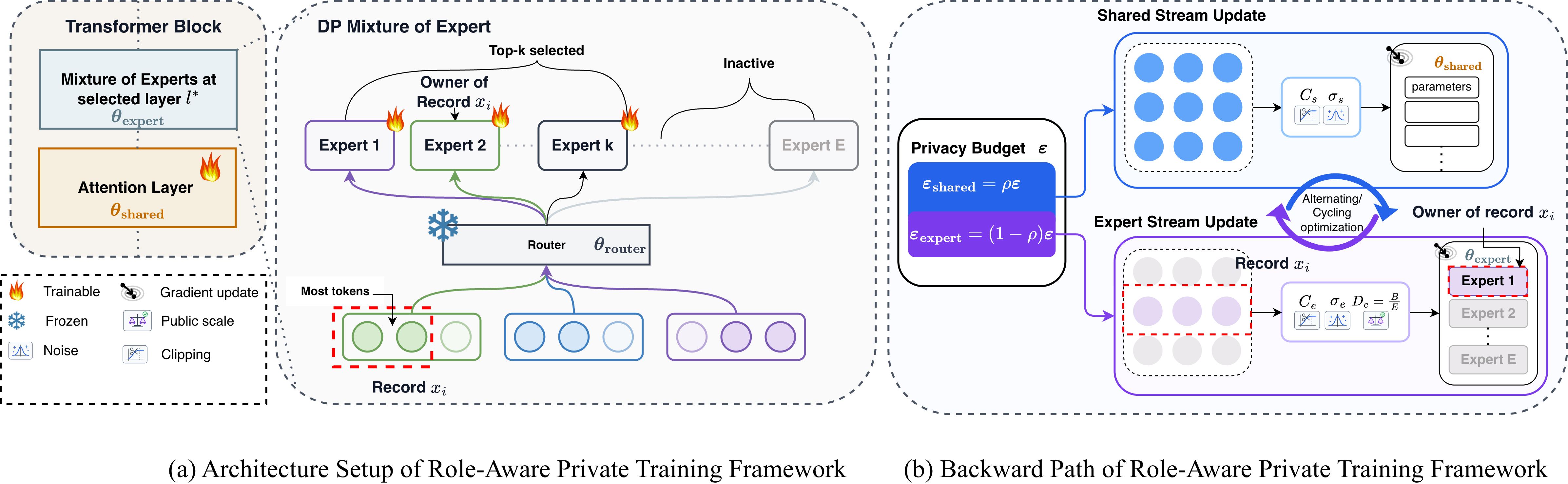}
  \caption{%
    \textbf{Overview of \Raptor\, at the selected sparse layer $l^\star$.}
    \textcolor{custom_purple}{\textbf{(a)}} The frozen router performs the base MoE model's
    standard token-level top-$k$ routing. Our framework aggregates gate
    weights across all tokens of record $x_i$ and selects one
    deterministic \emph{owner expert} $a_{l^\star}(x_i)$. The forward
    pass is unchanged: all activated experts contribute, but only the
    owner receives the expert-parameter gradient; activated non-owner
    experts use stop-gradient. This assignment partitions
    $\mathcal{D}_{\mathrm{priv}}$ into disjoint owner groups
    $\mathcal{D}_{l^\star,e}
    =\{(x_i,y_i):a_{l^\star}(x_i)=e\}$.
    \textcolor{custom_purple}{\textbf{(b)}} Each cycle draws independent Poisson subsamples for
    the shared and expert streams. Shared updates use clipping $C_s$,
    noise multiplier $\sigma_s$, and denominator $B$. Expert records
    are partitioned by owner and updated using $C_e$, $\sigma_e$, and
    the public denominator $D_e=B/E$, with noise-only updates for empty
    subsets. Expert mechanisms compose in parallel within $l^\star$,
    while shared and expert releases compose sequentially, with
    $\varepsilon_{\mathrm{sh}}=\rho\varepsilon$ and
    $\varepsilon_{\mathrm{exp}}=(1-\rho)\varepsilon$.%
  }
  \label{fig:architecture}
\end{figure*}

\paragraph{Contributions.}
\begin{enumerate}
  \item \textbf{Diagnostic analysis.} We identify and formally
    characterize three structural failure modes of monolithic DP-MoE
    training - clipping interference, update-scale dilution, and
    load-dependent SNR degradation (Sec.~\ref{sec:diagnosis}).
  \item \textbf{Role-aware optimization.} We propose Role-Aware
    Private Training, resolving all three failures via alternating
    shared/expert streams, expert-specific clipping and noise, and a
    count-independent update schedule (Sec.~\ref{sec:method}).
  \item \textbf{Privacy guarantee.} We prove
    $(\varepsilon,\delta)$-DP via parallel composition over
    routing-induced owner groups: updating all $E$ experts within a
    layer costs no more than updating one
    (Sec.~\ref{sec:privacy}).
  \item \textbf{Utility analysis and layer selection.} We bound the
    public-denominator estimator's bias by routing imbalance,
    yielding the privacy-free rule
    $l^\star = \arg\max_l H(q^{(l)})$ on a small public corpus, and
    show this bias is stable under bounded representation drift via a
    Voronoi abstraction (Sec.~\ref{sec:theory}).
  \item \textbf{Empirical validation.} Gains up to \textcolor{custom_darkgreen}{$+3.24$} points
    over monolithic and matched-scope DP baselines on Switch
    Transformer (four GLUE tasks, $\varepsilon\in\{1,4,8\}$), with
    consistent improvements on OLMoE and on the vision-language
    DeepSeek-VL2-Tiny model (Sec.~\ref{sec:experiments}).
\end{enumerate}

\section{Related Work}
Private fine-tuning of pretrained LMs - full-parameter
\cite{li2021large} or parameter-efficient via LoRA
\cite{yu2021differentially,hu2022lora} - applies one clipping bound,
noise scale, and denominator uniformly across all parameters, a
reasonable simplification for dense models where every parameter sees
every record. This breaks under MoE routing, where shared components
see every record but each expert sees only its routed subset; to our
knowledge, no DP fine-tuning method distinguishes these exposure
patterns. Separately, non-private MoE training stabilizes routing via
batch-level load-balancing losses, which do not translate to
record-level DP: their gradients break per-record sensitivity, and
they depend on realized expert counts, which are private and cannot
enter normalization or control flow without leaking routing decisions.
The only prior work on private MoE training,
\citet{tholoniat2024differentially}, sidesteps this by treating MoE as
a single dense block - feasible, but leaving shared, routing, and
expert parameters structurally indistinguishable, which suppresses
expert gradients, dilutes sparse updates, and concentrates noise on
low-load experts. We address this by freezing the router, replacing
realized counts with a public expected-owner denominator, and
analyzing expert updates via parallel composition over disjoint owner
groups - closing the gap left by dense DP methods
\cite{li2021large,yu2021differentially,hu2022lora}, which ignore
routing entirely, and by \citeauthor{tholoniat2024differentially},
which retains a monolithic optimizer despite it. Additional related
work is in Appendix~\ref{app:more_related}.
\section{Preliminaries}
\label{sec:preliminaries}
\subsection{Setup and Notation}
Let $\mathcal{D}_{\mathrm{priv}}=\{(x_i,y_i)\}_{i=1}^n$ be the private
dataset. A sparse MoE model has $E$ experts per sparse layer
($[E]=\{1,\ldots,E\}$) over layers $\mathcal{L}$, with parameters
$\theta=(\theta_{\mathrm{sh}},\theta_{\mathrm{rt}},\{\theta_{l,e}\})$.
Each expert carries a role $r(e)\in\{\textsc{routed},\textsc{shared}\}$:
shared experts (e.g., DeepSeek-MoE's always-active experts
\cite{dai2024deepseekmoe}) see every token and fold into
$\theta_{\mathrm{sh}}$; routed experts see only assigned tokens.
Unless stated otherwise, $r(e)=\textsc{routed}$ for all $e$.
At step $t$, $\mathcal{B}_t^{\mathrm{exp}}\subseteq\mathcal{D}_{\mathrm{priv}}$
is Poisson-subsampled, so its realized size is random and private; we
write $B=\mathbb{E}[|\mathcal{B}_t^{\mathrm{exp}}|]$ for the fixed,
public target size and normalize by $B$ - never the realized count -
the principle underlying $D_e=B/E$ (Sec.~\ref{sec:method}). For
gradient $g$ and radius $C{>}0$, $\mathrm{clip}(g,C)=g\cdot\min\{1,C/\|g\|_2\}$;
subscripts $s,e$ distinguish shared-/expert-stream quantities, e.g.\
$(C_s,\sigma_s,\eta_s)$ vs.\ $(C_e,\sigma_e,\eta_e)$.
\subsection{Differential Privacy}
\begin{definition}\cite{dwork2014algorithmic}
$\mathcal{A}$ is $(\varepsilon,\delta)$-DP if for all neighboring
$\mathcal{D},\mathcal{D}'$ and output sets $\mathcal{S}$,
$\Pr[\mathcal{A}(\mathcal{D})\in\mathcal{S}]\leq
e^\varepsilon\Pr[\mathcal{A}(\mathcal{D}')\in\mathcal{S}]+\delta$.
\end{definition}
DP-SGD~\cite{abadi2016deep} privatizes each step's gradient via
per-record clipping and Gaussian noise: for $\mathcal{B}_t$ of
expected size $B$,
\begin{equation}
\bar{g}_t=\frac{1}{B}\!\left(\sum_{i\in\mathcal{B}_t}
\mathrm{clip}(g_{i,t},C)+\mathcal{N}(0,\sigma^2C^2I)\right).
\end{equation}
We privatize every release this way, but optimize with Adam - $\bar{g}_t$
feeds Adam's moment estimates in place of the raw gradient, giving
\emph{DP-Adam}. This is post-processing of the already-released
sequence and adds no privacy cost beyond the $T$-fold composition
already charged (Sec.~\ref{sec:composition-alternating}).
\subsection{Sparse MoE Models and Load Balancing}
\label{sec:moe-background}
Sparse MoE Transformers~\cite{shazeer2017outrageously,fedus2022switch}
replace dense FFNs with $E$ experts and a router assigning each token
to $k$ of them; shared experts fold into the shared stream, routed
experts into the expert stream, and sparse activation causes load
imbalance. Our method assigns each record deterministically to one
routed expert per layer, its \emph{owner expert} - the expert
receiving the most of the record's tokens across the $k$ assignments,
reducing to the standard case at $k{=}1$ - the structure our
parallel-composition analysis relies on (Sec.~\ref{sec:privacy}).
Auxiliary load-balancing losses $\mathcal{L}_{\mathrm{aux}}$ depend on
batch-level routing statistics, giving per-record gradients that
reflect other records' decisions and break the per-record sensitivity
DP-SGD requires; we therefore freeze the router and omit
$\mathcal{L}_{\mathrm{aux}}$.
\section{Diagnosing Monolithic Private Optimization Failures}
\label{sec:diagnosis}

Monolithic private optimization applies one clipping bound, one noise scale,
and one normalization denominator to all MoE parameters - as if the model
were dense. This ignores that shared layers, routers, and task heads receive
signal from every record, while each expert receives signal only from its
routed records. Under DP, this asymmetry causes three structural failures,
each a property of the mechanism definition rather than any hyperparameter
choice. Appendix~\ref{app:diagnostics} formalizes this claim algebraically
and reports empirical role-mismatch and per-expert SNR diagnostics
(Tab.~\ref{tab:rq2_diagnostics_tasks}) that corroborate it directionally.
\subsection{Clipping Interference}
\label{sec:clipping-suppression}

The per-record gradient $g_{i,t}^{\mathrm{full}}=
[g^{\mathrm{sh}},g^{\mathrm{rt}},g^{\mathrm{head}},g^{\mathrm{exp}}]_{i,t}$
is clipped by a single factor
$\alpha_{i,t}^{\mathrm{full}}=\min(1,C/\|g_{i,t}^{\mathrm{full}}\|_2)$, dominated by the three dense
blocks, so it suppresses the expert component even when
$\|g_{i,t}^{\mathrm{exp}}\|_2$ alone is within $C$. Relaxing $C$ does not
resolve the coupling: noise scales with $C$, so any value permissive enough
for experts inflates dense-block noise. The natural resolution is a separate
clipping bound for the expert stream, as in our method.

\subsection{Update-Scale Dilution}
\label{sec:update-dilution}

Let $m_{t,l,e}=|\mathcal{B}_{t,l,e}|$ be the number of records routed to
expert $e$ in batch $\mathcal{B}_t^{\mathrm{exp}}$. The monolithic update
\begin{equation}
\widehat{g}_{t,l,e}^{\mathrm{mono}}
=\frac{1}{B}\!\left(\sum_{i\in\mathcal{B}_{t,l,e}}
\widetilde{g}_{i,t,l,e}^{\mathrm{exp}}+\xi_{t,l,e}\right)
\label{eq:mono-expert-update}
\end{equation}
normalizes by $B$ despite only $m_{t,l,e}$ records contributing signal,
rescaling expert gradients toward zero by $m_{t,l,e}/B$. Normalizing by the
realized count $m_{t,l,e}$ is not DP-free: it is a private quantity whose
use in normalization or control flow leaks routing information. We instead
use the public expected-owner denominator $D_e=B/E$, which is
data-independent and unbiased under uniform routing.

\subsection{Load-Dependent Signal Degradation}
\label{sec:load-imbalance}

Privacy noise per expert is fixed regardless of load; useful signal scales
with $m_{t,l,e}$. Uneven routing therefore concentrates noise on low-load
experts. We track this with the routing entropy:
\begin{align}
H_{t,l} &= -\frac{1}{\log E}
  \sum_{e=1}^{E}q_{t,l,e}\log q_{t,l,e},
  \label{eq:H}
\end{align}
where $q_{t,l,e}=m_{t,l,e}/B$,  $\tau=10^{-8}$ denotes a small constant added for numerical stability in ratio denominators.
Sec.~\ref{sec:theory} connects $H_{t,l}$ and the entropy deficit
$\log E - H_{t,l}$ to the imbalance bias of the public-denominator expert
estimator, 
motivating routing entropy as a candidate, privacy-free
layer-selection diagnostic, whose relationship to Adam-based training
utility Sec.~\ref{sec:theory} examines directly (Remark~\ref{rem:adam-invariance});

\section{Role-Aware Private Training}
\label{sec:method}

\subsection{Public Layer Selection}
\label{sec:public-layer-selection}
We protect a single sparse layer $l^\star$ per stage. This choice keeps
simple the parallel-composition argument we prove later
(Lemma~\ref{lem:owner_parallel}, Sec.~\ref{sec:privacy}):
updating all $E$ experts within a protected layer costs no more, in
privacy terms, than updating one, provided the \emph{owner map} - the
deterministic assignment of each private record to a single owner expert at
the protected layer, formalized in Sec.~\ref{sec:ownership}
(Eq.~\ref{eq:topk-owner}) - is fixed before training and does not change
during the stage. This condition holds trivially when only one layer is
trained. Multi-layer extension would require a staged scheme - train and
freeze one layer's owner map before the next, composing privacy costs
sequentially - left to future work; the main paper scopes to
$L=\{l^\star\}$.
Before private training, we select one sparse layer using external
public unlabeled data $\mathcal{C}_{\mathrm{pub}}$ (disjoint from
$\mathcal{D}_{\mathrm{priv}}$, format-matched to the task, e.g.\
unlabeled IMDb reviews for sentiment, SQuAD contexts for entailment).
For each candidate layer $l$, we apply the frozen router to
$\mathcal{C}_{\mathrm{pub}}$, compute the empirical owner distribution
$q^{(l)}$, and set
\begin{equation}
l^\star = \arg\max_l H\!\left(q^{(l)}\right).
\label{eq:layer-selection}
\end{equation}
Layer selection (Eq.~\ref{eq:layer-selection}) requires a disjoint,
format-matched public corpus; without one, we default to the last
sparse layer. Tab.~\ref{tab:switch_main_results} shows first-/last-sparse trailing
entropy-selection on less balanced tasks - but in one setting lacking
a public corpus it still improved over the monolithic DP baseline,
suggesting the core mechanism does not strictly require entropy-based
selection.
Since $\mathcal{C}_{\mathrm{pub}}$ is public and the router is frozen,
this consumes \emph{no privacy budget} and does not touch
$\mathcal{D}_{\mathrm{priv}}$. Sec.~\ref{sec:theory} shows entropy
provably shrinks the public-denominator estimator's bias $\beta_e$
(Theorem~\ref{thm:agg-bias}); Remark~\ref{rem:adam-invariance}
clarifies this does not extend to a guarantee on per-expert
signal-to-noise ratio, whose benefit for Adam-based training we
support empirically (Sec.~\ref{sec:experiments}) rather than by an
analogous bound.

\subsection{Record-Level Expert Ownership}
\label{sec:ownership}
At layer $l$, the frozen router gives each token $t\in\mathrm{Tok}(x_i)$
a top-$k$ expert set with gate weights $g_{l,e}(x_i,t)\ge 0$
($g_{l,e}=0$ off the top-$k$, $\sum_e g_{l,e}=1$). We lift this to a
deterministic record-level owner map by gate-weighted plurality,
\begin{equation}
a_l(x_i)=\operatorname*{arg\,max}_{e\in[E]}
\sum_{t\in\mathrm{Tok}(x_i)} g_{l,e}(x_i,t),
\label{eq:topk-owner}
\end{equation}
ties broken by lowest index; replacing $g_{l,e}$ by
$\mathbb{1}[e\in\text{top-}k]$ recovers the majority-vote rule at
$k=1$. This induces a deterministic partition of $\mathcal{D}_{\mathrm{priv}}$
into \emph{owner groups} $\mathcal{D}_{l,e}$, one per expert
$e \in [E]$, with $\bigcup_e \mathcal{D}_{l,e} = \mathcal{D}_{\mathrm{priv}}$
for every $k \ge 1$.
The forward pass uses standard top-$k$ routing, but the \emph{expert
stream} deposits gradient only into the owner expert's parameters via a
stop-gradient $\mathrm{sg}(\cdot)$ on all non-owner experts:
\begin{equation}
\begin{aligned}
a_i &\coloneqq a_l(x_i),\\
g^{\exp}_{i,l}
&=
\nabla_{\theta_{l,a_i}}
\ell_{\mathrm{exp}}\!\left(
x_i;\theta_{l,a_i},
\theta^{\mathrm{sg}}_{l,\neg a_i}
\right),
\end{aligned}
\label{eq:owner-mask}
\end{equation}
so each record contributes to exactly one owner expert; only the
attribution of private expert updates is restricted. Under
add/remove adjacency, one record changes only group
$\mathcal{D}_{l,a_l(x_0)}$, by a single clipped contribution of norm
$\le C_e$. Hence $\Delta_{l,e}=C_e/D_e=EC_e/B$, independent of
$|\mathcal{D}_{l,e}|$ and $k$, and conditions~(i)-(iii) of
Lemma~\ref{lem:owner_parallel} (Sec.~\ref{sec:parallel-composition})
hold verbatim for any $k$. The owner map and realized counts $|\mathcal{D}_{l,e}|$ are never
released; $a_{l^\star}$ is computed once from the frozen router before
training and fixed thereafter.

\begin{remark}[Why masking, not just joint clipping]
\label{rem:masking}
Joint-clipping a record's concatenated per-expert gradients to one radius $C$
would also give sensitivity $C$ without exclusivity, avoiding
Lemma~\ref{lem:owner_parallel}'s condition~(ii) cost. We reject this for two
reasons independent of composition cost: it reintroduces clipping interference
(Sec.~\ref{sec:clipping-suppression}) \emph{among} experts, and it makes
expert $e$'s clip factor depend on other experts' gradient norms, breaking the
per-group independence Theorem~\ref{thm:bias-var} and the entropy criterion
rely on.
\end{remark}

\subsection{Shared and Expert Stream Updates}
\label{sec:stream-updates}
\label{sec:shared-stream}
\label{sec:expert-stream}

\paragraph{Shared stream.}
Every record contributes to the shared loss, so the shared stream
computes a standard privatized estimate over its own subsample:
\begin{equation}
\bar{g}_t^{\mathrm{sh}}
=\frac{1}{B}\!\left(
\sum_{i\in\mathcal{B}_t^{\mathrm{sh}}}\mathrm{clip}(g_{i,t}^{\mathrm{sh}},C_s)
+\mathcal{N}(0,\sigma_s^2C_s^2I)\right).
\label{eq:shared-update}
\end{equation}

\paragraph{Residual objective.}
Let $z_i^{\mathrm{sh}}\in\mathbb{R}^{|\mathcal{Y}|}$ be the output
logits for record~$i$ under the current shared parameters
$\theta_{\mathrm{sh}}$ (expert parameters $\{\theta_{l^\star,e}\}$
held fixed, no gradient this pass). Define the residual weight
\begin{equation}
  w_i = 1 - \operatorname{softmax}(z_i^{\mathrm{sh}})_{y_i}
  \;\in[0,1],
  \label{eq:residual-weight}
\end{equation}
the shared stream's predicted error probability on the true class:
$w_i\approx0$ when confidently correct, $w_i\approx1$ when not. The
expert loss $\ell_{\mathrm{exp}}=w_i\,\ell_{\mathrm{cls}}$
concentrates expert capacity on records the shared stream has not yet
classified confidently. Since $w_i$ enters before clipping to $C_e$,
sensitivity is unaffected.

\paragraph{Privatized expert update.}
For owner subset
$\mathcal{B}_{t,l,e}=\{i\in\mathcal{B}_t^{\mathrm{exp}}:a_l(x_i)=e\}$
and public denominator $D_e=B/E$, the \emph{expert stream} computes
its own privatized estimate over the owner subset:
\begin{equation}
\begin{aligned}
\bar{g}_{t,l,e}^{\mathrm{exp}}
= \frac{1}{D_e}\Bigg(
&\sum_{i\in\mathcal{B}_{t,l,e}}
\mathrm{clip}\!\left(g_{i,t,l,e}^{\mathrm{exp}}, C_e\right) \\
&+ \mathcal{N}(0,\sigma_e^2 C_e^2 I)
\Bigg)
\end{aligned}
\label{eq:expert-update}
\end{equation}
Three properties follow: (i)~expert clipping decouples from the
shared stream, resolving the suppression of
Sec.~\ref{sec:clipping-suppression}; (ii)~$D_e$ depends only on
the public $B,E$, so realized counts are never revealed; (iii)~the
normalized sensitivity $C_e/D_e=EC_e/B$ feeds directly into the
accountant of Sec.~\ref{sec:privacy}.
\paragraph{Empty owner subsets.}
If $\mathcal{B}_{t,l,e}=\emptyset$, the update reduces to noise alone,
$\bar{g}_{t,l,e}^{\mathrm{exp}}=D_e^{-1}\mathcal{N}(0,\sigma_e^2C_e^2I)$:
every expert updates at every step with no count-dependent skipping
or logging, which would leak realized counts and require additional
accounting.
\subsection{Alternating Optimization and Layer Scope}
\label{sec:alternating}
Each outer cycle - one iteration of the \textbf{for} $t$ loop in
Algorithm~\ref{alg:role-aware} (lines~\ref{line:poisson-sh}-\ref{line:poisson-exp}
and the nested expert loop that follows) - draws two independent Poisson
subsamples, $\mathcal{B}_t^{\mathrm{sh}}$ and $\mathcal{B}_t^{\mathrm{exp}}$
- rather than reusing one sample for both streams, which costs no
additional data access since the expert stream already needs its own
forward pass to compute $w_i$ - then performs one shared-stream update
over $\mathcal{B}_t^{\mathrm{sh}}$ followed by one expert-stream update
over $\mathcal{B}_t^{\mathrm{exp}}$ for all $E$ experts. Alternating lets
shared representations and expert residual corrections co-adapt: each
shared step updates the $w_i$ seen by the next expert step, and each
expert step changes the residual for the next shared step. Privacy
accounting for the two streams remains independent of the downstream
optimizer, since the independent subsamples make their per-step
releases independent random mechanisms (Sec.~\ref{sec:privacy}).

\begin{algorithm*}[t]
\small
\caption{\Raptor}
\label{alg:role-aware}

\begin{algorithmic}[1]

\Require $\mathcal{D}_{\mathrm{priv}}$ ($n$ records), $\mathcal{C}_{\mathrm{pub}}$, frozen router, candidate layers, $B,E,p{=}B/n,C_s,C_e,\sigma_s,\sigma_e,\rho$

\State $l^\star \leftarrow \arg\max_l H(q^{(l)})$ on $\mathcal{C}_{\mathrm{pub}}$
\Comment{\textcolor{custom_purple}{no privacy cost}}

\State $a_{l^\star}(x_i) \leftarrow \arg\max_{e\in[E]} \sum_{t\in\mathrm{Tok}(x_i)} g_{l^\star,e}(x_i,t)$, $\forall x_i\in\mathcal{D}_{\mathrm{priv}}$
\Comment{\textcolor{custom_purple}{once; never released}}

\For{$t=1,\dots,T$}

\State\label{line:poisson-sh} $\mathcal{B}_t^{\mathrm{sh}} \leftarrow \{i: u_i^{\mathrm{sh}}{=}1\}$, $u_i^{\mathrm{sh}}\overset{\text{iid}}{\sim}\mathrm{Bern}(p)$
\Comment{\textcolor{custom_purple}{shared subsample}}

\State $\bar{g}_t^{\mathrm{sh}} \leftarrow \tfrac{1}{B}\big(\sum_{i\in\mathcal{B}_t^{\mathrm{sh}}}\mathrm{clip}(g_{i,t}^{\mathrm{sh}},C_s) + \mathcal{N}(0,\sigma_s^2C_s^2I)\big)$; DP-Adam on $\theta_{\mathrm{sh}}$

\State\label{line:poisson-exp} $\mathcal{B}_t^{\mathrm{exp}} \leftarrow \{i: u_i^{\mathrm{exp}}{=}1\}$, $u_i^{\mathrm{exp}}\overset{\text{iid}}{\sim}\mathrm{Bern}(p)$, $\{u_i^{\mathrm{exp}}\}\perp\{u_i^{\mathrm{sh}}\}$
\Comment{\textcolor{custom_purple}{independent subsample}}

\For{$e\in[E]$}

\State $\mathcal{B}_{t,e} \leftarrow \{i\in\mathcal{B}_t^{\mathrm{exp}} : a_{l^\star}(x_i){=}e\}$

\State $\bar{g}_{t,e}^{\mathrm{exp}} \leftarrow \tfrac{1}{D_e}\big(\sum_{i\in\mathcal{B}_{t,e}}\mathrm{clip}(g_{i,t,e}^{\mathrm{exp}},C_e) + \mathcal{N}(0,\sigma_e^2C_e^2I)\big)$; DP-Adam on $\theta_{l^\star,e}$
\Comment{\textcolor{custom_purple}{$D_e{=}B/E$; noise-only if empty}}

\EndFor
\EndFor

\Ensure $\theta_T$

\end{algorithmic}
\end{algorithm*}

\section{Privacy Analysis}
\label{sec:privacy}

We analyze our framework under $(\varepsilon,\delta)$-DP
with add/remove adjacency; only the final model is released; owner
assignments, expert counts, residual weights, and intermediate
gradients are never exposed.
\subsection{Parallel Composition Across Experts}
\label{sec:parallel-composition}
\begin{lemma}[Parallel composition across owner groups]
\label{lem:owner_parallel}
Fix layer $l$. Suppose \textup{(i)}~$a_l$ is deterministic and
stage-fixed; \textup{(ii)}~every record belongs to exactly one owner
group $\mathcal{D}_{l,e}$ (the partition cell of records assigned to
expert $e$ by $a_l$, Sec.~\ref{sec:ownership}); and
\textup{(iii)}~$M_{l,e}$ depends only on $\mathcal{D}_{l,e}$ and
public quantities. If each $M_{l,e}$ is $(\varepsilon_{l,e},\delta_{l,e})$-DP
on its owner group, the joint mechanism
$M_l=(M_{l,1},\dots,M_{l,E})$ is
$(\max_e\varepsilon_{l,e},\,\max_e\delta_{l,e})$-DP.
\end{lemma}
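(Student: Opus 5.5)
The plan is the standard parallel-composition argument (McSherry's), adapted to add/remove adjacency and to approximate DP. Fix neighboring datasets $\mathcal{D},\mathcal{D}'$ that differ by one record $x_0$; by symmetry, take $\mathcal{D}'=\mathcal{D}\cup\{x_0\}$. The argument has three steps: localize the change, factor the output distribution, then bound an arbitrary output set.

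\textbf{Step 1 (localization).} By condition~(i), the owner map $a_l$ is deterministic and fixed for the stage. So applying it to $\mathcal{D}'$ assigns every record of $\mathcal{D}$ to the same cell as before, and it assigns $x_0$ to the single cell $e_0:=a_l(x_0)$. Condition~(ii) says each record lies in exactly one cell. Together these give $\mathcal{D}'_{l,e}=\mathcal{D}_{l,e}$ for every $e\neq e_0$, while $\mathcal{D}'_{l,e_0}=\mathcal{D}_{l,e_0}\cup\{x_0\}$ is an add/remove neighbor of $\mathcal{D}_{l,e_0}$.

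\textbf{Step 2 (factorization).} By condition~(iii), each $M_{l,e}$ depends only on $\mathcal{D}_{l,e}$ and public quantities. I will also use the fact that the $M_{l,e}$ draw independent internal randomness, which I state as an assumption. The independence is automatic for the fixed datasets: the Poisson subsample $\mathcal{B}^{\mathrm{exp}}_t$ restricted to a cell depends only on that cell's coin flips, and the Gaussian noises are drawn independently per expert. Under this assumption, the law of $M_l(\mathcal{D})$ is the product measure $\bigotimes_e P_e$ with $P_e=\mathcal{L}(M_{l,e}(\mathcal{D}_{l,e}))$. The law of $M_l(\mathcal{D}')$ is the same product with only the $e_0$ factor replaced by $P'_{e_0}$.

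\textbf{Step 3 (bounding an output set).} Take any measurable $\mathcal{S}$ in the product space. Write $y_{-e_0}$ for the output coordinates other than $e_0$, and define the section $\mathcal{S}_{y_{-e_0}}=\{y_{e_0}:(y_{e_0},y_{-e_0})\in\mathcal{S}\}$. Fubini gives
\[
\Pr[M_l(\mathcal{D}')\in\mathcal{S}]=\int P'_{e_0}(\mathcal{S}_{y_{-e_0}})\,d\textstyle\bigotimes_{e\ne e_0}P_e(y_{-e_0}).
\]
Apply the $(\varepsilon_{l,e_0},\delta_{l,e_0})$-DP guarantee of $M_{l,e_0}$ pointwise inside the integral. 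Integrating against a probability measure keeps the additive $\delta$ term at $\delta_{l,e_0}$, so
\[
\Pr[M_l(\mathcal{D}')\in\mathcal{S}]\le e^{\varepsilon_{l,e_0}}\Pr[M_l(\mathcal{D})\in\mathcal{S}]+\delta_{l,e_0}.
\]
The reverse direction (removing $x_0$) is identical. Since $e_0$ depends on the unknown record $x_0$, I take the worst case over $e_0$, which gives $(\max_e\varepsilon_{l,e},\max_e\delta_{l,e})$.

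\textbf{Main obstacle.} The delicate step is Step~1, together with the implicit independence in Step~2. The conclusion requires that adding $x_0$ not move any other record between cells. This is exactly why (i) demands that $a_l$ be computed record-wise from the frozen router, and not through batch statistics or a router that changes during training. I will therefore state the independence of the $M_{l,e}$'s internal randomness explicitly as part of reading condition~(iii). I will also remark that the lemma is not applied to a single release: over $T$ steps, the per-expert mechanisms are the adaptive compositions, and (iii) still holds because conditioning on previously released parameters counts as conditioning on public quantities.
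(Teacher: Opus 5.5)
Your proof is correct and follows the paper's argument: use (i)--(ii) to localize the change to the single cell $e_0=a_l(x_0)$, note that every other cell, and hence by (iii) every other mechanism, is unchanged, and take the worst case over $e_0$. Your explicit independence assumption and the Fubini step supply what the paper's sentence ``the joint $M_l$ differs only through $M_{l,e_0}$'' leaves implicit, since unchanged marginals alone would not suffice (two cells sharing one noise draw would break the conclusion).

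One caution on your closing remark. The paper invokes the lemma per step, conditional on $\theta_t$, and then composes steps sequentially via the PRV accountant; that is the reading your proof supports. If each $M_{l,e}$ were instead a whole $T$-step trajectory, the experts would be coupled through $\theta_t$, and the product factorization of your Step~2 would fail.
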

All three conditions hold by construction: $a_l$ is stage-fixed
(Sec.~\ref{sec:ownership}); the public denominator $D_e=B/E$ and
count-independent schedule make $M_{l,e}$ depend only on
$\mathcal{D}_{l,e}$ and public quantities; and per-group DP follows
from subsampled-Gaussian accounting with sensitivity
$\Delta_{l,e}$ (Eq.~\eqref{eq:sensitivity-expert}). Full proofs are in
Appendix~\ref{app:owner_parallel_proof}.

\subsection{Composition Across Streams and Layers}
\label{sec:composition-alternating}
Interleaved shared/expert updates compose adaptively under the PRV
framework~\cite{gopi2021numerical}: since $\mathcal{B}_t^{\mathrm{sh}}$
and $\mathcal{B}_t^{\mathrm{exp}}$ are independent Poisson subsamples
drawn afresh each cycle, per-step mechanisms are conditionally
independent given $\theta_t$, so standard PRV convolution applies
without a joint privacy-loss variable. Writing
$\omega^{\circledast T}$ for $T$-fold self-convolution: the shared
stream has per-step PRV $\omega_{\mathrm{sh}}$ set by $(C_s,\sigma_s,\gamma)$;
each layer's expert mechanisms, by
Lemma~\ref{lem:owner_parallel}, compose in \emph{parallel} (disjoint
owner groups, no realized counts released), giving a single
layer-level PRV $\omega_l = \omega_{l,e}$ (homogeneous $C_e,\sigma_e,D_e{=}B/E$
across experts) rather than an $E$-fold convolution. Sequential
composition across shared updates and trained layers then gives
\begin{equation}
\omega_{\mathrm{tot}} = \omega_{\mathrm{sh}}^{\circledast T_{\mathrm{sh}}}
\circledast \bigcircledast_{l\in\mathcal{L}_{\mathrm{train}}}
\omega_l^{\circledast T_{\mathrm{exp},l}},
\label{eq:prv-composition}
\end{equation}
with $\mathcal{L}_{\mathrm{train}}\subseteq\mathcal{L}$ the
privately-updated layers and
$\varepsilon(\delta)=\inf\{\varepsilon\ge0:\Pr[\omega_{\mathrm{tot}}>\varepsilon]\le\delta\}$.
\paragraph{Budget split.} A fraction $\rho\in(0,1)$ of $\varepsilon$
goes to the shared stream, $1{-}\rho$ to the experts; we solve for
$\sigma_s,\sigma_e$ jointly via binary search over the PRV accountant,
using $\rho=0.9$ at $E=8$ (Appendix~\ref{app:budget-allocation});
sensitivity to $\rho$ and larger $E$ is in Appendix~\ref{sec:hparam}.
\section{Utility Analysis of the Expert Estimator}
\label{sec:theory}
\subsection{Bias-Variance Decomposition and Layer Selection}
\label{sec:bias-variance}

\paragraph{Setup.}
Fix one selected layer (suppress layer index). The owner map assigns
each record to exactly one owner expert via gate-weighted plurality over its
tokens (Eq.~\eqref{eq:topk-owner}), so it partitions
$\mathcal{D}_{\mathrm{priv}}$ into disjoint groups $\mathcal{D}_e$ for
any $k \ge 1$ - the forward pass may still route a record's tokens to
$k$ experts, but only the owner expert receives its gradient. Groups have sizes $n_e$,
empirical frequencies $q_e=n_e/n$, and target means
$\mu_e=n_e^{-1}\sum_{i\in\mathcal{D}_e}h_i^{\mathrm{exp}}$, where
$h_i^{\mathrm{exp}}=\mathrm{clip}(g_i^{\mathrm{exp}},C_e)$. Write
$u=(1/E,\dots,1/E)$ for the uniform distribution, $\Sigma_e$ for the
within-group covariance; under Poisson subsampling at rate $p=B/n$,
$\mathbb{E}|\mathcal{B}_{t,e}|=Bq_e$.

\begin{theorem}[Bias-variance decomposition]
\label{thm:bias-var}
Under Poisson subsampling with rate $p=B/n$, letting $d$ denote
the parameter dimension,
$\mathbb{E}\|\bar{g}_{t,e}^{\mathrm{exp}}-\mu_e\|_2^2
=\beta_e^2+V_e^{\mathrm{samp}}+V_e^{\mathrm{DP}}$,
where
\begin{align}
\beta_e^2 &= (Eq_e-1)^2\|\mu_e\|_2^2, \label{eq:bias-term}\\
V_e^{\mathrm{samp}} &= \frac{E^2p\,n_e(1-p)}{B^2}
  \bigl(\|\mu_e\|_2^2+\operatorname{tr}\Sigma_e\bigr),
  \label{eq:samp-var}\\
V_e^{\mathrm{DP}} &= \frac{E^2\sigma_e^2C_e^2\,d}{B^2}.
  \label{eq:dp-var}
\end{align}
\end{theorem}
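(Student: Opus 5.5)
The plan is to condition on the current iterate $\theta_t$, so that each clipped per-record expert gradient $h_i^{\mathrm{exp}}=\mathrm{clip}(g_{i,t}^{\mathrm{exp}},C_e)$ is a fixed vector. The only randomness is then the Poisson inclusion indicators $u_i\overset{\text{iid}}{\sim}\mathrm{Bern}(p)$ and the Gaussian noise $Z\sim\mathcal{N}(0,\sigma_e^2C_e^2I)$, drawn independently of each other. With $D_e=B/E$, Eq.~\eqref{eq:expert-update} can be rewritten as
\begin{equation*}
\bar{g}_{t,e}^{\mathrm{exp}}=\frac{E}{B}\Bigl(\sum_{i\in\mathcal{D}_e}u_i h_i^{\mathrm{exp}}+Z\Bigr).
\end{equation*}
This form needs only that the owner map is a fixed partition (Sec.~\ref{sec:ownership}), so only records in $\mathcal{D}_e$ contribute. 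It covers the empty-subset case automatically, since the noise-only update is just the case where all $u_i=0$. I then split the error into three pieces:
\begin{equation*}
\bar{g}_{t,e}^{\mathrm{exp}}-\mu_e=\bigl(\mathbb{E}\bar{g}_{t,e}^{\mathrm{exp}}-\mu_e\bigr)+\frac{E}{B}\sum_{i\in\mathcal{D}_e}(u_i-p)h_i^{\mathrm{exp}}+\frac{E}{B}Z.
\end{equation*}

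For the bias, I compute $\mathbb{E}\bar{g}_{t,e}^{\mathrm{exp}}=\frac{E}{B}\,p\,n_e\mu_e$. Substituting $p=B/n$ turns this into $E\,(n_e/n)\,\mu_e=Eq_e\mu_e$, so the deterministic term is $(Eq_e-1)\mu_e$, whose squared norm is Eq.~\eqref{eq:bias-term}. The second and third terms each have mean zero: the second because $\mathbb{E}[u_i-p]=0$, the third because $\mathbb{E}Z=0$. They are also independent of each other. Hence, when I expand the squared norm, all cross terms vanish and the mean squared error is the sum of the three individual second moments.

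For the sampling term, the $u_i$ are independent with variance $p(1-p)$, so
\begin{equation*}
\mathbb{E}\Bigl\|\sum_i(u_i-p)h_i^{\mathrm{exp}}\Bigr\|_2^2=p(1-p)\sum_{i\in\mathcal{D}_e}\|h_i^{\mathrm{exp}}\|_2^2.
\end{equation*}
I then use the second-moment identity
\begin{equation*}
\sum_{i\in\mathcal{D}_e}\|h_i^{\mathrm{exp}}\|_2^2=n_e\bigl(\|\mu_e\|_2^2+\operatorname{tr}\Sigma_e\bigr),
\end{equation*}
where $\Sigma_e=n_e^{-1}\sum_{i\in\mathcal{D}_e}(h_i^{\mathrm{exp}}-\mu_e)(h_i^{\mathrm{exp}}-\mu_e)^\top$. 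Multiplying by the prefactor $(E/B)^2$ gives Eq.~\eqref{eq:samp-var}. For the noise term, $\mathbb{E}\|Z\|_2^2=\sigma_e^2C_e^2d$, and multiplying by $(E/B)^2$ gives Eq.~\eqref{eq:dp-var}.

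The computations themselves are routine. The main obstacle is making the statement's implicit conventions precise. First, $\mathbb{E}$ must be read as conditional on $\theta_t$, so the $h_i^{\mathrm{exp}}$ and $\mu_e$ are fixed. Second, $\Sigma_e$ must be the within-group covariance with $1/n_e$ normalization; an unbiased $1/(n_e-1)$ normalization would break the identity. I would state both conventions explicitly at the start of the proof. I would also note the one structural fact the argument relies on: each $u_i$ is tied to a single owner group, so the sum is over the fixed set $\mathcal{D}_e$ rather than over a random, count-dependent set.
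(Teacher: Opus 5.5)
Your proposal is correct and follows essentially the same route as the paper. Like the paper, you write the estimator as $\frac{E}{B}\bigl(\sum_{i\in\mathcal{D}_e} b_i h_i^{\mathrm{exp}}+\xi\bigr)$ and get the mean $Eq_e\mu_e$ from $pn_e=Bq_e$. The variance then comes from the independent Bernoulli term $p(1-p)n_e(\mu_e\mu_e^\top+\Sigma_e)$ plus the independent Gaussian term, traced. Your two stated conventions, conditioning on $\theta_t$ and the $1/n_e$-normalized $\Sigma_e$, are the ones the paper uses implicitly; the latter matches its definition in Appendix~\ref{app:stab_error}.
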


The squared bias $\beta_e^2=(Eq_e-1)^2\|\mu_e\|_2^2$ vanishes exactly
when $q_e=1/E$ - the public denominator $D_e=B/E$ is unbiased under
uniform routing, with bias growing quadratically under imbalance --
and the $E^2$ factor in $V_e^{\mathrm{DP}}$ reflects the normalized
sensitivity $EC_e/B$. Proofs are in Appendix~\ref{app:bias_var_proof}.


\begin{corollary}[Balanced routing]
\label{cor:balanced-error}
If $q_e=1/E$, then $\beta_e=0$ and
\begin{equation}
\sqrt{\mathbb{E}\bigl\|\bar{g}_{t,e}^{\mathrm{exp}}-\mu_e\bigr\|_2^2}
\;\le\;
C_e\!\sqrt{\frac{E}{B}}
+\frac{E\sigma_eC_e\sqrt{d}}{B},
\label{eq:balanced-rmse}
\end{equation}
separating non-private sampling error from the DP cost; scaling and comparison to
the standard DP mean-estimation rate are in
Appendix~\ref{app:dp-scaling}.
\end{corollary}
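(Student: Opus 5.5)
The plan is to specialize Theorem~\ref{thm:bias-var} to $q_e=1/E$, bound each of the two surviving variance terms separately, and then combine them with the elementary inequality $\sqrt{a^2+b^2}\le a+b$ for $a,b\ge 0$.

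\textbf{Bias term.} Setting $q_e=1/E$ in Eq.~\eqref{eq:bias-term} gives $Eq_e-1=0$, so $\beta_e=0$. This is the first claim.

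\textbf{Sampling variance.} Balanced routing means $n_e=q_e n=n/E$. Substituting this and $p=B/n$ into Eq.~\eqref{eq:samp-var}, the prefactor simplifies to
\[
\frac{E^2 p\,n_e(1-p)}{B^2}
=\frac{E^2\,(B/n)\,(n/E)\,(1-p)}{B^2}
=\frac{E(1-p)}{B}.
\]
Next I will use the second-moment identity $\|\mu_e\|_2^2+\operatorname{tr}\Sigma_e=n_e^{-1}\sum_{i\in\mathcal{D}_e}\|h_i^{\mathrm{exp}}\|_2^2$. Since every $h_i^{\mathrm{exp}}$ is clipped to norm at most $C_e$, this quantity is at most $C_e^2$. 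Dropping the factor $1-p\le 1$ then gives $V_e^{\mathrm{samp}}\le EC_e^2/B$, so $\sqrt{V_e^{\mathrm{samp}}}\le C_e\sqrt{E/B}$.

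\textbf{DP variance.} Eq.~\eqref{eq:dp-var} is already a perfect square: $\sqrt{V_e^{\mathrm{DP}}}=E\sigma_eC_e\sqrt d/B$. Combining the three terms,
\[
\sqrt{\mathbb{E}\|\bar g^{\mathrm{exp}}_{t,e}-\mu_e\|_2^2}
=\sqrt{V_e^{\mathrm{samp}}+V_e^{\mathrm{DP}}}
\le\sqrt{V_e^{\mathrm{samp}}}+\sqrt{V_e^{\mathrm{DP}}},
\]
which yields Eq.~\eqref{eq:balanced-rmse}. The first summand is the non-private sampling error and the second is the DP cost, which is the separation claimed in the statement.

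\textbf{Main obstacle.} The only delicate point is the second-moment identity, which depends on how $\Sigma_e$ is normalized. The identity holds exactly when $\Sigma_e$ is the within-group covariance with $1/n_e$ normalization, which matches how $\mu_e$ is defined as an empirical mean in the setup. I will check this against the proof of Theorem~\ref{thm:bias-var} in Appendix~\ref{app:bias_var_proof}. The variance term there arises from $\sum_{i}p(1-p)\|h_i\|_2^2$, which directly equals $p(1-p)n_e(\|\mu_e\|_2^2+\operatorname{tr}\Sigma_e)$ under the $1/n_e$ convention. If the paper instead uses $1/(n_e-1)$, I would bypass $\Sigma_e$ and bound $\sum_i\|h_i\|_2^2\le n_eC_e^2$ directly from that raw expression. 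This gives the same bound $V_e^{\mathrm{samp}}\le EC_e^2/B$, so the corollary holds under either convention.
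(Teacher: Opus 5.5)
Your proposal is correct and is the argument the paper intends. The paper writes out no separate proof and treats the corollary as an immediate specialization of Theorem~\ref{thm:bias-var}: set $q_e=1/E$ so that $\beta_e=0$, reduce the sampling prefactor to $E(1-p)/B$, bound $\|\mu_e\|_2^2+\operatorname{tr}\Sigma_e\le C_e^2$ by clipping, and split the root via $\sqrt{a+b}\le\sqrt a+\sqrt b$. Your normalization concern is settled by Appendix~\ref{app:stab_error}, which defines $\Sigma_e$ with the $1/n_e$ convention, so the second-moment identity holds exactly and the fallback is not needed.
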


\begin{remark}
\label{rem:adam-invariance}
The bias $\beta_e$ is optimizer-agnostic - vanishing under uniform
routing (Theorem~\ref{thm:bias-var}), bounded under imbalance
(Theorem~\ref{thm:agg-bias}) - but does not by itself determine
DP-Adam's training utility. It is a rescaling of $\mu_e$ by the
constant $Eq_e$, fixed throughout training, to which DP-Adam's
normalization is largely invariant away from the
$\epsilon_{\mathrm{Adam}}$ floor; and $\beta_e^2$ is symmetric around
$q_e=1/E$, so an expert at $q_e=2/E$ has the same $\beta_e^2$ as one
at $q_e=0$, though these are opposite outcomes for training. The
closer proxy for Adam-based utility is the per-expert
signal-to-noise ratio $\mathrm{SNR}_e\propto
q_e^2\|\mu_e\|_2^2/(\sigma_e^2C_e^2d)$ -- monotonic in $q_e$ and
consistent with Tab.~\ref{tab:rq2_diagnostics_tasks} - but no bound
$1/\mathrm{SNR}_e\le f(H(q))$ holds for any finite $f$, since a
starved expert contributes only $O(1/E)$ to $\chi^2(q\|u)\in[0,E{-}1]$
and can coexist with near-maximal entropy while $\mathrm{SNR}_e\to0$
(Appendix~\ref{app:remark2-detail}). We therefore present
$l^\star=\arg\max_l H(q^{(l)})$ as a privacy-free, empirically
validated heuristic - supported by its falsifiability and gains in
Tab.~\ref{tab:switch_main_results} and
Appendix~\ref{app:entropy-analysis} - not a rule
Theorem~\ref{thm:agg-bias} certifies optimal for DP-Adam dynamics.
\end{remark}
\paragraph{Routing imbalance and chi-squared divergence.}
Aggregating $\beta_e^2$ across experts via
$\sum_e(Eq_e-1)^2=E\chi^2(q\|u)$ (Appendix~\ref{app:agg_bias_proof})
gives:

\begin{theorem}[Aggregate imbalance bias]
\label{thm:agg-bias}
Let $\bar{G}^2=\max_e\|\mu_e\|_2^2$. Then
\begin{align}
\frac{1}{E}\sum_{e=1}^{E}\beta_e^2
  &\le\bar{G}^2\chi^2(q\|u),
  \label{eq:unweighted-bias}\\
\sum_{e=1}^{E}q_e\beta_e^2
  &\le\bar{G}^2 E\chi^2(q\|u).
  \label{eq:weighted-bias}
\end{align}
\end{theorem}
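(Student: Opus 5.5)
The plan is to start from the closed form of the squared bias in Theorem~\ref{thm:bias-var}, $\beta_e^2=(Eq_e-1)^2\|\mu_e\|_2^2$, and bound each $\|\mu_e\|_2^2$ uniformly by $\bar{G}^2=\max_e\|\mu_e\|_2^2$. After that, everything reduces to identities and bounds for the purely combinatorial quantities $\sum_e(Eq_e-1)^2$ and $\sum_e q_e(Eq_e-1)^2$, which depend only on the routing distribution $q$. This cleanly separates the gradient-dependent factor $\bar{G}^2$ from the imbalance factor, as the statement requires.

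\textbf{Unweighted bound.} The first step is the identity quoted before the statement. Since $u_e=1/E$,
\begin{equation*}
\chi^2(q\|u)=\sum_e\frac{(q_e-1/E)^2}{1/E}=E\sum_e\Bigl(q_e-\frac{1}{E}\Bigr)^2=\frac{1}{E}\sum_e(Eq_e-1)^2,
\end{equation*}
so $\sum_e(Eq_e-1)^2=E\chi^2(q\|u)$. Then
\begin{equation*}
\frac{1}{E}\sum_e\beta_e^2\le\frac{\bar{G}^2}{E}\sum_e(Eq_e-1)^2=\bar{G}^2\chi^2(q\|u),
\end{equation*}
which is \eqref{eq:unweighted-bias}.

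\textbf{Weighted bound.} For \eqref{eq:weighted-bias}, use $q_e\le 1$ termwise: $\sum_e q_e\beta_e^2\le\bar{G}^2\sum_e(Eq_e-1)^2=\bar{G}^2E\chi^2(q\|u)$. This is exactly the stated constant, so no sharper weighting argument is needed.

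\textbf{Main obstacle.} There is essentially no obstacle. The only point needing care is to check the normalization of $\chi^2$, i.e.\ that it is $\chi^2(q\|u)$ rather than $\chi^2(u\|q)$, so the factor of $E$ comes out right. One should also note that the $q_e$ are the empirical owner frequencies of the full private partition; these are exactly the quantities appearing in $\beta_e$, so the bound is consistent with Theorem~\ref{thm:bias-var}. Optionally, one can remark that the weighted bound could be tightened using $q_e\le\max_e q_e$, but the statement does not require this.
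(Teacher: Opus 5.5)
Your proof is correct and is the paper's argument: bound $\beta_e^2\le\bar{G}^2(Eq_e-1)^2$, use $\sum_e(Eq_e-1)^2=E\chi^2(q\|u)$ for the unweighted average, and drop the weights via $q_e\le 1$ for Eq.~\eqref{eq:weighted-bias}, which is exactly the paper's ``coarser path.'' The paper also derives an alternative bound $\bar{G}^2[(E-2)\chi^2(q\|u)+(E-1)]$ by expanding $\sum_e q_e(Eq_e-1)^2$; the statement does not need it, and contrary to the paper's remark it is looser than $E\chi^2(q\|u)$ whenever $\chi^2(q\|u)<(E-1)/2$.
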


\paragraph{Entropy as a layer-selection proxy.}
Chi-squared divergence is harder to estimate stably on small public
samples than entropy; via Pinsker's inequality and
$\chi^2(q\|u)=E\|q-u\|_2^2\le E\|q-u\|_1^2$,
\begin{equation}
\chi^2(q\,\|\,u)\;\le\;2E\,\bigl(\log E-H(q)\bigr),
\label{eq:entropy-upper}
\end{equation}
combining with Eq.~\eqref{eq:weighted-bias} gives
\begin{equation}
\sum_e q_e\,\beta_e^2
\;\le\;2\bar{G}^2E^2\,\bigl(\log E-H(q)\bigr).
\label{eq:weighted-bias-entropy}
\end{equation}
The right-hand side of Eq.~\eqref{eq:weighted-bias-entropy} is the
only lever we can pull without touching private data: $\bar{G}^2$ and
$E$ are fixed by the model, so the bound on the aggregate bias
$\sum_e q_e\beta_e^2$ shrinks exactly as $\log E - H(q)$ shrinks. Minimizing the guaranteed bias therefore
reduces to maximizing $H(q)$ over candidate layers, which is
computable from a public corpus alone: this is the entire
justification for the criterion $l^\star=\arg\max_l H(q^{(l)})$. A
layer with higher $H(q)$ thus has a smaller imbalance-bias guarantee. 
This gives a privacy-free, computable criterion: $l^\star=\arg\max_l
H(q^{(l)})$ needs only $\mathcal{C}_{\mathrm{pub}}$ and the frozen
router, with its connection to DP-Adam utility scoped in
Remark~\ref{rem:adam-invariance}.

\subsection{Stability of Expert Ownership}
\label{sec:ownership-stability}
The owner map $a_{l^\star}$ is frozen before training and never
recomputed as $\theta_{\mathrm{sh}}$ updates; this does not affect
privacy (Lemma~\ref{lem:owner_parallel} needs only stage-fixedness),
but bears on whether it stays a faithful specialization proxy as the
representation drifts. Under a Voronoi abstraction, we bound this drift (proofs in Appendix~\ref{app:stability}--\ref{app:stab_error}):

\begin{theorem}[Ownership stability, informal]
\label{thm:voronoi-stability}
If the representation moves by at most $r$, only records with margin
$\le 2\Delta_{\mathcal C} r$ can change owner ($\Delta_{\mathcal C}$:
max prototype distance); under a mild boundary-mass condition
(rate $\kappa$), both reassignment probability and routing-mass
drift scale \emph{linearly} in $r$ - only near-boundary records can
flip, and how many there are grows gracefully with drift, not
catastrophically.
\end{theorem}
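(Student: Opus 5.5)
We formalize the Voronoi abstraction first. Each record $x_i$ is represented by a vector $z_i\in\mathbb{R}^m$, and the frozen router at $l^\star$ is modelled by prototypes $c_1,\dots,c_E$ with owner $a(z)=\arg\min_e\|z-c_e\|_2$ (ties broken by lowest index). This is the nearest-prototype caricature of the gate-weighted plurality rule in Eq.~\eqref{eq:topk-owner}. We define the margin of $z$ as $\gamma(z)=\min_{e\neq a(z)}\bigl(\|z-c_e\|^2-\|z-c_{a(z)}\|^2\bigr)$, the squared-distance gap to the runner-up. Representation drift is a map $z_i\mapsto z_i'$ with $\|z_i'-z_i\|_2\le r$. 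The informal statement then splits into two claims: a deterministic flip lemma, and a probabilistic counting bound.

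\textbf{Flip lemma.} For any pair $e,e'$, the comparison function $\phi_{e,e'}(z)=\|z-c_{e'}\|^2-\|z-c_e\|^2$ is affine:
\[
\phi_{e,e'}(z)=2\langle z,c_e-c_{e'}\rangle+\|c_{e'}\|^2-\|c_e\|^2 .
\]
Its gradient is $2(c_e-c_{e'})$, so it is Lipschitz with constant $2\|c_e-c_{e'}\|\le 2\Delta_{\mathcal C}$. Hence $|\phi(z')-\phi(z)|\le 2\Delta_{\mathcal C}r$. If the owner changes from $e$ to some $e'$, then $\phi_{e,e'}$ must go from $\ge\gamma(z)$ to $\le 0$. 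Therefore $\gamma(z)\le 2\Delta_{\mathcal C}r$, which is exactly the margin threshold in the statement. Only the affine structure of Voronoi cells is used here, so this step is short.

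\textbf{Counting bound.} Let $P$ be the empirical distribution of records, or of the public corpus. We impose the boundary-mass condition $P(\gamma(z)\le s)\le\kappa s$ for all small $s\ge0$, a Minkowski-content-type assumption on the mass near cell faces. The flip lemma then gives
\[
\Pr[a(z')\ne a(z)]\le P\bigl(\gamma\le 2\Delta_{\mathcal C}r\bigr)\le 2\kappa\Delta_{\mathcal C}r .
\]
For the routing-mass drift, each flipped record removes $1/n$ from one coordinate of $q$ and adds $1/n$ to another. So $\|q'-q\|_1\le 2\Pr[\text{flip}]\le 4\kappa\Delta_{\mathcal C}r$, which is linear in $r$. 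We then propagate this to the quantities of Sec.~\ref{sec:bias-variance}. Since $\chi^2(q\|u)=E\|q-u\|_2^2$, the map is locally Lipschitz in $q$ on the simplex, with $|\chi^2(q'\|u)-\chi^2(q\|u)|\le 2E\|q'-q\|_1$. Combining this with Theorem~\ref{thm:agg-bias} gives a linear-in-$r$ drift of the aggregate bias bound. A similar, log-Lipschitz argument gives the drift of $H(q)$, provided $q$ stays away from zero.

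\textbf{Main obstacle.} The hardest step is justifying the boundary-mass condition and making the abstraction honest. The real owner map aggregates gate weights over tokens, so "record representation" and "prototype" need a concrete instantiation, for example the mean token embedding against router weight rows. For a linear softmax router, gate comparisons are again affine in the representation, so the Lipschitz argument should transfer, with $\Delta_{\mathcal C}$ replaced by the maximal router-row difference. For the empirical measure, $\kappa$ cannot be literal. I would first try stating the condition for a population density, with bounded surface density on the finitely many hyperplane faces. I would then add a uniform-convergence term over the VC class of margin slabs, which has dimension $O(mE^2)$. This yields $2\kappa\Delta_{\mathcal C}r+O(\sqrt{mE^2\log n/n})$. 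I would expect the formal version of the theorem to carry such a statistical slack, or to assume the condition directly on the dataset.
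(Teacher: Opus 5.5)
Your argument is correct and follows the paper's proof of Theorem~\ref{thm:stab} step for step. Both use the affine comparison function $\|z-c_j\|_2^2-\|z-c_a\|_2^2$, whose change under drift is at most $2\Delta_{\mathcal C}r$ by Cauchy--Schwarz; both then obtain the flip probability $2\kappa\Delta_{\mathcal C}r$ from boundary regularity, and use the same factor-two counting to get $\|\tilde q-q\|_1\le 4\kappa\Delta_{\mathcal C}r$. Your propagation to $\chi^2(q\|u)$ and $H(q)$ and your uniform-convergence slack go beyond the formal statement, which simply assumes $\mathbb{P}(\gamma_C\le t)\le\kappa t$ for all $t>0$ on the distribution defining $q$; for that reason the paper's conclusion holds for every $r$, whereas your ``small $s$'' version of the condition gives it only for small $r$.
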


\begin{theorem}[Estimation error under drift, informal]
\label{thm:voronoi-error}
Assuming an $L_g$-Lipschitz gradient field,
\[
\begin{aligned}
\mathbb{E}\|\bar g_e-\nu_e\|_2^2
&\le V_e^{\mathrm{samp}}+V_e^{\mathrm{DP}} \\
&\quad +2(Eq_e-1)^2\|\mu_e\|_2^2+2L_g^2 r_e^2,
\end{aligned}
\]
$r_e^2$ the mean squared distance from a record to its owning
prototype. The first three terms match Theorem~\ref{thm:bias-var};
the last is the extra cost of ownership being a coarse proxy rather
than exact, vanishing as owner groups tighten around their prototype.
\end{theorem}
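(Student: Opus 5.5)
The plan is to center the estimator at its own mean rather than at $\mu_e$, so that the variance terms of Theorem~\ref{thm:bias-var} carry over unchanged and all the new error sits in a deterministic mean-offset term. I take $\nu_e$ to be the ideal expert target at the owning prototype $c_e$, namely $\nu_e = h(c_e)$, where $h$ is the (clipped) gradient field. The representation $z_i$ of record $i$ is its point in the Voronoi abstraction, and $r_e^2 = n_e^{-1}\sum_{i\in\mathcal{D}_e}\|z_i-c_e\|_2^2$.

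\textbf{Step 1: bias--variance split.} Write
\[
\bar g_e-\nu_e=(\bar g_e-\mathbb{E}\bar g_e)+(\mathbb{E}\bar g_e-\nu_e).
\]
The second term is deterministic and the first has mean zero, so the cross term vanishes in expectation. This gives
\[
\mathbb{E}\|\bar g_e-\nu_e\|_2^2=\mathbb{E}\|\bar g_e-\mathbb{E}\bar g_e\|_2^2+\|\mathbb{E}\bar g_e-\nu_e\|_2^2 .
\]

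\textbf{Step 2: the variance term.} From the computation behind Theorem~\ref{thm:bias-var}, Poisson subsampling gives $\mathbb{E}\bar g_e=(E/B)\,p\,n_e\,\mu_e=Eq_e\mu_e$. The centered second moment is exactly $V_e^{\mathrm{samp}}+V_e^{\mathrm{DP}}$. The sampling part comes from the independent Bernoulli indicators. The DP part comes from the Gaussian noise, which is independent of the sample and so contributes additively. Neither depends on $\nu_e$, so this step simply reuses that proof.

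\textbf{Step 3: the mean offset.} Decompose
\[
\mathbb{E}\bar g_e-\nu_e=(Eq_e-1)\mu_e+(\mu_e-\nu_e).
\]
Applying $\|a+b\|^2\le 2\|a\|^2+2\|b\|^2$ produces the factor-2 imbalance term $2(Eq_e-1)^2\|\mu_e\|_2^2$ plus $2\|\mu_e-\nu_e\|_2^2$. For the remaining piece, $\mu_e-\nu_e=n_e^{-1}\sum_{i\in\mathcal{D}_e}(h_i-h(c_e))$. By Jensen (convexity of $\|\cdot\|^2$) and the $L_g$-Lipschitz property,
\[
\|\mu_e-\nu_e\|_2^2\le n_e^{-1}\sum_{i\in\mathcal{D}_e}\|h(z_i)-h(c_e)\|_2^2\le L_g^2 r_e^2 .
\]

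\textbf{Main obstacle: the Lipschitz assumption.} The delicate point is stating this assumption so it applies to the clipped quantities $h_i=\mathrm{clip}(g_i^{\mathrm{exp}},C_e)$. I plan to assume the raw gradient field is $L_g$-Lipschitz in the representation. I then use that clipping to the ball of radius $C_e$ is a Euclidean projection, hence $1$-Lipschitz, so the clipped field is also $L_g$-Lipschitz.

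A secondary point is that the owner groups here are the frozen ones, $\mathcal{D}_e$. Theorem~\ref{thm:voronoi-stability} is therefore needed only to interpret $r_e$ under drift, not to prove this bound.
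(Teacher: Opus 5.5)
Your proposal is correct and follows the paper's proof of Theorem~\ref{thm:stab_error} essentially step for step. Both proofs split about $\mathbb{E}\bar g_e = Eq_e\mu_e$, reuse the variance computation to get $V_e^{\mathrm{samp}}+V_e^{\mathrm{DP}}$, decompose the mean offset as $(Eq_e-1)\mu_e+(\mu_e-\nu_e)$ with $\|a+b\|_2^2\le 2\|a\|_2^2+2\|b\|_2^2$, and apply Jensen plus Lipschitzness to obtain $L_g^2r_e^2$. The only difference is in how the Lipschitz assumption enters: the paper directly assumes the clipped gradients satisfy $h_i=\psi(Z_i)$ for an $L_g$-Lipschitz field $\psi$, with $\nu_e=\psi(c_e)$. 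You instead assume the raw field is Lipschitz and pass to the clipped field via nonexpansiveness of projection onto the $C_e$-ball; this is valid but a slightly stronger assumption than the proof needs.
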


\section{Experiments}
\label{sec:experiments}

\subsection{Setup}
\label{sec:exp-setup}

\paragraph{Model and tasks.}
We fine-tune \texttt{google/switch-base-8}~\cite{fedus2022switch}
($E=8$, six encoder MoE layers at blocks $\{1,3,5,7,9,11\}$) on four
GLUE tasks~\cite{wang2018glue} (SST-2, MNLI, QNLI, QQP), reporting
dev-set accuracy. We additionally evaluate
\texttt{OLMoE-1B-7B}~\cite{muennighoff2025olmoe} ($E=64$, top-$8$
routing) to exercise the primary-expert ownership rule for $k>1$
(Sec.~\ref{sec:ownership}). We additionally test \texttt{DeepSeek-VL2-Tiny}~\cite{wu2024deepseek}, a vision-language MoE with both shared and
routed experts (Sec.~\ref{sec:moe-background}), fine-tuned on
ScienceQA~\cite{lu2022learn}, to test generalization beyond
text-only models (Appendix ~\ref{app:more_results}). Hyperparameter grids are in
Appendix~\ref{app:exp_detail}; runtime/memory in
Appendix~\ref{app:cost}; ablations in Appendix~\ref{app:ablation} - component ablation
(clipping, denominator, alternating schedule, residual objective),
matched-scope diagnostics (shared-/expert-only, global DP at Ours's
scope), and budget-split $\rho$ sensitivity.

\paragraph{Privacy.}
All private methods use $\varepsilon\in\{1,4,8\}$ under
$(\varepsilon,\delta)$-DP with $\delta=1/|\mathcal{D}_{\mathrm{priv}}|^{1.1}$, matched across methods.

\paragraph{Layer selection.}
$l^\star=\arg\max_l H(q^{(l)})$, computed from the frozen router and
external public data only (no privacy cost), yields block~1 (SST-2),
block~7 (MNLI), block~11 (QNLI), block~9 (QQP) - see
Appendix~\ref{app:entropy-analysis} for the full sweep. We also report
first- and last-sparse-layer variants to make the entropy prediction
falsifiable.

\paragraph{Baselines.}
\textit{Non-private upper bounds:} LoRA, full fine-tuning without
noise. \textit{Monolithic DP:} DP-Adam LoRA~\cite{yu2021differentially}
and DP-Adam~\cite{tholoniat2024differentially}, both treating all
parameters as one dense block.

\subsection{Results on Switch Transformer}
\label{sec:main-results}
Tab.~\ref{tab:switch_main_results} reports results at
$\varepsilon\in\{1,4,8\}$.

\paragraph{\Raptor\, vs.\ monolithic DP.}
Ours attains the highest accuracy on all twelve task-budget cells.
Gains over Monolithic DP-Adam LoRA range \textcolor{custom_darkgreen}{$+0.23$} to \textcolor{custom_darkgreen}{$+3.24$} pts
(SST-2/MNLI/QNLI/QQP) at $\varepsilon=8$ and \textcolor{custom_darkgreen}{$+1.51$} to \textcolor{custom_darkgreen}{$+2.92$} at
$\varepsilon=1$, growing at tighter budgets on SST-2/QNLI but not
MNLI/QQP - the high-noise advantage of
Corollary~\ref{cor:balanced-error} is baseline- and task-dependent,
not uniform.
\paragraph{Entropy criterion.}
Selection is non-trivially falsifiable on MNLI (block~7) and QQP
(block~9), neither first nor last; entropy-selected matches or
exceeds both variants in every cell, with the largest gains on MNLI
and QQP, where chosen layers differ most from the runner-up.

\begin{table}[t]
\centering
\small
\setlength{\tabcolsep}{2pt}
\begin{tabular}{llcccc}
\toprule
$\varepsilon$ & Method & SST-2 & MNLI & QNLI & QQP \\
\midrule
\multirow{2}{*}{$\infty$}
& Non-priv.\ LoRA
    & \textbf{94.69} & \textbf{86.55}
    & \textbf{91.20} & \underline{90.96} \\
& Non-priv.\ FFN
    & \underline{94.50} & \underline{86.36}
    & \underline{91.09} & \textbf{91.28} \\
\midrule

\multirow{5}{*}{$1$}
& Monolith.\ DP-Adam LoRA
    & 90.94 & 76.91 & 80.38 & 81.19 \\
& Monolith.\ DP-Adam
    & 87.04 & 72.72 & 81.44 & 81.84 \\
& \cellcolor{custom_light_purple_2} Ours, first sparse
    & \cellcolor{custom_light_purple_2}\textbf{92.90}
    & \cellcolor{custom_light_purple_2}77.40
    & \cellcolor{custom_light_purple_2}\underline{82.60}
    & \cellcolor{custom_light_purple_2}83.20 \\
& \cellcolor{custom_light_purple_2} Ours, last sparse
    & \cellcolor{custom_light_purple_2}\underline{92.66}
    & \cellcolor{custom_light_purple_2}\underline{77.87}
    & \cellcolor{custom_light_purple_2}\textbf{83.30}
    & \cellcolor{custom_light_purple_2}\underline{83.80} \\
& \cellcolor{custom_light_purple_2}\textbf{Ours, entropy-sel.}
    & \cellcolor{custom_light_purple_2}\textbf{92.90}
    & \cellcolor{custom_light_purple_2}\textbf{78.42}
    & \cellcolor{custom_light_purple_2}\textbf{83.30}
    & \cellcolor{custom_light_purple_2}\textbf{84.08} \\
\midrule

\multirow{5}{*}{$4$}
& Monolith.\ DP-Adam LoRA
    & 92.66 & 78.33 & 81.95 & 82.26 \\
& Monolith.\ DP-Adam
    & 90.48 & 75.90 & 82.41 & 83.32 \\
& \cellcolor{custom_light_purple_2} Ours, first sparse
    & \cellcolor{custom_light_purple_2}\textbf{94.05}
    & \cellcolor{custom_light_purple_2}78.13
    & \cellcolor{custom_light_purple_2}\underline{83.60}
    & \cellcolor{custom_light_purple_2}84.50 \\
& \cellcolor{custom_light_purple_2} Ours, last sparse
    & \cellcolor{custom_light_purple_2}\underline{93.46}
    & \cellcolor{custom_light_purple_2}\underline{78.50}
    & \cellcolor{custom_light_purple_2}\textbf{84.70}
    & \cellcolor{custom_light_purple_2}\underline{84.78} \\
& \cellcolor{custom_light_purple_2}\textbf{Ours, entropy-sel.}
    & \cellcolor{custom_light_purple_2}\textbf{94.05}
    & \cellcolor{custom_light_purple_2}\textbf{79.03}
    & \cellcolor{custom_light_purple_2}\textbf{84.70}
    & \cellcolor{custom_light_purple_2}\textbf{84.78} \\
\midrule

\multirow{5}{*}{$8$}
& Monolith.\ DP-Adam LoRA
    & \underline{93.92} & 79.36 & 82.48 & 82.68 \\
& Monolith.\ DP-Adam
    & 90.37 & 76.70 & 82.61 & 84.09 \\
& \cellcolor{custom_light_purple_2} Ours, first sparse
    & \cellcolor{custom_light_purple_2}\textbf{94.15}
    & \cellcolor{custom_light_purple_2}79.43
    & \cellcolor{custom_light_purple_2}\underline{83.60}
    & \cellcolor{custom_light_purple_2}85.39 \\
& \cellcolor{custom_light_purple_2} Ours, last sparse
    & \cellcolor{custom_light_purple_2}93.50
    & \cellcolor{custom_light_purple_2}\underline{80.58}
    & \cellcolor{custom_light_purple_2}\textbf{85.26}
    & \cellcolor{custom_light_purple_2}\underline{85.73} \\
& \cellcolor{custom_light_purple_2}\textbf{Ours, entropy-sel.}
    & \cellcolor{custom_light_purple_2}\textbf{94.15}
    & \cellcolor{custom_light_purple_2}\textbf{81.32}
    & \cellcolor{custom_light_purple_2}\textbf{85.26}
    & \cellcolor{custom_light_purple_2}\textbf{85.92} \\
\bottomrule
\end{tabular}

\caption{\textsc{Switch-base-8} under $(\varepsilon,\delta)$-DP with
$\delta=1/|\mathcal{D}_{\mathrm{priv}}|^{1.1}$.
Entropy-selected layers fixed from public routing statistics
before training: block~1 (SST-2), block~7 (MNLI),
block~11 (QNLI), block~9 (QQP).
\textbf{Bold}: best per task--budget;
\underline{underline}: second-best distinct method.}
\label{tab:switch_main_results}
\end{table}

\subsection{Results on OLMoE}
\label{sec:olmoe-results}
Tab.~\ref{tab:olmoe_main_results} reports \texttt{OLMoE-1B-7B}
results at $\varepsilon\in\{1,4,8\}$, with records assigned via
majority-vote ownership for $k>1$ (Sec.~\ref{sec:ownership}). Ours
(entropy-selected) attains the highest accuracy in all twelve
task--budget cells. At $\varepsilon=8$, gains over Monolithic DP-Adam
LoRA are \textcolor{custom_darkgreen}{$+0.69$} (SST-2), \textcolor{custom_darkgreen}{$+2.18$} (MNLI), \textcolor{custom_darkgreen}{$+1.74$} (QNLI), \textcolor{custom_darkgreen}{$+0.32$}
(QQP); at $\varepsilon=1$, \textcolor{custom_darkgreen}{$+2.63$}, \textcolor{custom_darkgreen}{$+0.84$}, \textcolor{custom_darkgreen}{$+4.50$}, \textcolor{custom_darkgreen}{$+0.70$} respectively. SST-2,
QNLI, and QQP gains grow at tighter budgets, consistent with the
high-noise regime prediction of Corollary~\ref{cor:balanced-error}
and mirroring Sec.~\ref{sec:main-results}; the MNLI gain instead
shrinks from \textcolor{custom_darkgreen}{$+2.18$} to \textcolor{custom_darkgreen}{$+0.84$}.

\begin{table}[h]
\centering
\small
\setlength{\tabcolsep}{2pt}
\begin{tabular}{llcccc}
\toprule
$\varepsilon$ & Method & SST-2 & MNLI & QNLI & QQP \\
\midrule
\multirow{2}{*}{$\infty$}
& Non-priv.\ LoRA & \textbf{96.67} & \textbf{90.19} & \textbf{94.65} & \textbf{92.03} \\
& Non-priv.\ FFN  & 95.18 & 89.72 & 93.13 & 91.07 \\
\midrule
\multirow{2}{*}{$1$}
& Monolith.\ DP-Adam LoRA
    & 92.78 & 85.31 & 84.24 & 85.89 \\
& \cellcolor{custom_light_purple_2}\textbf{Ours, entropy-sel.}
    & \cellcolor{custom_light_purple_2}\textbf{95.41}
    & \cellcolor{custom_light_purple_2}\textbf{86.15}
    & \cellcolor{custom_light_purple_2}\textbf{88.74}
    & \cellcolor{custom_light_purple_2}\textbf{86.59} \\
\midrule
\multirow{2}{*}{$4$}
& Monolith.\ DP-Adam LoRA
    & 95.07 & 85.74 & 88.56 & 86.43 \\
& \cellcolor{custom_light_purple_2}\textbf{Ours, entropy-sel.}
    & \cellcolor{custom_light_purple_2}\textbf{95.30}
    & \cellcolor{custom_light_purple_2}\textbf{87.44}
    & \cellcolor{custom_light_purple_2}\textbf{90.44}
    & \cellcolor{custom_light_purple_2}\textbf{86.89} \\
\midrule
\multirow{2}{*}{$8$}
& Monolith.\ DP-Adam LoRA
    & 95.07 & 86.08 & 89.00 & 86.62 \\
& \cellcolor{custom_light_purple_2}\textbf{Ours, entropy-sel.}
    & \cellcolor{custom_light_purple_2}\textbf{95.76}
    & \cellcolor{custom_light_purple_2}\textbf{88.26}
    & \cellcolor{custom_light_purple_2}\textbf{90.74}
    & \cellcolor{custom_light_purple_2}\textbf{86.94} \\
\bottomrule
\end{tabular}%
\caption{OLMoE-1B-7B ($E=64$, top-$8$ routing) under
$(\varepsilon,\delta)$-DP, $\delta=1/|\mathcal{D}_{\mathrm{priv}}|^{1.1}$.
Entropy-selected layers fixed from public routing
statistics: block~1 (SST-2, QNLI, QQP), block~15 (MNLI).}
\label{tab:olmoe_main_results}
\end{table}

\subsection{Empirical Privacy Auditing via Membership Inference}
\label{sec:mia-audit}

\begin{table*}[t]
\centering
\small
\begin{tabular}{ll|c|cc|cc}
\toprule
Method & $\varepsilon$ & Acc. & \multicolumn{2}{c|}{Loss attack} & \multicolumn{2}{c}{LiRA} \\
& & & AUC & TPR@0.1\%FPR & AUC & TPR@0.1\%FPR \\
\midrule
Non-priv.\ LoRA          & $\infty$ & 91.20 & 0.568 & 0.45\% & 0.620 & 3.00\% \\
\midrule
Monolithic DP-LoRA       & 1 & 80.38 & 0.501 & \underline{0.10\%} & 0.504 & 0.13\% \\
Matched-scope Global DP  & 1 & \underline{82.90} & \textbf{0.502} & \textbf{0.11\%} & \underline{0.505} & \underline{0.14\%} \\
\rowcolor{custom_light_purple_2}
\Raptor                   & 1 & \textbf{83.30} & \textbf{0.502} & \underline{0.10\%} & \textbf{0.506} & \textbf{0.15\%} \\
\midrule
Monolithic DP-LoRA       & 4 & 81.95 & 0.505 & 0.12\% & 0.513 & 0.23\% \\
Matched-scope Global DP  & 4 & \underline{84.68} & \underline{0.505} & \textbf{0.13\%} & \underline{0.514} & \underline{0.26\%} \\
\rowcolor{custom_light_purple_2}
\Raptor                   & 4 & \textbf{84.70} & \textbf{0.506} & \textbf{0.13\%} & \textbf{0.516} & \textbf{0.28\%} \\
\midrule
Monolithic DP-LoRA       & 8 & 82.48 & \underline{0.510} & \underline{0.16\%} & 0.524 & 0.45\% \\
Matched-scope Global DP  & 8 & \underline{84.90} & \underline{0.510} & \underline{0.16\%} & \underline{0.527} & \underline{0.50\%} \\
\rowcolor{custom_light_purple_2}
\Raptor                   & 8 & \textbf{85.26} & \textbf{0.512} & \textbf{0.18\%} & \textbf{0.530} & \textbf{0.58\%} \\
\bottomrule
\end{tabular}%
\caption{MIA audit on QNLI (\textsc{Switch-base-8}). $64$ shadow models;
$n_{\mathrm{eval}}=20{,}000$ ($10$k/$10$k members/non-members). Single
run per cell. \textbf{Bold}: highest attack success (AUC/TPR) among DP
methods at that $\varepsilon$ - the \emph{least} favorable privacy
result, not the best method; underline: second-highest/tied. Acc.\
bolding follows the opposite convention (highest accuracy), as in
Tab.~\ref{tab:switch_main_results}.}
\label{tab:mia-audit}
\end{table*}

We audit QNLI (\textsc{Switch-base-8}) with a loss-threshold attack and
shadow-model LiRA~\cite{carlini2022membership} ($64$ shadow models;
$n_{\mathrm{eval}}=20{,}000$: $10$k members, $10$k non-members),
reporting $\mathrm{TPR}@0.1\%\mathrm{FPR}$ alongside AUC since
average-case AUC can mask worst-case success
(Tab.~\ref{tab:mia-audit}). All DP methods sharply reduce LiRA TPR
vs.\ non-private training, with attack success growing monotonically
in $\varepsilon$. Our framework  attains the highest TPR/AUC among DP
methods on both LiRA metrics and loss-attack AUC at every
$\varepsilon$ (tying Monolithic DP-LoRA on loss-attack TPR at
$\varepsilon{=}1$), with the gap widening as $\varepsilon$ grows -
not a DP violation, but a signature of the same higher per-expert SNR
(Theorem~\ref{thm:bias-var}) driving its accuracy gains.

\section{Summary}
Monolithic DP mismatches sparse MoE's shared/expert structure.
\Raptor\, fixes this via alternating streams,
expert-specific clipping/noise, public normalization, and
entropy-based layer selection, under a parallel-composition DP
guarantee. Gains hold across Switch, OLMoE, and DeepSeek-VL2-Tiny,
typically largest at tight budgets.


\section*{Limitations}
\label{app:limit}
\noindent\textbf{Frozen router.}
Fixing owner groups requires freezing the router, which precludes privately
updating routing parameters within a stage. Adaptive router training could
improve utility but would require re-accounting for ownership changes.

\noindent\textbf{Public denominator.}
$D_e=B/E$ is privacy-safe but not oracle-optimal: under heavy imbalance it
may over- or under-scale expert updates. Adaptive private normalization is
left to future work.

\noindent\textbf{Residual weighting and outlier amplification.} The
residual weight $w_i=1-\mathrm{softmax}(z_i^{\mathrm{sh}})_{y_i}$
(Eq.~\eqref{eq:residual-weight}) up-weights low-confidence records --
a superset of genuinely hard subpopulations and atypical or
mislabeled ones, which tend to retain high $w_i$ once the shared
stream reliably disagrees with their label. Clipping is applied after
the $w_i$ multiplication (Lemma~\ref{lem:residual-parallel}), so the
sensitivity bound $C_e$ and the $(\varepsilon,\delta)$-DP guarantee
hold regardless of $w_i$ - a signal-composition effect, not a
sensitivity violation. But a small owner group's clipped sum is then
more likely dominated by its highest-$w_i$ member, which may fit label
noise over genuine structure, compounds with the SNR degradation of
Sec.~\ref{sec:diagnosis} on low-load experts, and concentrates
signal on records associated with elevated memorization risk even
under a fixed nominal $\varepsilon$. We did not run label-noise or
subgroup diagnostics to quantify this; a bounded variant
($w_i^\gamma$, $\gamma<1$, or an EMA-discounted $w_i$) costs no extra
privacy budget and is a natural robustness knob for future work.

\noindent\textbf{Extensions.} Future work includes private router adaptation,
adaptive normalization, and extension to generative and
larger-scale multimodal MoEs.


\onecolumn
\bibliography{main}

@String(NeurIPS = {Adv. Neural Inform. Process. Syst.})

@String(ICLR  = {Int. Conf. Learn. Represent.})

@String(AAAI  = {AAAI})

@String(NeurIPS = {NeurIPS})

@String(ICLR  = {ICLR})

@inproceedings{abadi2016deep,
  author       = {Mart{\'{\i}}n Abadi and
                  Andy Chu and
                  Ian J. Goodfellow and
                  H. Brendan McMahan and
                  Ilya Mironov and
                  Kunal Talwar and
                  Li Zhang},
  editor       = {Edgar R. Weippl and
                  Stefan Katzenbeisser and
                  Christopher Kruegel and
                  Andrew C. Myers and
                  Shai Halevi},
  title        = {Deep Learning with Differential Privacy},
  booktitle    = {Proceedings of the 2016 {ACM} {SIGSAC} Conference on Computer and
                  Communications Security, Vienna, Austria, October 24-28, 2016},
  pages        = {308--318},
  publisher    = {{ACM}},
  year         = {2016},
  url          = {https://doi.org/10.1145/2976749.2978318},
  doi          = {10.1145/2976749.2978318},
  bibsource    = {dblp computer science bibliography, https://dblp.org}
}

@article{dwork2014algorithmic,
  title={The algorithmic foundations of differential privacy},
  author={Dwork, Cynthia and Roth, Aaron},
  journal={Foundations and trends{\textregistered} in theoretical computer science},
  volume={9},
  number={3-4},
  pages={211--487},
  year={2014},
  publisher={Emerald Publishing Limited}
}

@article{li2021large,
  title={Large language models can be strong differentially private learners},
  author={Li, Xuechen and Tramer, Florian and Liang, Percy and Hashimoto, Tatsunori},
  journal={arXiv preprint arXiv:2110.05679},
  year={2021}
}

@article{yu2021differentially,
  title={Differentially private fine-tuning of language models},
  author={Yu, Da and Naik, Saurabh and Backurs, Arturs and Gopi, Sivakanth and Inan, Huseyin A and Kamath, Gautam and Kulkarni, Janardhan and Lee, Yin Tat and Manoel, Andre and Wutschitz, Lukas and others},
  journal={arXiv preprint arXiv:2110.06500},
  year={2021}
}

@article{hu2022lora,
  title={Lora: Low-rank adaptation of large language models.},
  author={Hu, Edward J and Shen, Yelong and Wallis, Phillip and Allen-Zhu, Zeyuan and Li, Yuanzhi and Wang, Shean and Wang, Liang and Chen, Weizhu and others},
  journal={Iclr},
  volume={1},
  number={2},
  pages={3},
  year={2022}
}

@article{shazeer2017outrageously,
  title={Outrageously large neural networks: The sparsely-gated mixture-of-experts layer},
  author={Shazeer, Noam and Mirhoseini, Azalia and Maziarz, Krzysztof and Davis, Andy and Le, Quoc and Hinton, Geoffrey and Dean, Jeff},
  journal={arXiv preprint arXiv:1701.06538},
  year={2017}
}

@article{fedus2022switch,
  title={Switch transformers: Scaling to trillion parameter models with simple and efficient sparsity},
  author={Fedus, William and Zoph, Barret and Shazeer, Noam},
  journal={Journal of Machine Learning Research},
  volume={23},
  number={120},
  pages={1--39},
  year={2022}
}

@article{tholoniat2024differentially,
  title={Differentially private training of mixture of experts models},
  author={Tholoniat, Pierre and Inan, Huseyin A and Kulkarni, Janardhan and Sim, Robert},
  journal={arXiv preprint arXiv:2402.07334},
  year={2024}
}

@inproceedings{wang2018glue,
  title={GLUE: A multi-task benchmark and analysis platform for natural language understanding},
  author={Wang, Alex and Singh, Amanpreet and Michael, Julian and Hill, Felix and Levy, Omer and Bowman, Samuel},
  booktitle={Proceedings of the 2018 EMNLP workshop BlackboxNLP: Analyzing and interpreting neural networks for NLP},
  pages={353--355},
  year={2018}
}

@inproceedings{DBLP:conf/nips/PaszkeGMLBCKLGA19,
  author       = {Adam Paszke and
                  Sam Gross and
                  Francisco Massa and
                  Adam Lerer and
                  James Bradbury and
                  Gregory Chanan and
                  Trevor Killeen and
                  Zeming Lin and
                  Natalia Gimelshein and
                  Luca Antiga and
                  Alban Desmaison and
                  Andreas K{\"{o}}pf and
                  Edward Z. Yang and
                  Zachary DeVito and
                  Martin Raison and
                  Alykhan Tejani and
                  Sasank Chilamkurthy and
                  Benoit Steiner and
                  Lu Fang and
                  Junjie Bai and
                  Soumith Chintala},
  editor       = {Hanna M. Wallach and
                  Hugo Larochelle and
                  Alina Beygelzimer and
                  Florence d'Alch{\'{e}}{-}Buc and
                  Emily B. Fox and
                  Roman Garnett},
  title        = {PyTorch: An Imperative Style, High-Performance Deep Learning Library},
  booktitle    = {Advances in Neural Information Processing Systems 32: Annual Conference
                  on Neural Information Processing Systems 2019, NeurIPS 2019, December
                  8-14, 2019, Vancouver, BC, Canada},
  pages        = {8024--8035},
  year         = {2019},
  url          = {https://proceedings.neurips.cc/paper/2019/hash/bdbca288fee7f92f2bfa9f7012727740-Abstract.html},
  bibsource    = {dblp computer science bibliography, https://dblp.org}
}

@article{gopi2021numerical,
  title={Numerical composition of differential privacy},
  author={Gopi, Sivakanth and Lee, Yin Tat and Wutschitz, Lukas},
  journal={Advances in Neural Information Processing Systems},
  volume={34},
  pages={11631--11642},
  year={2021}
}

@article{DBLP:journals/corr/abs-2109-12298,
  author       = {Ashkan Yousefpour and
                  Igor Shilov and
                  Alexandre Sablayrolles and
                  Davide Testuggine and
                  Karthik Prasad and
                  Mani Malek and
                  John Nguyen and
                  Sayan Ghosh and
                  Akash Bharadwaj and
                  Jessica Zhao and
                  Graham Cormode and
                  Ilya Mironov},
  title        = {Opacus: User-Friendly Differential Privacy Library in PyTorch},
  journal      = {CoRR},
  volume       = {abs/2109.12298},
  year         = {2021},
  url          = {https://arxiv.org/abs/2109.12298},
  eprinttype   = {arXiv},
  eprint       = {2109.12298},
  bibsource    = {dblp computer science bibliography, https://dblp.org}
}

@inproceedings{DBLP:conf/aaai/ZhengWZCCS26,
  author       = {Lele Zheng and
                  Xiang Wang and
                  Tao Zhang and
                  Yang Cao and
                  Ke Cheng and
                  Yulong Shen},
  editor       = {Sven Koenig and
                  Chad Jenkins and
                  Matthew E. Taylor},
  title        = {Differentially Private Subspace Fine-Tuning for Large Language Models},
  booktitle    = {Fortieth {AAAI} Conference on Artificial Intelligence, Thirty-Eighth
                  Conference on Innovative Applications of Artificial Intelligence,
                  Sixteenth Symposium on Educational Advances in Artificial Intelligence,
                  {AAAI} 2026, Singapore, January 20-27, 2026},
  pages        = {28830--28838},
  publisher    = {{AAAI} Press},
  year         = {2026},
  url          = {https://doi.org/10.1609/aaai.v40i34.40117},
  doi          = {10.1609/AAAI.V40I34.40117},
  bibsource    = {dblp computer science bibliography, https://dblp.org}
}

@inproceedings{muennighoff2025olmoe,
  author       = {Niklas Muennighoff and
                  Luca Soldaini and
                  Dirk Groeneveld and
                  Kyle Lo and
                  Jacob Morrison and
                  Sewon Min and
                  Weijia Shi and
                  Evan Pete Walsh and
                  Oyvind Tafjord and
                  Nathan Lambert and
                  Yuling Gu and
                  Shane Arora and
                  Akshita Bhagia and
                  Dustin Schwenk and
                  David Wadden and
                  Alexander Wettig and
                  Binyuan Hui and
                  Tim Dettmers and
                  Douwe Kiela and
                  Ali Farhadi and
                  et al.},
  title        = {OLMoE: Open Mixture-of-Experts Language Models},
  booktitle    = {The Thirteenth International Conference on Learning Representations,
                  {ICLR} 2025, Singapore, April 24-28, 2025},
  publisher    = {OpenReview.net},
  year         = {2025},
  url          = {https://openreview.net/forum?id=xXTkbTBmqq},
  bibsource    = {dblp computer science bibliography, https://dblp.org}
}

@article{jiang2024mixtral,
  author       = {Albert Q. Jiang and
                  Alexandre Sablayrolles and
                  Antoine Roux and
                  Arthur Mensch and
                  Blanche Savary and
                  Chris Bamford and
                  Devendra Singh Chaplot and
                  Diego de Las Casas and
                  Emma Bou Hanna and
                  Florian Bressand and
                  Gianna Lengyel and
                  Guillaume Bour and
                  Guillaume Lample and
                  L{\'{e}}lio Renard Lavaud and
                  Lucile Saulnier and
                  Marie{-}Anne Lachaux and
                  Pierre Stock and
                  Sandeep Subramanian and
                  Sophia Yang and
                  Szymon Antoniak and
                  Teven Le Scao and
                  Th{\'{e}}ophile Gervet and
                  Thibaut Lavril and
                  Thomas Wang and
                  Timoth{\'{e}}e Lacroix and
                  William El Sayed},
  title        = {Mixtral of Experts},
  journal      = {CoRR},
  volume       = {abs/2401.04088},
  year         = {2024},
  url          = {https://doi.org/10.48550/arXiv.2401.04088},
  doi          = {10.48550/ARXIV.2401.04088},
  eprinttype   = {arXiv},
  eprint       = {2401.04088},
  bibsource    = {dblp computer science bibliography, https://dblp.org}
}

@inproceedings{zhang2025disk,
  title={Disk: Differentially private optimizer with simplified kalman filter for noise reduction},
  author={Zhang, Xinwei and Bu, Zhiqi and Balle, Borja and Hong, Mingyi and Razaviyayn, Meisam and Mirrokni, Vahab},
  booktitle={International Conference on Learning Representations},
  volume={2025},
  pages={80289--80316},
  year={2025}
}

@inproceedings{rajpurkar2016squad,
  title={Squad: 100,000+ questions for machine comprehension of text},
  author={Rajpurkar, Pranav and Zhang, Jian and Lopyrev, Konstantin and Liang, Percy},
  booktitle={Proceedings of the 2016 conference on empirical methods in natural language processing},
  pages={2383--2392},
  year={2016}
  
}

@inproceedings{nie2020adversarial,
  title={Adversarial NLI: A new benchmark for natural language understanding},
  author={Nie, Yixin and Williams, Adina and Dinan, Emily and Bansal, Mohit and Weston, Jason and Kiela, Douwe},
  booktitle={Proceedings of the 58th annual meeting of the association for computational linguistics},
  pages={4885--4901},
  year={2020}
}

@inproceedings{maas2011learning,
  title={Learning word vectors for sentiment analysis},
  author={Maas, Andrew and Daly, Raymond E and Pham, Peter T and Huang, Dan and Ng, Andrew Y and Potts, Christopher},
  booktitle={Proceedings of the 49th annual meeting of the association for computational linguistics: Human language technologies},
  pages={142--150},
  year={2011}
}

@article{thakkar2019differentially,
  title={Differentially private learning with adaptive clipping},
  author={Thakkar, Om and Andrew, Galen and McMahan, H Brendan},
  journal={arXiv e-prints},
  pages={arXiv--1905},
  year={2019}
}

@inproceedings{mcsherry2007mechanism,
  title={Mechanism design via differential privacy},
  author={McSherry, Frank and Talwar, Kunal},
  booktitle={48th Annual IEEE Symposium on Foundations of Computer Science (FOCS'07)},
  pages={94--103},
  year={2007},
  organization={IEEE}
}

@inproceedings{mcmahan2017communication,
  title={Communication-efficient learning of deep networks from decentralized data},
  author={McMahan, Brendan and Moore, Eider and Ramage, Daniel and Hampson, Seth and y Arcas, Blaise Aguera},
  booktitle={Artificial intelligence and statistics},
  pages={1273--1282},
  year={2017},
  organization={Pmlr}
}

@article{kasiviswanathan2011can,
  title={What can we learn privately?},
  author={Kasiviswanathan, Shiva Prasad and Lee, Homin K and Nissim, Kobbi and Raskhodnikova, Sofya and Smith, Adam},
  journal={SIAM Journal on Computing},
  volume={40},
  number={3},
  pages={793--826},
  year={2011},
  publisher={SIAM}
}

@inproceedings{dwork2010boosting,
  title={Boosting and differential privacy},
  author={Dwork, Cynthia and Rothblum, Guy N and Vadhan, Salil},
  booktitle={2010 IEEE 51st annual symposium on foundations of computer science},
  pages={51--60},
  year={2010},
  organization={IEEE}
}

@inproceedings{rogers2016max,
  title={Max-information, differential privacy, and post-selection hypothesis testing},
  author={Rogers, Ryan and Roth, Aaron and Smith, Adam and Thakkar, Om},
  booktitle={2016 IEEE 57th Annual Symposium on Foundations of Computer Science (FOCS)},
  pages={487--494},
  year={2016},
  organization={IEEE}
}

@inproceedings{smith2011privacy,
  title={Privacy-preserving statistical estimation with optimal convergence rates},
  author={Smith, Adam},
  booktitle={Proceedings of the forty-third annual ACM symposium on Theory of computing},
  pages={813--822},
  year={2011}
}

@inproceedings{dwork2006our,
  title={Our data, ourselves: Privacy via distributed noise generation},
  author={Dwork, Cynthia and Kenthapadi, Krishnaram and McSherry, Frank and Mironov, Ilya and Naor, Moni},
  booktitle={Annual international conference on the theory and applications of cryptographic techniques},
  pages={486--503},
  year={2006},
  organization={Springer}
}

@incollection{vadhan2017complexity,
  title={The complexity of differential privacy},
  author={Vadhan, Salil},
  booktitle={Tutorials on the Foundations of Cryptography: Dedicated to Oded Goldreich},
  pages={347--450},
  year={2017},
  publisher={Springer}
}

@inproceedings{dai2024deepseekmoe,
  title={Deepseekmoe: Towards ultimate expert specialization in mixture-of-experts language models},
  author={Dai, Damai and Deng, Chengqi and Zhao, Chenggang and Xu, RX and Gao, Huazuo and Chen, Deli and Li, Jiashi and Zeng, Wangding and Yu, Xingkai and Wu, Yu and others},
  booktitle={Proceedings of the 62nd Annual Meeting of the Association for Computational Linguistics (Volume 1: Long Papers)},
  pages={1280--1297},
  year={2024}
}

@misc{lepikhin2020gshard,
      title={GShard: Scaling Giant Models with Conditional Computation and Automatic Sharding}, 
      author={Dmitry Lepikhin and HyoukJoong Lee and Yuanzhong Xu and Dehao Chen and Orhan Firat and Yanping Huang and Maxim Krikun and Noam Shazeer and Zhifeng Chen},
      year={2020},
      eprint={2006.16668},
      archivePrefix={arXiv},
      primaryClass={cs.CL},
      url={https://arxiv.org/abs/2006.16668}, 
}

@misc{du2022glam,
      title={GLaM: Efficient Scaling of Language Models with Mixture-of-Experts}, 
      author={Nan Du and Yanping Huang and Andrew M. Dai and Simon Tong and Dmitry Lepikhin and Yuanzhong Xu and Maxim Krikun and Yanqi Zhou and Adams Wei Yu and Orhan Firat and Barret Zoph and Liam Fedus and Maarten Bosma and Zongwei Zhou and Tao Wang and Yu Emma Wang and Kellie Webster and Marie Pellat and Kevin Robinson and Kathleen Meier-Hellstern and Toju Duke and Lucas Dixon and Kun Zhang and Quoc V Le and Yonghui Wu and Zhifeng Chen and Claire Cui},
      year={2022},
      eprint={2112.06905},
      archivePrefix={arXiv},
      primaryClass={cs.CL},
      url={https://arxiv.org/abs/2112.06905}, 
}

@misc{deepseekai2025deepseekv3,
      title={DeepSeek-V3 Technical Report}, 
      author={DeepSeek-AI and Aixin Liu and Bei Feng and Bing Xue and Bingxuan Wang and Bochao Wu and Chengda Lu and Chenggang Zhao and Chengqi Deng and Chenyu Zhang and Chong Ruan and Damai Dai and Daya Guo and Dejian Yang and Deli Chen and Dongjie Ji and Erhang Li and Fangyun Lin and Fucong Dai and Fuli Luo and Guangbo Hao and Guanting Chen and Guowei Li and H. Zhang and Han Bao and Hanwei Xu and Haocheng Wang and Haowei Zhang and Honghui Ding and Huajian Xin and Huazuo Gao and Hui Li and Hui Qu and J. L. Cai and Jian Liang and Jianzhong Guo and Jiaqi Ni and Jiashi Li and Jiawei Wang and Jin Chen and Jingchang Chen and Jingyang Yuan and Junjie Qiu and Junlong Li and Junxiao Song and Kai Dong and Kai Hu and Kaige Gao and Kang Guan and Kexin Huang and Kuai Yu and Lean Wang and Lecong Zhang and Lei Xu and Leyi Xia and Liang Zhao and Litong Wang and Liyue Zhang and Meng Li and Miaojun Wang and Mingchuan Zhang and Minghua Zhang and Minghui Tang and Mingming Li and Ning Tian and Panpan Huang and Peiyi Wang and Peng Zhang and Qiancheng Wang and Qihao Zhu and Qinyu Chen and Qiushi Du and R. J. Chen and R. L. Jin and Ruiqi Ge and Ruisong Zhang and Ruizhe Pan and Runji Wang and Runxin Xu and Ruoyu Zhang and Ruyi Chen and S. S. Li and Shanghao Lu and Shangyan Zhou and Shanhuang Chen and Shaoqing Wu and Shengfeng Ye and Shengfeng Ye and Shirong Ma and Shiyu Wang and Shuang Zhou and Shuiping Yu and Shunfeng Zhou and Shuting Pan and T. Wang and Tao Yun and Tian Pei and Tianyu Sun and W. L. Xiao and Wangding Zeng and Wanjia Zhao and Wei An and Wen Liu and Wenfeng Liang and Wenjun Gao and Wenqin Yu and Wentao Zhang and X. Q. Li and Xiangyue Jin and Xianzu Wang and Xiao Bi and Xiaodong Liu and Xiaohan Wang and Xiaojin Shen and Xiaokang Chen and Xiaokang Zhang and Xiaosha Chen and Xiaotao Nie and Xiaowen Sun and Xiaoxiang Wang and Xin Cheng and Xin Liu and Xin Xie and Xingchao Liu and Xingkai Yu and Xinnan Song and Xinxia Shan and Xinyi Zhou and Xinyu Yang and Xinyuan Li and Xuecheng Su and Xuheng Lin and Y. K. Li and Y. Q. Wang and Y. X. Wei and Y. X. Zhu and Yang Zhang and Yanhong Xu and Yanhong Xu and Yanping Huang and Yao Li and Yao Zhao and Yaofeng Sun and Yaohui Li and Yaohui Wang and Yi Yu and Yi Zheng and Yichao Zhang and Yifan Shi and Yiliang Xiong and Ying He and Ying Tang and Yishi Piao and Yisong Wang and Yixuan Tan and Yiyang Ma and Yiyuan Liu and Yongqiang Guo and Yu Wu and Yuan Ou and Yuchen Zhu and Yuduan Wang and Yue Gong and Yuheng Zou and Yujia He and Yukun Zha and Yunfan Xiong and Yunxian Ma and Yuting Yan and Yuxiang Luo and Yuxiang You and Yuxuan Liu and Yuyang Zhou and Z. F. Wu and Z. Z. Ren and Zehui Ren and Zhangli Sha and Zhe Fu and Zhean Xu and Zhen Huang and Zhen Zhang and Zhenda Xie and Zhengyan Zhang and Zhewen Hao and Zhibin Gou and Zhicheng Ma and Zhigang Yan and Zhihong Shao and Zhipeng Xu and Zhiyu Wu and Zhongyu Zhang and Zhuoshu Li and Zihui Gu and Zijia Zhu and Zijun Liu and Zilin Li and Ziwei Xie and Ziyang Song and Ziyi Gao and Zizheng Pan},
      year={2025},
      eprint={2412.19437},
      archivePrefix={arXiv},
      primaryClass={cs.CL},
      url={https://arxiv.org/abs/2412.19437}, 
}

@misc{zhou2022expertchoice,
      title={Mixture-of-Experts with Expert Choice Routing}, 
      author={Yanqi Zhou and Tao Lei and Hanxiao Liu and Nan Du and Yanping Huang and Vincent Zhao and Andrew Dai and Zhifeng Chen and Quoc Le and James Laudon},
      year={2022},
      eprint={2202.09368},
      archivePrefix={arXiv},
      primaryClass={cs.LG},
      url={https://arxiv.org/abs/2202.09368}, 
}

@misc{lewis2021base,
      title={BASE Layers: Simplifying Training of Large, Sparse Models}, 
      author={Mike Lewis and Shruti Bhosale and Tim Dettmers and Naman Goyal and Luke Zettlemoyer},
      year={2021},
      eprint={2103.16716},
      archivePrefix={arXiv},
      primaryClass={cs.CL},
      url={https://arxiv.org/abs/2103.16716}, 
}

@misc{roller2021hash,
      title={Hash Layers For Large Sparse Models}, 
      author={Stephen Roller and Sainbayar Sukhbaatar and Arthur Szlam and Jason Weston},
      year={2021},
      eprint={2106.04426},
      archivePrefix={arXiv},
      primaryClass={cs.LG},
      url={https://arxiv.org/abs/2106.04426}, 
}

@misc{chi2022representation,
      title={On the Representation Collapse of Sparse Mixture of Experts}, 
      author={Zewen Chi and Li Dong and Shaohan Huang and Damai Dai and Shuming Ma and Barun Patra and Saksham Singhal and Payal Bajaj and Xia Song and Xian-Ling Mao and Heyan Huang and Furu Wei},
      year={2022},
      eprint={2204.09179},
      archivePrefix={arXiv},
      primaryClass={cs.CL},
      url={https://arxiv.org/abs/2204.09179}, 
}

@misc{wang2024auxfree,
      title={Auxiliary-Loss-Free Load Balancing Strategy for Mixture-of-Experts}, 
      author={Lean Wang and Huazuo Gao and Chenggang Zhao and Xu Sun and Damai Dai},
      year={2024},
      eprint={2408.15664},
      archivePrefix={arXiv},
      primaryClass={cs.LG},
      url={https://arxiv.org/abs/2408.15664}, 
}

@misc{rajbhandari2022deepspeedmoe,
      title={DeepSpeed-MoE: Advancing Mixture-of-Experts Inference and Training to Power Next-Generation AI Scale}, 
      author={Samyam Rajbhandari and Conglong Li and Zhewei Yao and Minjia Zhang and Reza Yazdani Aminabadi and Ammar Ahmad Awan and Jeff Rasley and Yuxiong He},
      year={2022},
      eprint={2201.05596},
      archivePrefix={arXiv},
      primaryClass={cs.LG},
      url={https://arxiv.org/abs/2201.05596}, 
}

@misc{riquelme2021vmoe,
      title={Scaling Vision with Sparse Mixture of Experts}, 
      author={Carlos Riquelme and Joan Puigcerver and Basil Mustafa and Maxim Neumann and Rodolphe Jenatton and André Susano Pinto and Daniel Keysers and Neil Houlsby},
      year={2021},
      eprint={2106.05974},
      archivePrefix={arXiv},
      primaryClass={cs.CV},
      url={https://arxiv.org/abs/2106.05974}, 
}

@article{saikh2022scienceqa,
  title={Scienceqa: A novel resource for question answering on scholarly articles},
  author={Saikh, Tanik and Ghosal, Tirthankar and Mittal, Amish and Ekbal, Asif and Bhattacharyya, Pushpak},
  journal={International Journal on Digital Libraries},
  volume={23},
  number={3},
  pages={289--301},
  year={2022},
  publisher={Springer}
}

@article{wu2024deepseek,
  title={Deepseek-vl2: Mixture-of-experts vision-language models for advanced multimodal understanding},
  author={Wu, Zhiyu and Chen, Xiaokang and Pan, Zizheng and Liu, Xingchao and Liu, Wen and Dai, Damai and Gao, Huazuo and Ma, Yiyang and Wu, Chengyue and Wang, Bingxuan and others},
  journal={arXiv preprint arXiv:2412.10302},
  year={2024}
}

@article{lu2022learn,
  title={Learn to explain: Multimodal reasoning via thought chains for science question answering},
  author={Lu, Pan and Mishra, Swaroop and Xia, Tanglin and Qiu, Liang and Chang, Kai-Wei and Zhu, Song-Chun and Tafjord, Oyvind and Clark, Peter and Kalyan, Ashwin},
  journal={Advances in neural information processing systems},
  volume={35},
  pages={2507--2521},
  year={2022}
}

@inproceedings{carlini2022membership,
  title={Membership inference attacks from first principles},
  author={Carlini, Nicholas and Chien, Steve and Nasr, Milad and Song, Shuang and Terzis, Andreas and Tramer, Florian},
  booktitle={2022 IEEE symposium on security and privacy (SP)},
  pages={1897--1914},
  year={2022},
  organization={IEEE}
}

\clearpage 


\appendix

\onecolumn
\begin{center}
    \Large {\Raptor\, Supplementary}
\end{center}

\tableofcontents
\addtocontents{toc}{\protect\setcounter{tocdepth}{2}}

\onecolumn
\section{More Related Works}
\label{app:more_related}
\subsection{Structured Private Optimization}
\label{app:structured-dp}

A growing body of work exploits structure in the private update.
Gradient filtering and noise-shaping methods reduce effective noise
variance by suppressing or temporally smoothing noise
components~\cite{zhang2025disk}.
Parameter-efficient methods shrink the noised dimension via
adapters~\cite{yu2021differentially} or restrict updates to low-rank
or learned subspaces~\cite{DBLP:conf/aaai/ZhengWZCCS26}.
Group-wise clipping~\cite{thakkar2019differentially} calibrates the
clipping bound to parameter subsets with related gradient scale -
the closest prior idea to our expert-specific clipping, but applied
to layer-wise or client-wise groups in a dense setting rather than
to routing-induced sparse owner partitions.

These methods operate on the temporal, low-rank, or layer-wise
structure of the gradient. We exploit a different axis: the sparse
routing structure of MoE computation. Rather than reshaping the
gradient subspace or noise trajectory, we align clipping,
normalization, and privacy \emph{composition} with the shared-expert
role decomposition. Critically, our parallel-composition analysis
(Lemma~\ref{lem:owner_parallel}) requires disjoint, deterministic,
data-independent owner partitions - a structural property not
present in group-wise clipping for dense models, where groups are
fixed by architecture rather than by private routing decisions.
Subspace, low-rank, and noise-shaping methods are orthogonal to our
framework and could in principle be applied within each role-aware
stream; we leave this to future work.

\subsection{Partition-Based DP Composition}
\label{app:partition-dp}

Parallel composition over disjoint data partitions is a classical
result in differential privacy~\cite{mcsherry2007mechanism}, and has
been applied in federated learning~\cite{mcmahan2017communication},
local DP~\cite{kasiviswanathan2011can}, and multi-party
computation~\cite{dwork2010boosting}. Our use of parallel composition
differs in two respects. First, the partition is induced by a
private routing mechanism: owner groups $\{\mathcal{D}_{l,e}\}$ are
determined by the frozen router applied to private inputs, raising
the question of whether the partition itself leaks information.
We sidestep this by fixing the router before private training and
computing the owner map once from the frozen pretrained router, so
the partition is a deterministic function of public model weights
and private inputs - analogous to a data-independent hash partition
conditioned on the (public) router~\cite{vadhan2017complexity}.
Second, the partition changes across layers; we handle this via
sequential composition across layers with stage-fixed owner maps
(Sec.~\ref{sec:composition-alternating}).

Data-dependent partitions - where the partition itself depends on
the private dataset - require additional privacy accounting for the
partitioning step~\cite{dwork2006our,rogers2016max}. Our design
avoids this: the owner map $a_l$ is a deterministic function of the
\emph{frozen public router} and the private inputs, and the router
parameters are public. The privacy cost of computing $a_l$ is zero
(it is post-processing of the public router applied to private
inputs, producing internal bookkeeping that is never released). This
is stated formally in Appendix~\ref{app:privacy-setting} and is the
key distinction from settings where the partition must itself be
privately released~\cite{smith2011privacy}.

\subsection{Sparse Mixture-of-Expert Models}
\label{app:sparse-moe}

Sparse MoE layers replace a dense feed-forward block with a collection of $E$ experts and a router that activates only a small and token-dependent subset, separating parameter count from per-token compute~\cite{shazeer2017outrageously}. The recipe scales through GShard~\cite{lepikhin2020gshard}, the Switch Transformer~\cite{fedus2022switch}, and GLaM~\cite{du2022glam}, and is employed by open flagship LLMs such as Mixtral~\cite{jiang2024mixtral}, DeepSeek-V3~\cite{deepseekai2025deepseekv3}, and OLMoE~\cite{muennighoff2025olmoe}, the last of which we fine-tune in Sec.~\ref{sec:olmoe-results}. A central design axis is token-to-expert assignment: token-choice top-$k$ gating~\cite{shazeer2017outrageously,fedus2022switch} leaves per-expert load uncontrolled, whereas expert-choice routing~\cite{zhou2022expertchoice} and deterministic schemes such as BASE layers~\cite{lewis2021base} and Hash layers~\cite{roller2021hash} enforce balance by changing the assignment mechanism.

Learned routing tends to collapse onto a handful of experts~\cite{chi2022representation}, which has motivated remedies from auxiliary losses that balance load across the batch~\cite{shazeer2017outrageously,fedus2022switch} to the bias adjustment of DeepSeek~\cite{wang2024auxfree}, which balances load without any auxiliary loss. Each of these depends on routing statistics aggregated over the batch and on the realized load of each expert. These quantities depend on private data and thus break the sensitivity of individual records that DP-SGD~\cite{abadi2016deep} requires. For this reason, we freeze the router, drop $\mathcal{L}_{\text{aux}}$ (Sec.~\ref{sec:moe-background}), and normalize by the public denominator $D_e=B/E$ rather than the realized count (Sec.~\ref{sec:expert-stream}). A complementary line of work isolates shared knowledge in experts that are always active so that the routed experts can specialize~\cite{dai2024deepseekmoe,rajbhandari2022deepspeedmoe}. This is exactly the asymmetry our framework makes explicit: experts that see every record join the shared stream, while only the conditionally routed experts enter the expert stream (Sec.~\ref{sec:moe-background}). Routing entropy has also been a way to measure how evenly experts are used in language and vision MoEs~\cite{riquelme2021vmoe}. We reuse it to select a layer from public data at no privacy cost (Sec.~\ref{sec:theory}) and connect it to the bias of the estimator based on the public denominator (Theorem~\ref{thm:agg-bias}).

\onecolumn
\section{Theory Analysis}
\label{app:theory}

\subsection{Public and Private Quantities}
\label{app:privacy-setting}

The privacy guarantee is with respect to $\mathcal{D}_{\mathrm{priv}}$.
Public quantities include: the pretrained checkpoint, frozen router,
candidate layer set, $B$, $E$, $C_s$, $C_e$, $\sigma_s$, $\sigma_e$,
the alternating update schedule, and $\mathcal{C}_{\mathrm{pub}}$
(disjoint from $\mathcal{D}_{\mathrm{priv}}$). All DP statements are
conditioned on these and on $l^\star$.

The owner map $a_l:\mathcal{X}\to[E]$ and the realized owner counts
$|\mathcal{D}_{l,e}|$ are \emph{private internal quantities}: $a_l$
is computed once before private training using only the frozen public
router, is fixed for the duration of the stage, and is never
released. Realized counts $|\mathcal{B}_{t,l,e}|$ are never read
by the update rule, never released, and never used in branching
logic. Under add/remove adjacency, a one-record change affects
exactly one owner group, which is the structural fact that enables
Lemma~\ref{lem:owner_parallel}.
If the router or checkpoint were trained on
$\mathcal{D}_{\mathrm{priv}}$ without DP, that step would incur
additional privacy cost. Our protocol avoids this via a frozen public
router and public-only layer selection.

\subsection{Per-Update Sensitivity}
\label{app:sensitivity}

\paragraph{Shared stream.}
Each per-record gradient is clipped to $C_s$, so the clipped sum
changes by at most $C_s$ under add/remove. Normalizing by $B$:
\begin{equation}
\Delta_{\mathrm{sh}} = \frac{C_s}{B}.
\label{eq:sensitivity-shared}
\end{equation}

\paragraph{Expert stream.}
Each per-record contribution is clipped to $C_e$; normalizing by
$D_e=B/E$:
\begin{equation}
\Delta_{l,e} = \frac{C_e}{D_e} = \frac{EC_e}{B}.
\label{eq:sensitivity-expert}
\end{equation}
This is independent of the realized count $|\mathcal{B}_{t,l,e}|$.
Residual weights $w_i$ do not affect sensitivity since they enter
before clipping; the clipped contribution is at most $C_e$
regardless of $w_i\in[0,1]$.

\paragraph{Adam.}
Moment updates are deterministic functions of privatized gradients
and the public schedule, hence post-processing that consumes no
additional budget.

\subsection{Proof of Lemma~\ref{lem:owner_parallel}}
\label{app:owner_parallel_proof}
\begin{remark}[Adaptive alternation does not break parallel composition]
\label{rem:adaptive}
Lemma~\ref{lem:owner_parallel} requires each $M_{l,e}$ to depend only
on $\mathcal{D}_{l,e}$ and public quantities. We verify this holds
under alternating updates in three steps.

\emph{(i) Model state as post-processing.}
At any step $t$, the model parameters $\theta_t$ are a deterministic
function of the sequence of previously released privatized gradients
$\{\bar{g}_1^{\mathrm{sh}},\bar{g}_{1,l,e}^{\mathrm{exp}},\dots,
\bar{g}_{t-1}^{\mathrm{sh}},\bar{g}_{t-1,l,e}^{\mathrm{exp}}\}$
and the public learning rate schedule. By the post-processing
property of DP, $\theta_t$ is $(\varepsilon_{<t},\delta_{<t})$-DP
with respect to any record, where $(\varepsilon_{<t},\delta_{<t})$
is the privacy cost of the steps up to $t-1$. Conditioning on
$\theta_t$ in the subsequent expert-stream step does not consume
additional budget beyond what is already accounted for in the
sequential composition across steps.

\emph{(ii) Residual weights are pre-clipping.}
The residual weight $w_i = 1 - \operatorname{softmax}
(z_i^{\mathrm{sh}})_{y_i}$ depends on $\theta_t$ and the private
record $(x_i, y_i)$. However, $w_i$ is applied \emph{before}
clipping to $C_e$, so the clipped per-record contribution
$\mathrm{clip}(w_i g_{i,t,l,e}^{\mathrm{exp}}, C_e)$ has sensitivity
at most $C_e$ regardless of $w_i \in [0,1]$
(Appendix~\ref{app:sensitivity}, Eq.~\eqref{eq:sensitivity-expert}).
The expert mechanism $M_{l,e}$ therefore remains a function only of
$\mathcal{D}_{l,e}$, the public $\theta_t$-as-post-processing, and
public quantities. Condition~(iii) of Lemma~\ref{lem:owner_parallel}
is satisfied.

\emph{(iii) No data-dependent control flow.}
The update schedule - which experts are updated, in what order, and
at which steps - is fixed before training and depends only on public
quantities ($E$, $T$, the alternating pattern). Empty owner subsets
receive a noise-only update rather than being skipped; the decision to apply this update does not
branch on any private quantity. Realized expert counts
$|\mathcal{B}_{t,l,e}|$ are never read, released, or used in
branching logic.
\end{remark}
\begin{proof}
Let $\mathcal{D},\mathcal{D}'$ differ by one record $(x_0,y_0)$,
which belongs to exactly one group $e_0=a_l(x_0)$ by
conditions~(i)--(ii). For all $e\neq e_0$, the group
$\mathcal{D}_{l,e}$ is identical under both datasets, so by
condition~(iii) the output distribution of $M_{l,e}$ is unchanged.
The joint $M_l$ therefore differs only through $M_{l,e_0}$, which
is $(\varepsilon_{l,e_0},\delta_{l,e_0})$-DP. Taking the worst case
over $e_0$ gives the stated bound.
We additionally verify that $M_{l,e}$ satisfies condition~(iii)
under alternating optimization. The expert mechanism at step $t$
takes as input: (a)~the owner subset
$\mathcal{B}_{t,l,e}\subseteq\mathcal{D}_{l,e}$, (b)~the current
model state $\theta_t$, and (c)~public hyperparameters. Since
$\theta_t$ is a deterministic post-processing of previously
privatized gradients (Remark~\ref{rem:adaptive}), it constitutes
a public quantity conditioned on the released mechanism outputs.
No private quantity outside $\mathcal{D}_{l,e}$ enters $M_{l,e}$
directly; the only cross-group information flows through the
DP-protected $\theta_t$, which is already accounted for in the
sequential composition across steps (Sec.~\ref{sec:composition-alternating}).
\end{proof}
\subsection{Proof of Lemma~\ref{lem:residual-parallel}}
\begin{lemma}[Residual weighting does not break parallel composition]
\label{lem:residual-parallel}
Fix step $t$ and layer $\ell^{\star}$.
Let $\theta_t$ be the model state after $t-1$ complete alternating
cycles, and let
\begin{equation}
  w_i \;=\; 1 - \mathrm{softmax}\!\bigl(f_{\mathrm{sh}}(x_i;\,\theta_t)\bigr)_{y_i}
  \;\in\; [0,1]
\end{equation}
be the residual weight for record $i$.
Define the expert mechanism at step $t$ as
\begin{equation}
\begin{aligned}
  M_{\ell,e}^{(t)}(D_{\ell,e})
  \;=\;&\;
  \frac{1}{D_e}
  \Biggl(
    \sum_{i \in B_{t,\ell,e}}
      \mathrm{clip}\!\bigl(
        w_i \cdot g^{\mathrm{exp}}_{i,t,\ell,e},\;C_e
      \bigr)\\
  &\quad
  +\;\mathcal{N}\!\left(0,\,\sigma_e^2 C_e^2\,I\right)
  \Biggr).
\end{aligned}
\end{equation}
Then $M_{\ell,e}^{(t)}$ satisfies condition~(iii) of Lemma~\ref{lem:owner_parallel}:
it depends on $D_{\ell,e}$ and \emph{effectively public} quantities,
enabling parallel composition across owner groups.
\end{lemma}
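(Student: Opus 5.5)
The plan is to prove the lemma in the adaptive-composition framework. Condition~(iii) is read conditionally on the transcript of previously released outputs: $\theta_t$ counts as a public input to $M_{\ell,e}^{(t)}$. Its privacy cost is already charged by the sequential composition across earlier steps in Eq.~\eqref{eq:prv-composition}, as in Remark~\ref{rem:adaptive}(i). With this in place, it suffices to show that, once we fix $\theta_t$, the public hyperparameters $(B,E,C_e,\sigma_e)$ and the owner map $a_{\ell^\star}$, the output distribution of $M_{\ell,e}^{(t)}$ is a function of $D_{\ell,e}$ alone. We also need its sensitivity under add/remove of a record in $D_{\ell,e}$ to be $EC_e/B$.

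\textbf{Step 1: per-record summands.} For each summand, I would note that $w_i$ is a deterministic function of $(x_i,y_i)$ and $\theta_t$ only: it passes through $f_{\mathrm{sh}}(x_i;\theta_t)$ and the label $y_i$. Likewise, $g^{\mathrm{exp}}_{i,t,\ell,e}$ is a function of $(x_i,y_i)$, $\theta_t$, and the stop-gradient structure of Eq.~\eqref{eq:owner-mask}. Neither depends on any other record, because there is no batch statistic (no $\mathcal{L}_{\mathrm{aux}}$, frozen router). Hence each summand $h_i=\mathrm{clip}(w_i g^{\mathrm{exp}}_{i,t,\ell,e},C_e)$ is a per-record map $\phi_{\theta_t}(x_i,y_i)$ with $\|h_i\|_2\le C_e$, whatever $w_i\in[0,1]$ is, by the definition of $\mathrm{clip}$.

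\textbf{Step 2: the subsample.} Under Poisson sampling, $B_{t,\ell,e}=\{i\in D_{\ell,e}:u_i^{\mathrm{exp}}=1\}$. Its law depends only on $D_{\ell,e}$ and on independent coins whose distribution $\mathrm{Bern}(p)$ is public. Because the coins are independent across records, restricting to $D_{\ell,e}$ is well defined and unaffected by records outside the group. The noise $\mathcal{N}(0,\sigma_e^2C_e^2I)$ and the denominator $D_e=B/E$ are public. By Remark~\ref{rem:adaptive}(iii), the mechanism runs for every $e$ with no branching on $|B_{t,\ell,e}|$. This gives condition~(iii).

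\textbf{Step 3: sensitivity.} Adding or removing one record $(x_0,y_0)$ with $a_{\ell^\star}(x_0)=e$ changes the pre-noise sum by at most one term $h_0$. Dividing by $D_e$ then gives sensitivity $EC_e/B$, matching Eq.~\eqref{eq:sensitivity-expert}. Subsampled-Gaussian accounting therefore applies, and Lemma~\ref{lem:owner_parallel} yields parallel composition.

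The main obstacle is Step~1's reliance on $\theta_t$. Since $\theta_t$ is computed from all of $\mathcal{D}_{\mathrm{priv}}$, the claim that it is ``effectively public'' must be stated carefully. I would handle it by fixing an arbitrary realization of the past transcript and proving the per-step statement conditionally; the resulting per-step privacy bounds hold uniformly over transcripts. The adaptive composition theorem (the PRV convolution of Sec.~\ref{sec:composition-alternating}) then lifts these conditional guarantees to the full interactive mechanism, so the dependence of $\theta_t$ on other groups is paid for once, sequentially, rather than inside the parallel step.
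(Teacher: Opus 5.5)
Your proposal is correct and follows essentially the same route as the paper's proof. Conditioning on the released transcript, so that $\theta_t$ acts as a public input, is the paper's Part~A. Your Steps~1--2 correspond to Part~B's direct/indirect channel split: per-record summands depend only on $(x_i,y_i)$ and $\theta_t$, and the subsample is drawn only from $D_{\ell,e}$. Your Step~3 is Part~C, where clipping after the multiplication by $w_i$ gives sensitivity $EC_e/B$. Your point that the per-step subsampled-Gaussian bound holds uniformly over transcripts, which lets adaptive PRV composition apply, is the one the paper makes in Remark~\ref{rem:no-double-count}.
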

 
\begin{proof}
We proceed in three parts.
 
\smallskip
\noindent\textbf{Part~A: $\theta_t$ is a released \textsc{dp} output,
not a free cross-group channel.}
 
By the post-processing property of differential privacy
\citep{dwork2014algorithmic}, any deterministic function of an
$(\varepsilon,\delta)$-DP output is itself $(\varepsilon,\delta)$-DP.
The sequence of released privatized gradients up to step $t$ is
\begin{equation}
\begin{aligned}
  \tau_{<t} \;=\;
  \bigl\{\;
    &\bar{g}^{\mathrm{sh}}_1,\;
     \{\bar{g}^{\mathrm{exp}}_{1,\ell,e}\}_{e\in[E]},\\
    &\;\ldots,\;
     \bar{g}^{\mathrm{sh}}_{t-1},\;
     \{\bar{g}^{\mathrm{exp}}_{t-1,\ell,e}\}_{e\in[E]}
  \;\bigr\}.
\end{aligned}
\end{equation}
By sequential composition (Sec.~\ref{sec:composition-alternating}), $\tau_{<t}$ is
$(\varepsilon_{<t},\delta_{<t})$-DP, and this cost is already fully
accounted for in $\omega_{\mathrm{tot}}$ (Eq.~\ref{eq:prv-composition}).
The model state
$\theta_t = \mathrm{optimizer}(\tau_{<t},\,\text{public schedule})$
is a deterministic function of $\tau_{<t}$ and public
hyperparameters; by post-processing it is therefore
$(\varepsilon_{<t},\delta_{<t})$-DP and carries \emph{no additional
privacy cost} beyond what sequential composition already charges.
 
Crucially, $\theta_t$ is a function of \emph{already-released}
mechanism outputs, not a new query into $D_{\mathrm{priv}}$.
Conditioning on $\theta_t$ in the definition of
$M_{\ell,e}^{(t)}$ is therefore equivalent to conditioning on a
public side-channel whose privacy cost is already captured.
 
\smallskip
\noindent\textbf{Part~B: $w_i$ does not introduce cross-group
record dependencies at step $t$.}
 
We decompose the dependence of $M_{\ell,e}^{(t)}$ on
$D_{\mathrm{priv}}$ into two channels.
 
\emph{(i) Direct channel.}
$M_{\ell,e}^{(t)}$ reads records in $B_{t,\ell,e} \subseteq D_{\ell,e}$
directly to compute per-record gradients.
By the disjoint ownership condition (Sec.~5.2), no record outside
$D_{\ell,e}$ appears in $B_{t,\ell,e}$.
 
\emph{(ii) Indirect channel via $\theta_t$.}
The residual weight $w_i$ depends on $\theta_t$, which was influenced
by records outside $D_{\ell,e}$ through prior shared-stream steps.
However, this influence has already been privatized: every record
$x_j \notin D_{\ell,e}$ that contributed to $\theta_t$ did so only
through the released privatized gradients $\tau_{<t}$, whose privacy
cost is captured in $\varepsilon_{<t}$.
The weight
\begin{equation}
  w_i
  \;=\;
  1 - \mathrm{softmax}\!\bigl(f_{\mathrm{sh}}(x_i;\,\theta_t)\bigr)_{y_i}
\end{equation}
is therefore a post-processing of previously released DP outputs
composed with the private record $(x_i, y_i)$, where
$(x_i, y_i) \in D_{\ell,e}$ by construction.
No record $x_j \notin D_{\ell,e}$ is read afresh at step $t$ through
$w_i$; its influence is confined to the already-accounted $\theta_t$.
 
\smallskip
\noindent\textbf{Part~C: Sensitivity is unaffected by $w_i$.}
 
Since $w_i \in [0,1]$ and clipping is applied \emph{after}
multiplication,
\begin{equation}
  \bigl\|
    \mathrm{clip}\!\bigl(w_i \cdot g^{\mathrm{exp}}_{i,t,\ell,e},\;C_e\bigr)
  \bigr\|_2
  \;\leq\; C_e
  \qquad \forall\, w_i \in [0,1].
\end{equation}
The per-update sensitivity $\Delta_{\ell,e} = EC_e/B$
(Eq.~(25)) holds regardless of $w_i$.
The noise magnitude $\sigma_e C_e$ is calibrated to $\Delta_{\ell,e}$,
so the Gaussian mechanism guarantees $(\varepsilon_e,\delta_e)$-DP on
$D_{\ell,e}$ with the same parameters as the $w_i = 1$ case.
 
\smallskip
\noindent\textbf{Conclusion.}
$M_{\ell,e}^{(t)}$ reads $D_{\ell,e}$ directly (Parts~A--B), has
bounded sensitivity (Part~C), and its dependence on records outside
$D_{\ell,e}$ is fully mediated through the DP-protected $\theta_t$
whose cost is already accounted for in sequential composition.
Condition~(iii) of Lemma~\ref{lem:owner_parallel} is therefore satisfied, and parallel
composition applies.
\end{proof}
 
\begin{remark}[No double-counting of privacy budget]
\label{rem:no-double-count}
One might worry that using $\theta_t$ to compute $w_i$ ``uses up''
additional privacy budget beyond $\varepsilon_{<t}$.
This is not the case.
The post-processing property guarantees that \emph{any} computation
on a DP output-including evaluating the model on a private record
$(x_i, y_i)$ to obtain $w_i$-does not increase privacy cost,
provided the result is used only inside the per-record clipping
operation and is never released directly.
Here, $w_i$ is used solely as a scalar multiplier before
$\mathrm{clip}(\,\cdot\,,C_e)$; it is never released, logged, or used
in control flow.
The only released quantity at step $t$ is $\bar{g}^{\mathrm{exp}}_{t,\ell,e}$,
whose privacy guarantee is certified by the Gaussian mechanism with
sensitivity $\Delta_{\ell,e}$.

Composition across steps \emph{is} adaptive in the standard sense:
each step's mechanism depends on $\theta_t$, itself a function of
previously released outputs. What makes sequential composition via
the PRV accountant (Eq.~\ref{eq:prv-composition}) valid here is not
the absence of adaptivity but that every step's mechanism is a
subsampled Gaussian with the \emph{same, data-independent} parameters
$(C_e,\sigma_e,D_e)$ fixed before training
(Sec.~\ref{sec:alternating}). Each release is therefore governed
by the same privacy-loss distribution regardless of the realized
$\theta_t$ or $w_i$, exactly as in the moments-accountant analysis of
DP-SGD - so standard PRV composition applies without a joint
privacy-loss variable spanning steps.
\end{remark}
\subsection{Proof of Theorem~\ref{thm:bias-var}}
\label{app:bias_var_proof}
\begin{proof}
Write $S_e=\sum_{i\in\mathcal{D}_e}b_i h_i^{\mathrm{exp}}$ with
$b_i\overset{\mathrm{iid}}{\sim}\mathrm{Bernoulli}(p)$. Then
$\mathbb{E}S_e=pn_e\mu_e=Bq_e\mu_e$, so
\begin{equation}
\mathbb{E}\bar{g}_{t,e}^{\mathrm{exp}}
=\tfrac{E}{B}\mathbb{E}S_e=Eq_e\mu_e,
\end{equation}
giving bias $(Eq_e-1)\mu_e$ and $\beta_e^2=(Eq_e-1)^2\|\mu_e\|_2^2$.
Bernoulli sampling gives
$\mathrm{Cov}(S_e)=p(1-p)n_e(\mu_e\mu_e^\top+\Sigma_e)$.
Since $\xi_{t,e}\perp S_e$,
\begin{equation}
\mathrm{Cov}(\bar{g}_{t,e}^{\mathrm{exp}})
=\frac{E^2}{B^2}\bigl[p(1-p)n_e
(\mu_e\mu_e^\top+\Sigma_e)+\sigma_e^2C_e^2I_d\bigr].
\end{equation}
Taking traces and applying the bias-variance identity yields
Eqs.~\eqref{eq:bias-term}--\eqref{eq:dp-var}.
\end{proof}

\subsection{Proof of Theorem~\ref{thm:agg-bias}}
\label{app:agg_bias_proof}
\begin{proof}
From Theorem~\ref{thm:bias-var},
$\beta_e^2 \le \bar{G}^2(Eq_e-1)^2$.

\paragraph{Eq.~\eqref{eq:unweighted-bias} (unweighted average).}
Recall $\chi^2(q\|u)=E\sum_e(q_e-1/E)^2
=E^{-1}\sum_e(Eq_e-1)^2$,
so $\sum_e(Eq_e-1)^2=E\chi^2(q\|u)$.  Averaging:
\[
  \tfrac{1}{E}\!\sum_e\beta_e^2
  \le\tfrac{\bar{G}^2}{E}\!\sum_e(Eq_e-1)^2
  =\bar{G}^2\chi^2(q\|u).
\]

\paragraph{Eq.~\eqref{eq:weighted-bias} (frequency-weighted).}
Because $\beta_e^2\le\bar{G}^2(Eq_e-1)^2$,
\begin{align}
\sum_e q_e\beta_e^2
  &\le \bar{G}^2\!\sum_e q_e(Eq_e-1)^2.
  \label{eq:weighted-step1}
\end{align}
Expanding $(Eq_e-1)^2=E^2q_e^2-2Eq_e+1$:
\begin{align}
\sum_e q_e(Eq_e\!-\!1)^2
  &= E^2\!\sum_e q_e^3
    -2E\!\sum_e q_e^2+1.
  \label{eq:weighted-expand}
\end{align}
Applying $q_e^3\le q_e^2$ (since $q_e\le 1$):
\begin{align}
\eqref{eq:weighted-expand}
  &\le(E^2\!-\!2E)\!\sum_e q_e^2+1.
  \label{eq:weighted-cubic}
\end{align}
The identity $\chi^2(q\|u)=E\sum_e q_e^2-1$ gives
$\sum_e q_e^2=(\chi^2(q\|u)+1)/E$, so:
\begin{align}
\eqref{eq:weighted-cubic}
  &=(E\!-\!2)\chi^2(q\|u)+(E\!-\!1).
  \label{eq:weighted-chi}
\end{align}
Combining \eqref{eq:weighted-step1}--\eqref{eq:weighted-chi}:
\begin{equation}
  \sum_e q_e\beta_e^2
  \le
  \bar{G}^2\!\left[(E\!-\!2)\chi^2(q\|u)+(E\!-\!1)\right].
  \label{eq:weighted-tight}
\end{equation}
Since $E\ge 2$ both coefficients are non-negative.
A coarser path via $q_e\le 1$ directly gives:
\begin{align}
\sum_e q_e(Eq_e\!-\!1)^2
  &\le\sum_e(Eq_e\!-\!1)^2
   =E\chi^2(q\|u),
   \label{eq:weighted-coarse}
\end{align}
recovering $\sum_e q_e\beta_e^2\le\bar{G}^2 E\chi^2(q\|u)$,
i.e.\ Eq.~\eqref{eq:weighted-bias}.
Bound~\eqref{eq:weighted-tight} is tighter for small
$\chi^2$; bound~\eqref{eq:weighted-coarse} is simpler
and suffices for the entropy criterion in
Sec.~\ref{sec:theory}.\qedhere
\end{proof}
\subsection{SGD-Surrogate Implication}
\label{app:sgd-surrogate}

Let $F_{\mathrm{bal}}$ be $L$-smooth and PL with parameter $\beta>0$.
For bias $b_t$ and noise $z_t$ with
$\mathbb{E}\|z_t\|_2^2\le V$,
$\mathbb{E}\|b_t\|_2^2\le B_{\mathrm{imb}}$,
and $\eta\le1/L$:
\begin{align}
&\mathbb{E}[F_{\mathrm{bal}}(\theta_T)]-F_{\mathrm{bal}}^*
\notag\\
&\quad\le(1-\eta\beta)^T\Delta_0
  +O\!\left(\tfrac{L\eta V}{\beta}\right)
  +O\!\left(\tfrac{LB_{\mathrm{imb}}}{\beta^2}\right),
\label{eq:surrogate-risk}
\end{align}
where $\Delta_0=F_{\mathrm{bal}}(\theta_0)-F_{\mathrm{bal}}^*$.
Bounding $B_{\mathrm{imb}}$ via Theorem~\ref{thm:agg-bias} and
Eq.~\eqref{eq:entropy-upper} shows routing imbalance contributes an
error vanishing at uniform routing.
\emph{This is an explanatory bound within a surrogate model, not a
convergence guarantee for DP-Adam on non-convex MoE networks.}

\subsection{Scaling of the DP Term}
\label{app:dp-scaling}

Under balanced routing, Corollary~\ref{cor:balanced-error} gives
$\sqrt{\mathbb{E}\|\bar{g}_{t,e}^{\mathrm{exp}}-\mu_e\|_2^2}
\le C_e\sqrt{E/B}+E\sigma_eC_e\sqrt{d}/B$.
With $\sigma_e=\Theta(\sqrt{\log(1/\delta)}/\varepsilon_{\mathrm{step}})$,
the privacy term scales as
\begin{equation}
\frac{EC_e\sqrt{d\log(1/\delta)}}{B\,\varepsilon_{\mathrm{step}}},
\end{equation}
matching the standard per-expert DP mean-estimation rate up to log
factors when $B=\Theta(n)$. This is a single-estimator result, not
a convergence guarantee for DP-Adam.

\subsection{Stability of expert ownership}
\label{app:stability}
This appendix formalizes the stability claim of
Sec.~\ref{sec:ownership-stability}: because the owner map
$a_{l^\star}$ is frozen before training and never recomputed, we ask
how far the induced ownership partition can drift as the upstream
representation $\phi=\theta_{\mathrm{sh}}(\cdot)$ evolves, under a
Voronoi abstraction of ownership.
\begin{definition}[Voronoi expert ownership]
\label{def:voronoi_owner}
Let $\phi:\mathcal X\to\mathbb R^m$ be a frozen representation map and let $C=\{c_e\}_{e=1}^E\subset\mathbb R^m$ be expert prototypes. The Voronoi cell of expert $e$ is $\mathcal V_e=\{z:\|z-c_e\|_2^2\le \|z-c_j\|_2^2,\forall j\in[E]\}$. The induced owner map is $a_C(x)=\arg\min_{e\in[E]}\|\phi(x)-c_e\|_2^2$, with deterministic tie-breaking. The routing mass is $q_e=\mathbb P(a_C(X)=e)$.
\end{definition}

\begin{definition}[Voronoi margin and boundary regularity]
\label{def:voronoi_regu}
For $z=\phi(x)$, let $a=a_C(x)$ and define the margin $\gamma_C(z)=\min_{j\neq a}\{\|z-c_j\|_2^2-\|z-c_a\|_2^2\}$. Let $\Delta_C=\max_{j,k}\|c_j-c_k\|_2$. We say that the representation distribution satisfies $(\kappa,C)$-boundary regularity if $\mathbb P(\gamma_C(\phi(X))\le t)\le \kappa t$ for all $t>0$.
\end{definition}

\begin{definition}[Perturbed ownership]
\label{def:pertube}
Given another representation map $\tilde\phi:\mathcal X\to\mathbb R^m$, define $\tilde a_C(x)=\arg\min_{e\in[E]}\|\tilde\phi(x)-c_e\|_2^2$ and $\tilde q_e=\mathbb P(\tilde a_C(X)=e)$, with the same deterministic tie-breaking.
\end{definition}

\begin{theorem}[Voronoi stability of expert ownership]
\label{thm:stab}
If $\|\tilde\phi(x)-\phi(x)\|_2\le r$ for all $x$, then $\tilde a_C(x)=a_C(x)$ for every $x$ such that $\gamma_C(\phi(x))>2\Delta_C r$. Moreover, under $(\kappa,C)$-boundary regularity, $\mathbb P(\tilde a_C(X)\neq a_C(X))\le 2\kappa\Delta_C r$ and $\|\tilde q-q\|_1\le 4\kappa\Delta_C r$.
\end{theorem}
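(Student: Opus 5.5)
The plan rests on one observation: for a fixed pair of prototypes, the squared-distance gap is \emph{affine} in the representation. For $z\in\mathbb R^m$, $a=a_C(x)$ and $j\neq a$, write
\[
f_j(z)=\|z-c_j\|_2^2-\|z-c_a\|_2^2 = 2\langle z,\,c_a-c_j\rangle+\|c_j\|_2^2-\|c_a\|_2^2 .
\]
The quadratic terms $\|z\|_2^2$ cancel. Hence for $z=\phi(x)$ and $\tilde z=\tilde\phi(x)$,
\[
|f_j(\tilde z)-f_j(z)|=2|\langle \tilde z-z,\,c_a-c_j\rangle|\le 2\|\tilde z-z\|_2\,\|c_a-c_j\|_2\le 2r\Delta_C
\]
by Cauchy--Schwarz, the uniform drift bound, and the definition of $\Delta_C$ in Definition~\ref{def:voronoi_regu}. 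This Lipschitz control of every pairwise gap is the only geometric input I need.

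\textbf{First claim (unchanged owner).} Suppose $\gamma_C(\phi(x))=\min_{j\neq a}f_j(z)>2\Delta_C r$. Then every $f_j(\tilde z)\ge f_j(z)-2\Delta_C r>0$. So $c_a$ is the \emph{strictly} unique nearest prototype to $\tilde z$, and $\tilde a_C(x)=a$ regardless of the tie-breaking rule (Definition~\ref{def:pertube}). Tie-breaking is the one place I expect care to be needed. Strictness of the inequality removes it on the non-flip side, and on the flip side I only need the contrapositive: $\tilde a_C(x)\neq a_C(x)$ implies $\gamma_C(\phi(x))\le 2\Delta_C r$. This holds whichever way ties are broken.

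\textbf{Second claim (reassignment probability).} The contrapositive gives the event inclusion $\{\tilde a_C(X)\neq a_C(X)\}\subseteq\{\gamma_C(\phi(X))\le 2\Delta_C r\}$. Applying $(\kappa,C)$-boundary regularity with $t=2\Delta_C r$ (if $r=0$ the first claim already makes the flip event empty up to the null case) yields $\mathbb P(\tilde a_C(X)\neq a_C(X))\le 2\kappa\Delta_C r$.

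\textbf{Third claim (routing-mass drift).} For each $e$,
\[
|\tilde q_e-q_e|\le \mathbb P(a_C(X)=e,\ \tilde a_C(X)\neq e)+\mathbb P(\tilde a_C(X)=e,\ a_C(X)\neq e).
\]
Summing over $e$, each flipped record is counted exactly twice: once under its old owner and once under its new one. So $\|\tilde q-q\|_1\le 2\,\mathbb P(\tilde a_C(X)\neq a_C(X))\le 4\kappa\Delta_C r$. Apart from the tie-breaking bookkeeping, every step is a one-line consequence of the affine identity, so I expect no substantive obstacle.
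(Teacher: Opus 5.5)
Your proposal is correct and follows the paper's proof step for step. Both arguments use the affine expansion of the pairwise gap, Cauchy--Schwarz against $\Delta_C$, the event inclusion $\{\tilde a_C(X)\neq a_C(X)\}\subseteq\{\gamma_C(\phi(X))\le 2\Delta_C r\}$ combined with boundary regularity at $t=2\Delta_C r$, and the double-counting argument giving $\|\tilde q-q\|_1\le 2\,\mathbb P(\tilde a_C(X)\neq a_C(X))$; your remarks on tie-breaking and the $r=0$ case are small points the paper leaves implicit.
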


\begin{proof}
Fix $x\in\mathcal X$ and write $z=\phi(x)$, 
$\tilde z=\tilde\phi(x)$, and $a=a_C(x)$. For each $j\neq a$, define
\[
h_{a,j}(u)
=
\|u-c_j\|_2^2-\|u-c_a\|_2^2 .
\]
Since $a=\arg\min_e\|z-c_e\|_2^2$, $h_{a,j}(z)\ge 0$ for all $j\neq a$, and
\[
\gamma_C(z)=\min_{j\neq a}h_{a,j}(z).
\]
Expanding the squared norms gives
\[
\begin{aligned}
h_{a,j}(u)
&=
-2\langle u,c_j\rangle+\|c_j\|_2^2
+2\langle u,c_a\rangle-\|c_a\|_2^2  \\
&=
2\langle u,c_a-c_j\rangle
+\|c_j\|_2^2-\|c_a\|_2^2 .
\end{aligned}
\]
Therefore
\[
\begin{aligned}
h_{a,j}(\tilde z)-h_{a,j}(z)
&=
2\langle \tilde z-z,c_a-c_j\rangle .
\end{aligned}
\]
By Cauchy--Schwarz and $\Delta_C=\max_{p,q}\|c_p-c_q\|_2$,
\[
\begin{aligned}
h_{a,j}(\tilde z)
&=
h_{a,j}(z)
+2\langle \tilde z-z,c_a-c_j\rangle \\
&\ge
h_{a,j}(z)
-2\|\tilde z-z\|_2\|c_a-c_j\|_2 \\
&\ge
h_{a,j}(z)-2r\Delta_C .
\end{aligned}
\]
Taking the minimum over $j\neq a$ yields
\[
\begin{aligned}
\min_{j\neq a}h_{a,j}(\tilde z)
&\ge
\min_{j\neq a}h_{a,j}(z)-2r\Delta_C  \\
&=
\gamma_C(z)-2r\Delta_C .
\end{aligned}
\]
Thus, if $\gamma_C(z)>2r\Delta_C$, then $h_{a,j}(\tilde z)>0$ for every $j\neq a$. Equivalently,
\[
\|\tilde z-c_a\|_2^2
<
\|\tilde z-c_j\|_2^2,
\qquad \forall j\neq a .
\]
Hence $\tilde a_C(x)=a_C(x)$. Therefore
\[
\{\tilde a_C(X)\neq a_C(X)\}
\subseteq
\{\gamma_C(\phi(X))\le 2r\Delta_C\}.
\]
By $(\kappa,C)$-boundary regularity,
\[
\begin{aligned}
\mathbb P(\tilde a_C(X)\neq a_C(X))
&\le
\mathbb P(\gamma_C(\phi(X))\le 2r\Delta_C) \\
&\le
2\kappa r\Delta_C .
\end{aligned}
\]

It remains to control the routing mass. Let $A=a_C(X)$ and $\tilde A=\tilde a_C(X)$. For each expert $e$,
\[
\begin{aligned}
|\tilde q_e-q_e|
&=
\left|
\mathbb P(\tilde A=e)-\mathbb P(A=e)
\right| \\
&=
\left|
\mathbb E[
\mathbf 1\{\tilde A=e\}
-
\mathbf 1\{A=e\}]
\right| \\
&\le
\mathbb E
\left|
\mathbf 1\{\tilde A=e\}
-
\mathbf 1\{A=e\}
\right| .
\end{aligned}
\]
The two indicators differ only when the assignments disagree, and more explicitly,
\[
\begin{aligned}
\left|
\mathbf 1\{\tilde A=e\}
-
\mathbf 1\{A=e\}
\right|
&\le
\mathbf 1\{\tilde A=e,A\neq e\}  \\
&\quad+
\mathbf 1\{A=e,\tilde A\neq e\}.
\end{aligned}
\]
Summing over $e$ gives
\[
\begin{aligned}
\|\tilde q-q\|_1
&=
\sum_{e=1}^E |\tilde q_e-q_e| \\
&\le
\sum_{e=1}^E
\mathbb P(\tilde A=e,A\neq e)  \\
&\quad+
\sum_{e=1}^E
\mathbb P(A=e,\tilde A\neq e).
\end{aligned}
\]
Since exactly one value of $e$ satisfies $\tilde A=e$ and exactly one value satisfies $A=e$,
\[
\sum_{e=1}^E
\mathbb P(\tilde A=e,A\neq e)
=
\mathbb P(\tilde A\neq A),
\]
and
\[
\sum_{e=1}^E
\mathbb P(A=e,\tilde A\neq e)
=
\mathbb P(A\neq \tilde A).
\]
Consequently,
\[
\begin{aligned}
\|\tilde q-q\|_1
&\le
2\mathbb P(\tilde A\neq A) \\
&\le
4\kappa r\Delta_C .
\end{aligned}
\]
The theorem follows.
\end{proof}

\subsection{Voronoi-aware expert estimator Error}
\label{app:stab_error}
\begin{definition}[Voronoi cell radius and gradient field]
Let $Z_i=\phi(x_i)$ and $A_i=a_C(x_i)$. For each expert $e$, define
\[
\mathcal I_e=\{i:A_i=e\},\qquad n_e=|\mathcal I_e|,
\qquad q_e=n_e/n .
\]
Let $h_i=\clip(g_i^{\exp},C_e)\in\mathbb R^d$ be the clipped expert gradient, and define
\[
\mu_e=\frac{1}{n_e}\sum_{i\in\mathcal I_e}h_i,
\]
\[
\Sigma_e=
\frac{1}{n_e}
\sum_{i\in\mathcal I_e}
(h_i-\mu_e)(h_i-\mu_e)^\top .
\]
Assume there exists an $L_g$-Lipschitz field $\psi:\mathbb R^m\to\mathbb R^d$ such that $h_i=\psi(Z_i)$. Define
\[
\nu_e=\psi(c_e),
\qquad
r_e^2=
\frac{1}{n_e}
\sum_{i\in\mathcal I_e}
\|Z_i-c_e\|_2^2 .
\]
\end{definition}

\begin{definition}[Public-denominator expert estimator]
Under Poisson subsampling $S_i\sim{\rm Bernoulli}(p)$ with $B=pn$, define
\[
\bar g_e
=
\frac{E}{B}
\left(
\sum_{i\in\mathcal I_e}S_i h_i+\xi_e
\right),
\]
where
\[
\xi_e\sim
\mathcal N(0,\sigma_e^2C_e^2I_d).
\]
\end{definition}

\begin{theorem}[Voronoi-aware expert estimation error] 
\label{thm:stab_error}
For each expert $e$,
\[
\mathbb E\|\bar g_e-\nu_e\|_2^2
\le
V_e^{\rm samp}
+
V_e^{\rm DP}
+
2(Eq_e-1)^2\|\mu_e\|_2^2
+
2L_g^2r_e^2,
\]
where
\[
V_e^{\rm samp}
=
\frac{E^2p(1-p)n_e}{B^2}
\left(
\|\mu_e\|_2^2+\tr\Sigma_e
\right),
\]
and
\[
V_e^{\rm DP}
=
\frac{E^2\sigma_e^2C_e^2d}{B^2}.
\]
Consequently, if $\bar G^2=\max_e\|\mu_e\|_2^2$, then
\[
\begin{aligned}
\frac{1}{E}\sum_{e=1}^E
\mathbb E\|\bar g_e-\nu_e\|_2^2
&\le
\frac{1}{E}\sum_{e=1}^E
\left(
V_e^{\rm samp}+V_e^{\rm DP}
\right) \\
&\quad+
2\bar G^2\chi^2(q\|u)
+
\frac{2L_g^2}{E}
\sum_{e=1}^E r_e^2 .
\end{aligned}
\]
\end{theorem}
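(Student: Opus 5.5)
The approach is to decompose the error around the intermediate target $\mu_e$:
\[
\bar g_e-\nu_e=(\bar g_e-\mu_e)+(\mu_e-\nu_e).
\]
The first piece is exactly what Theorem~\ref{thm:bias-var} controls. The second piece is deterministic, since it depends only on the fixed dataset and prototypes. Expanding the squared norm and taking expectations gives
\[
\mathbb E\|\bar g_e-\nu_e\|_2^2=\mathbb E\|\bar g_e-\mu_e\|_2^2+2\langle \mathbb E\bar g_e-\mu_e,\mu_e-\nu_e\rangle+\|\mu_e-\nu_e\|_2^2.
\]
The cross term does not vanish, because $\bar g_e$ is biased: $\mathbb E\bar g_e=Eq_e\mu_e$. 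Rather than tracking it, I will split only the deterministic part. Write
\[
\bar g_e-\nu_e=(\bar g_e-\mathbb E\bar g_e)+\bigl[(Eq_e-1)\mu_e+(\mu_e-\nu_e)\bigr].
\]
The first bracket is mean zero and the second is deterministic, so the cross term is zero. The expectation of the squared first term is the trace of $\mathrm{Cov}(\bar g_e)$. By the covariance computation in the proof of Theorem~\ref{thm:bias-var}, this equals $V_e^{\rm samp}+V_e^{\rm DP}$: Bernoulli sampling contributes the first part, and the independent Gaussian $\xi_e$ contributes the second.

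For the deterministic remainder, apply $\|a+b\|_2^2\le 2\|a\|_2^2+2\|b\|_2^2$. This yields $2(Eq_e-1)^2\|\mu_e\|_2^2+2\|\mu_e-\nu_e\|_2^2$. It remains to bound $\|\mu_e-\nu_e\|_2^2$. Using $h_i=\psi(Z_i)$ and $\nu_e=\psi(c_e)$,
\[
\mu_e-\nu_e=\frac1{n_e}\sum_{i\in\mathcal I_e}\bigl(\psi(Z_i)-\psi(c_e)\bigr).
\]
By Jensen (convexity of $\|\cdot\|_2^2$) and the $L_g$-Lipschitz property,
\[
\|\mu_e-\nu_e\|_2^2\le \frac1{n_e}\sum_{i\in\mathcal I_e} L_g^2\|Z_i-c_e\|_2^2=L_g^2r_e^2.
\]
This gives the term $2L_g^2r_e^2$ and completes the per-expert bound. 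I need $n_e\ge1$ for $\mu_e$ and $r_e$ to be defined; empty cells can be handled by convention, since the sampling term then vanishes.

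For the averaged consequence, I will sum the per-expert bound over $e$ and divide by $E$. The bias terms are then handled exactly as in the proof of Theorem~\ref{thm:agg-bias}. Bound $\|\mu_e\|_2^2\le\bar G^2$ and use $\sum_e(Eq_e-1)^2=E\chi^2(q\|u)$, so that
\[
\frac{2}{E}\sum_e(Eq_e-1)^2\|\mu_e\|_2^2\le 2\bar G^2\chi^2(q\|u).
\]
The remaining terms pass through unchanged.

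The only delicate step is the first one. If I split naively around $\mu_e$ and apply the factor-2 inequality to everything, the variance terms would also be doubled. So the key choice is to center at $\mathbb E\bar g_e$, which makes the stochastic cross term vanish exactly. The factor 2 then lands only on the two deterministic terms, matching the stated constants.
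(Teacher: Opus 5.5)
Your proposal is correct and follows the paper's proof essentially step for step. You center at $\mathbb E\bar g_e=Eq_e\mu_e$ so the cross term vanishes and the variance trace is exactly $V_e^{\rm samp}+V_e^{\rm DP}$; you apply $\|a+b\|_2^2\le 2\|a\|_2^2+2\|b\|_2^2$ only to the deterministic offset $(Eq_e-1)\mu_e+(\mu_e-\nu_e)$; you bound $\|\mu_e-\nu_e\|_2^2\le L_g^2r_e^2$ by Jensen and Lipschitzness; and you average using $\sum_e(Eq_e-1)^2=E\chi^2(q\|u)$, with your $n_e\ge 1$ caveat being a small point the paper leaves implicit.
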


\begin{proof}
Fix an expert $e$. All expectations below are taken over the Poisson
subsampling variables $\{S_i\}$ and the Gaussian noise $\xi_e$, conditional
on the dataset. By Definition 6,
\[
\bar g_e
=
\frac{E}{B}
\sum_{i\in\mathcal I_e}S_i h_i
+
\frac{E}{B}\xi_e .
\]
Since $S_i\sim{\rm Bernoulli}(p)$ and $\mathbb E[\xi_e]=0$,
\[
\begin{aligned}
\mathbb E[\bar g_e]
&=
\frac{E}{B}
\sum_{i\in\mathcal I_e}\mathbb E[S_i]h_i \\
&=
\frac{Ep}{B}
\sum_{i\in\mathcal I_e}h_i .
\end{aligned}
\]
Using $B=pn$ and $\sum_{i\in\mathcal I_e}h_i=n_e\mu_e$, we obtain
\[
\mathbb E[\bar g_e]
=
\frac{En_e}{n}\mu_e
=
Eq_e\mu_e .
\]
Hence the bias relative to the cell mean is
\[
\mathbb E[\bar g_e]-\mu_e
=
(Eq_e-1)\mu_e .
\]

We next compute the variance. Independence of the Bernoulli variables gives
\[
\begin{aligned}
{\rm Var}
\left(
\frac{E}{B}
\sum_{i\in\mathcal I_e}S_i h_i
\right)
&=
\frac{E^2}{B^2}
\sum_{i\in\mathcal I_e}
{\rm Var}(S_i h_i).
\end{aligned}
\]
Since $h_i$ is fixed conditional on the dataset,
\[
{\rm Var}(S_i h_i)
=
p(1-p)h_i h_i^\top .
\]
Therefore,
\[
\begin{aligned}
{\rm tr}\,
{\rm Var}
\left(
\frac{E}{B}
\sum_{i\in\mathcal I_e}S_i h_i
\right)
&=
\frac{E^2p(1-p)}{B^2}
\sum_{i\in\mathcal I_e}\|h_i\|_2^2 .
\end{aligned}
\]
By the empirical second-moment identity,
\[
\frac{1}{n_e}
\sum_{i\in\mathcal I_e}\|h_i\|_2^2
=
\|\mu_e\|_2^2+{\rm tr}\,\Sigma_e .
\]
Thus
\[
{\rm tr}\,
{\rm Var}
\left(
\frac{E}{B}
\sum_{i\in\mathcal I_e}S_i h_i
\right)
=
V_e^{\rm samp}.
\]
The DP noise is independent of the sampling term, and
\[
\begin{aligned}
{\rm tr}\,
{\rm Var}
\left(
\frac{E}{B}\xi_e
\right)
&=
\frac{E^2}{B^2}
{\rm tr}
\left(
\sigma_e^2C_e^2I_d
\right) \\
&=
\frac{E^2\sigma_e^2C_e^2d}{B^2}
=
V_e^{\rm DP}.
\end{aligned}
\]
Therefore,
\[
\mathbb E
\left\|
\bar g_e-\mathbb E[\bar g_e]
\right\|_2^2
=
V_e^{\rm samp}+V_e^{\rm DP}.
\]

Now decompose the error relative to the prototype gradient $\nu_e$:
\[
\begin{aligned}
\mathbb E\|\bar g_e-\nu_e\|_2^2
&=
\mathbb E
\left\|
\bar g_e-\mathbb E[\bar g_e]
\right\|_2^2
+
\left\|
\mathbb E[\bar g_e]-\nu_e
\right\|_2^2 .
\end{aligned}
\]
Using $\mathbb E[\bar g_e]=Eq_e\mu_e$,
\[
\mathbb E[\bar g_e]-\nu_e
=
(Eq_e-1)\mu_e+(\mu_e-\nu_e).
\]
Hence, by $\|a+b\|_2^2\le 2\|a\|_2^2+2\|b\|_2^2$,
\[
\begin{aligned}
\left\|
\mathbb E[\bar g_e]-\nu_e
\right\|_2^2
&\le
2(Eq_e-1)^2\|\mu_e\|_2^2
+
2\|\mu_e-\nu_e\|_2^2 .
\end{aligned}
\]

It remains to control $\|\mu_e-\nu_e\|_2^2$. Since $h_i=\psi(Z_i)$ and
$\nu_e=\psi(c_e)$,
\[
\mu_e-\nu_e
=
\frac{1}{n_e}
\sum_{i\in\mathcal I_e}
\left(
\psi(Z_i)-\psi(c_e)
\right).
\]
By Jensen's inequality,
\[
\begin{aligned}
\|\mu_e-\nu_e\|_2^2
&\le
\frac{1}{n_e}
\sum_{i\in\mathcal I_e}
\|\psi(Z_i)-\psi(c_e)\|_2^2 .
\end{aligned}
\]
Since $\psi$ is $L_g$-Lipschitz,
\[
\|\psi(Z_i)-\psi(c_e)\|_2^2
\le
L_g^2\|Z_i-c_e\|_2^2 .
\]
Therefore,
\[
\begin{aligned}
\|\mu_e-\nu_e\|_2^2
&\le
\frac{L_g^2}{n_e}
\sum_{i\in\mathcal I_e}
\|Z_i-c_e\|_2^2 \\
&=
L_g^2r_e^2 .
\end{aligned}
\]
Combining the preceding bounds gives
\[
\mathbb E\|\bar g_e-\nu_e\|_2^2
\le
V_e^{\rm samp}
+
V_e^{\rm DP}
+
2(Eq_e-1)^2\|\mu_e\|_2^2
+
2L_g^2r_e^2 .
\]

We now prove the aggregate bound. Let
$\bar G^2=\max_e\|\mu_e\|_2^2$. Then
\[
\begin{aligned}
\frac{1}{E}
\sum_{e=1}^E
(Eq_e-1)^2\|\mu_e\|_2^2
&\le
\frac{\bar G^2}{E}
\sum_{e=1}^E
(Eq_e-1)^2 .
\end{aligned}
\]
Since $u_e=1/E$,
\[
\begin{aligned}
\chi^2(q\|u)
&=
\sum_{e=1}^E
\frac{(q_e-u_e)^2}{u_e}  \\
&=
E\sum_{e=1}^E
\left(q_e-\frac{1}{E}\right)^2 .
\end{aligned}
\]
Moreover,
\[
\begin{aligned}
\frac{1}{E}
\sum_{e=1}^E
(Eq_e-1)^2
&=
E
\sum_{e=1}^E
\left(q_e-\frac{1}{E}\right)^2  \\
&=
\chi^2(q\|u).
\end{aligned}
\]
Therefore,
\[
\frac{1}{E}
\sum_{e=1}^E
(Eq_e-1)^2\|\mu_e\|_2^2
\le
\bar G^2\chi^2(q\|u).
\]
Averaging the per-expert inequality over $e$ yields
\[
\begin{aligned}
\frac{1}{E}
\sum_{e=1}^E
\mathbb E\|\bar g_e-\nu_e\|_2^2
&\le
\frac{1}{E}
\sum_{e=1}^E
\left(
V_e^{\rm samp}+V_e^{\rm DP}
\right) \\
&\quad+
2\bar G^2\chi^2(q\|u)
+
\frac{2L_g^2}{E}
\sum_{e=1}^E r_e^2 .
\end{aligned}
\]
This proves the theorem.
\end{proof}

\subsection{Scope of the Bias-SNR Connection (Remark 2 details)}
\label{app:remark2-detail}
This appendix expands the two claims of
Remark~\ref{rem:adam-invariance}.

\textbf{Non-monotonicity.} $\beta_e^2=(Eq_e-1)^2\|\mu_e\|_2^2$ is
symmetric around $q_e=1/E$: an expert at $q_e=2/E$ has the same
$\beta_e^2$ as one at $q_e=0$, though these are opposite outcomes
for training. $\beta_e$ measures deviation from a normalization
convention, not training quality.

\textbf{Impossibility of an entropy bound on SNR.} No function $f$
satisfies $1/\mathrm{SNR}_e\le f(H(q))$ for all valid $(q,e)$. Since
$\chi^2(q\|u)\in[0,E{-}1]$ for every valid $q$, take $q_e=\delta$ for
one expert and $q_{e'}=(1-\delta)/(E{-}1)$ for the rest. As
$\delta\to0^+$, $\chi^2(q\|u)\to 1/(E{-}1)$ - near its own
\emph{minimum} - while $H(q)\to\log(E{-}1)$, near-maximal, and
$\mathrm{SNR}_e\propto q_e^2\to0$. A starved expert's contribution to
the aggregate divergence is only $O(1/E)$, so it can coexist with
near-maximal entropy while its own SNR vanishes; no finite $f$ can
bound $1/\mathrm{SNR}_e$ by $H(q)$.

\onecolumn
\section{Experiment Details}
\label{app:exp_detail}

\paragraph{Code and reproducibility.}
All experiments use PyTorch~\cite{DBLP:conf/nips/PaszkeGMLBCKLGA19}
with PRV accounting~\cite{gopi2021numerical} via
Opacus~\cite{DBLP:journals/corr/abs-2109-12298} v1.6.0
(Python 3.12, CUDA 12.6, PyTorch 2.9).

\paragraph{Hardware and training.}
Each trial runs on a single B200 (196\,GB).
At each optimiser step, every record
$i \in \mathcal{D}_{\mathrm{priv}}$ is independently included
in $\mathcal{B}_t^{\mathrm{exp}}$ with probability $p = B/n$
, giving an \emph{expected}
batch size of $B = pn$.
Because the realised $|\mathcal{B}_t^{\mathrm{exp}}|$ may be large, we accumulate
per-record clipped gradients across memory-efficient microbatches
\emph{within} a single optimiser step; no gradients are accumulated
\emph{across} steps.
The denominators $B$ (shared stream) and $D_e = B/E$
(expert stream) are fixed expected values independent of
the realised $|\mathcal{B}_t^{\mathrm{exp}}|$.
Privacy budgets are computed via the \textsc{prv} accountant
\citep{gopi2021numerical} with Poisson rate $p$ and
add/remove adjacency.

\paragraph{Hyperparameters and tuning.}
Primary hyperparameters are epochs, batch size $B$, learning
rate $\eta$, clipping threshold $C$, and noise multiplier
$\sigma_{\mathrm{DP}}$ (set by the PRV accountant).
Each method-including Baselines~A, B, and~C-is tuned
\emph{independently} on SST-2 using the same search grid
(Tab.~\ref{tab:search_grid_cv}) and the same budget of
$N_{\mathrm{total}}=100$ trials; the best-performing
configuration per method is then frozen for MNLI, QNLI,
and QQP.
SST-2 results therefore reflect in-task tuning performance,
while the remaining three tasks measure cross-task transfer
under a fixed, task-agnostic configuration.
Tab.~\ref{tab:search_grid_cv} lists the search grid;
selected values are bolded.

Specifically, Baseline~A treats $C$ and $\sigma_{\mathrm{DP}}$
as free variables over the full grid and does \emph{not}
inherit $C_s$ from the proposed method; the PRV accountant
targets the full budget $\varepsilon$ (i.e.\ $\rho=1$) and
solves for $\sigma_{\mathrm{DP}}$ jointly with $C$ via binary
search, matching the accounting procedure of the proposed
method.
Baselines~B and~C likewise search over all grid values of
$\eta$, $C$, $B$, and epochs independently.
The proposed method applies LoRA to the shared
stream~$\theta_{\mathrm{sh}}$ and each expert~$\theta_{l^\star,e}$; the role-aware mechanism
(ownership map, public denominator~$D_e$, residual objective)
is independent of this parameterisation choice.
\begin{table}[h]

\centering
\small
\begin{tabular}{lc}
\toprule
Hyperparameter & Search grid \\
\midrule
Epochs          & $\{10,\mathbf{20},30\}$ \\
Batch size $B$  & $\{500,\mathbf{1000},2000,3000\}$ \\
Learning rate $\eta$ & $\{10,\mathbf{5},3,1,0.5\}\times10^{-4}$ \\
Clipping $C$    & $\{0.1,\mathbf{1.0},10\}$ \\
\midrule
Budget split $\rho$      & $\{0.7,0.8,\mathbf{0.9},0.95\}$ \\

\bottomrule

\end{tabular}
\caption{Hyperparameter search grid (SST-2). Selected values bolded;
configuration frozen for all other tasks.}
\label{tab:search_grid_cv}

\end{table}

\paragraph{Clipping and $\delta$.}
We use per-record $\ell_2$ clipping
$\mathrm{clip}(g,C)=g\cdot\min(1,C/\|g\|_2)$
and set $\delta=1/N^{1.1}$ for each task.

\onecolumn
\section{More Expriments on Other Models}
\label{app:more_results}
\label{sec:deepseek-vl2-results}
\begin{figure}[h]
\centering
\includegraphics[width=0.85\linewidth]{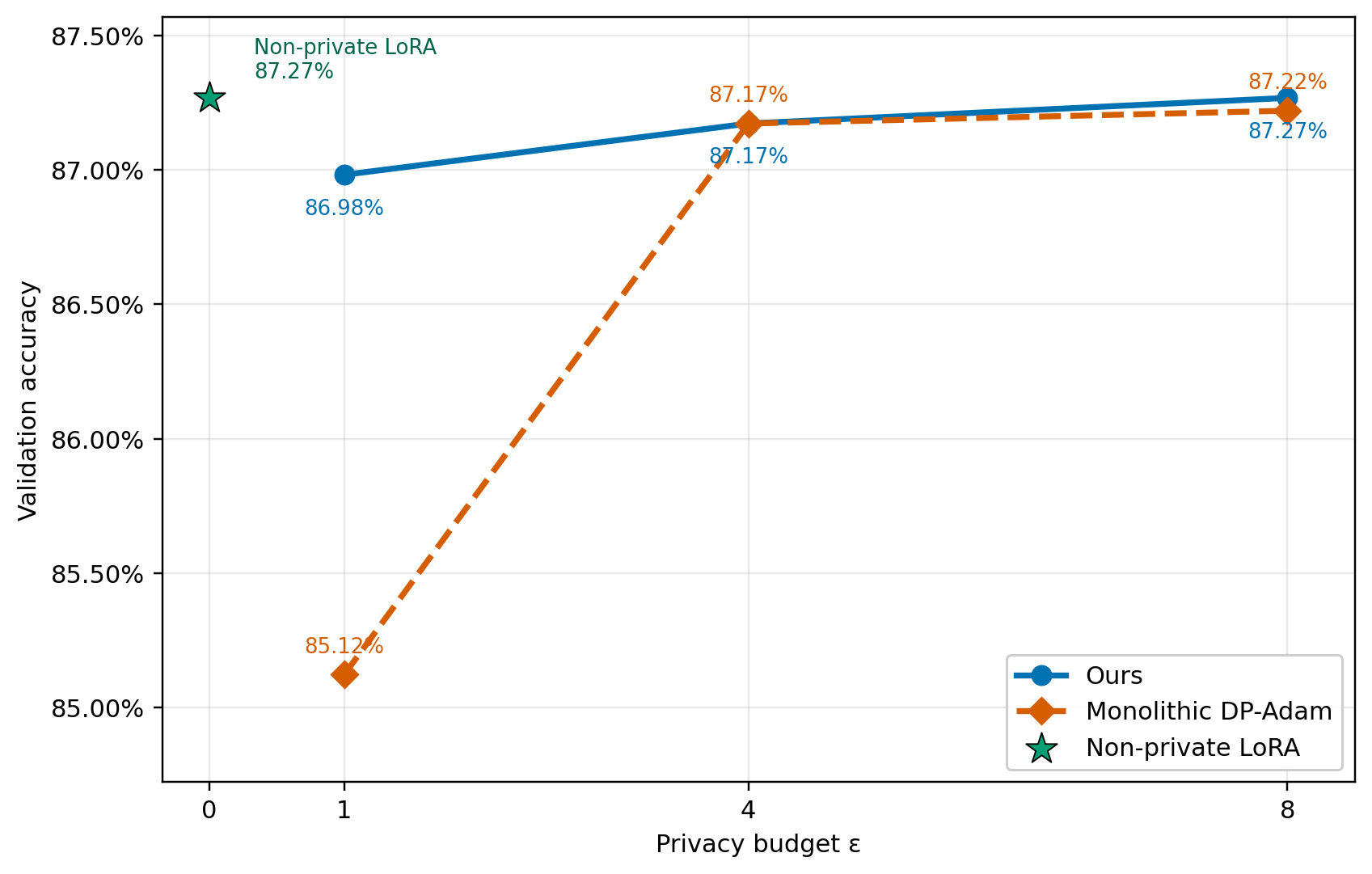}
\caption{Validation accuracy on ScienceQA for \textsc{DeepSeek-VL2-Tiny}
across $\varepsilon\in\{1,4,8\}$. Single run per $\varepsilon$.}
\label{fig:deepseek_vl2}
\end{figure}
As a secondary check of generalization beyond text-only encoder MoEs,
we fine-tune \texttt{DeepSeek-VL2-Tiny} \cite{wu2024deepseek} on ScienceQA \cite{saikh2022scienceqa}, a vision-language
task. \texttt{DeepSeek-VL2-Tiny} has both always-active shared experts
and routed experts (Sec.~\ref{sec:preliminaries}); shared experts
fold into the shared stream, and \Raptor\,  is
applied to the last routed-expert layer as in the main experiments.
No disjoint, format-matched public corpus was available for this
setting, so we protect the last sparse layer by fixed default rather
than via entropy-based selection (Sec.~\ref{sec:alternating}).
Fig.~\ref{fig:deepseek_vl2} reports
validation accuracy at $\varepsilon\in\{1,4,8\}$. At $\varepsilon=1$,
Ours reaches $86.98\%$ vs.\ $85.12\%$ for Monolithic DP-Adam LoRA
(\textcolor{custom_darkgreen}{$+1.86$}), consistent with the high-noise pattern on Switch and
OLMoE; the gap closes with $\varepsilon$, tying at $4$ ($87.17\%$) and
marginally exceeding non-private LoRA at $8$ ($87.27\%$ vs.\
$87.22\%$) - a single-run artifact (Reporting,
Sec.~\ref{sec:exp-setup}) rather than evidence that private
training exceeds its non-private upper bound. That gains persist
under a fixed-layer fallback, without entropy-based selection,
suggests \Raptor's core mechanism does not
strictly depend on that selection step (Sec.~\ref{sec:public-layer-selection}).

\onecolumn
\section{Ablation Study}
\label{app:ablation}
\subsection{Component Ablation}
\label{sec:component-ablation}
Tab.~\ref{tab:switch_main_results} attributes \textbf{Ours}'s overall gains to
the update rule as a whole, but does not show which of the four
components introduced in Sec.~\ref{sec:method} -
expert-specific clipping, the public denominator $D_e{=}B/E$, the
alternating shared/expert schedule, and the residual objective - is
doing the work. To isolate each one's contribution, we ablate \textbf{Ours} on
\textsc{Switch-base-8} by removing or replacing exactly one component
at a time while holding the entropy-selected sparse layer fixed:
\emph{w/ $D_e{=}B$} replaces the public denominator with the
realized-count-free but naive full-batch denominator;
\emph{w/ full-model clip} replaces separate shared/expert clipping
with a single global bound $C$; \emph{w/o alternating} updates shared
and expert parameters jointly in one step rather than in alternation
(Sec.~\ref{sec:alternating}); and \emph{w/o residual obj.} drops
the residual weighting $w_i$ (Eq.~\eqref{eq:residual-weight}), training
the expert stream on the raw classification loss instead. Tab.~\ref{tab:ablation_combined}
reports this ablation at $\varepsilon\approx 8$ and $\varepsilon=1$
side by side, so that each component's contribution can be compared
directly across privacy budgets.

The \emph{alternating schedule} is the largest contributor: removing it
costs up to $1.97$ pts at $\varepsilon\approx 8$ (QNLI) and $2.09$ pts at
$\varepsilon=1$ (QQP), and on QNLI/QQP at $\varepsilon\approx 8$ the
no-alternating variant falls \emph{below} the matched-scope global
baseline (Tab.~\ref{tab:switch_main_results}), showing that role-aware
clipping and normalization alone are not sufficient without the
two-stream schedule. The public denominator $D_e=B/E$ contributes
$0.05$-$0.98$ pts, with the largest single gain on QNLI; consistent
with Corollary~\ref{cor:balanced-error}, this gain is proportionally
larger relative to the other components at $\varepsilon=1$, where the
bias from $D_e=B$ under high noise is more damaging. Expert-specific
clipping contributes $0.06$-$0.62$ pts, and the residual objective
contributes $0.04$-$0.66$ pts, smallest on SST-2, where the shared
stream alone suffices for sentiment classification. Relative to
$\varepsilon\approx 8$, component contributions grow at $\varepsilon=1$
on SST-2, MNLI, and QQP but shrink on QNLI: the high-noise
amplification suggested by Corollary~\ref{cor:balanced-error} is
task-dependent, not uniform. Together, all components yield
$0.23$-$3.24$ pts over Monolithic DP-Adam LoRA
(Tab.~\ref{tab:switch_main_results}).

\begin{table}[h]
\centering
\small
\begin{tabular}{lcccccccc}
\toprule
& \multicolumn{4}{c}{$\varepsilon\approx 8$} & \multicolumn{4}{c}{$\varepsilon=1$} \\
\cmidrule(lr){2-5}\cmidrule(lr){6-9}
Method & SST-2 & MNLI & QNLI & QQP & SST-2 & MNLI & QNLI & QQP \\
\midrule
Monolithic DP-Adam LoRA   & $93.92$ & $79.36$ & $82.48$ & $82.68$ & $90.94$ & $76.91$ & $80.38$ & $81.19$ \\

\rowcolor{custom_light_purple_2}
\Raptor\,  (full)         & $\mathbf{94.15}$ & $\mathbf{81.32}$ & $\mathbf{85.26}$ & $\mathbf{85.92}$ & $\mathbf{92.90}$ & $\mathbf{78.42}$ & $\mathbf{83.30}$ & $\mathbf{84.08}$ \\

\quad w/\ $D_e=B$         & $94.10$ & $80.96$ & $84.28$ & $85.55$ & $92.83$ & $77.98$ & $82.79$ & $83.55$ \\
\quad w/\ full-model clip & $94.09$ & $81.25$ & $84.64$ & $85.84$ & $92.82$ & $78.34$ & $83.22$ & $83.99$ \\
\quad w/o alternating     & $94.02$ & $80.21$ & $83.29$ & $84.09$ & $92.75$ & $77.15$ & $81.68$ & $81.99$ \\
\quad w/o residual obj.   & $94.07$ & $81.22$ & $84.60$ & $85.82$ & $92.86$ & $78.30$ & $83.18$ & $83.96$ \\
\bottomrule
\end{tabular}%
\caption{Component ablation on \textsc{Switch-base-8} at
$\varepsilon\approx 8$ and $\varepsilon=1$. Bold: full \Raptor.
Each row removes or replaces one component; all variants use the
entropy-selected sparse layer.}
\label{tab:ablation_combined}
\end{table}

\subsection{Matched-Scope Diagnostics}
\label{sec:matched-scope}
The gains reported in Tab.~\ref{tab:switch_main_results} compare
\Raptor\, against Monolithic DP-Adam and Monolithic
DP-Adam LoRA, both of which differ from \textbf{Ours} along \emph{two} axes at
once: they use a different update rule (one clipping bound, one noise
scale, one denominator, applied to the full model) \emph{and} a
different training scope (all trainable parameters, rather than the
shared stream plus one selected expert layer).
This leaves an open question: how much of Ours's advantage comes from
the role-aware update rule itself - separate clipping, the public
denominator $D_e{=}B/E$, and the alternating schedule
- versus simply from training a smaller, more targeted set of
parameters under a matched privacy budget? To isolate this, we
construct three matched-scope baselines that fix Ours's layer, total
privacy budget, and step count, and vary \emph{only} the update rule
applied within that scope: Baseline A trains the same shared and
expert parameters as Ours but with one global clipping bound, one
noise scale, and one denominator (i.e.\ monolithic DP applied to
Ours's reduced scope); Baseline B trains only the shared stream, with
the selected expert layer frozen; and Baseline C trains only the
selected expert layer, with the shared stream frozen. Comparing Ours
against these three, rather than against the full-scope monolithic
baselines alone, lets us attribute the accuracy gains to the update
rule specifically, rather than to scope or parameter count.

Tab.~\ref{tab:matched_scope} reports Baselines A-C on
\textsc{Switch-base-8} (same layer, scope, budget, and steps as Ours;
update rule only varies; Sec.~\ref{sec:exp-setup}) alongside
Monolithic DP-Adam LoRA and Ours (entropy-selected) for reference,
across all three privacy budgets. Baseline A (global DP) isolates
whether matching training scope to Ours alone explains the gains; it
trails Ours by $0.36$--$3.85$ points on MNLI, QNLI, QQP at
$\varepsilon=8$, and by comparable margins at $\varepsilon\in\{1,4\}$
- confirming the remaining gap is attributable to role-aware
clipping, normalization, and the alternating schedule
(Tab.~\ref{tab:ablation_combined}), not scope alone. Baseline B
(shared-only, expert parameters frozen) is competitive with Ours on
SST-2 but trails on QNLI and QQP at every budget, showing expert
updates contribute beyond what the shared stream alone achieves.
Baseline C (expert-only, shared stream frozen) collapses toward
random accuracy on MNLI ($32$--$36\%$) and QNLI ($\approx\!50\%$)
across all budgets, confirming the shared stream is necessary, not
merely helpful.

\begin{table}[h]
\centering
\small
\begin{tabular}{llcccc}
\toprule
$\varepsilon$ & Method & SST-2 & MNLI & QNLI & QQP \\
\midrule

\multirow{5}{*}{$1$}
& Monolith.\ DP-Adam LoRA  & 90.94 & 76.91 & 80.38 & 81.19 \\
& Baseline A (global)      & 92.55 & 76.80 & 82.90 & 83.10 \\
& Baseline B (shared-only) & 91.80 & 76.40 & 82.50 & 82.90 \\
& Baseline C (expert-only) & 67.78 & 32.75 & 49.72 & 63.66 \\
& \cellcolor{custom_light_purple_2}\textbf{Ours, entropy-sel.}
& \cellcolor{custom_light_purple_2}\textbf{92.90}
& \cellcolor{custom_light_purple_2}\textbf{78.42}
& \cellcolor{custom_light_purple_2}\textbf{83.30}
& \cellcolor{custom_light_purple_2}\textbf{84.08} \\

\midrule

\multirow{5}{*}{$4$}
& Monolith.\ DP-Adam LoRA  & 92.66 & 78.33 & 81.95 & 82.26 \\
& Baseline A (global)      & 93.32 & 77.10 & 84.68 & 84.40 \\
& Baseline B (shared-only) & 93.20 & 77.70 & 84.68 & 84.78 \\
& Baseline C (expert-only) & 68.69 & 34.91 & 49.75 & 64.20 \\
& \cellcolor{custom_light_purple_2}\textbf{Ours, entropy-sel.}
& \cellcolor{custom_light_purple_2}\textbf{94.05}
& \cellcolor{custom_light_purple_2}\textbf{79.03}
& \cellcolor{custom_light_purple_2}\textbf{84.70}
& \cellcolor{custom_light_purple_2}\textbf{84.78} \\

\midrule

\multirow{5}{*}{$8$}
& Monolith.\ DP-Adam LoRA  & 93.92 & 79.36 & 82.48 & 82.68 \\
& Baseline A (global)      & 93.46 & 77.47 & 84.90 & 85.20 \\
& Baseline B (shared-only) & 93.50 & 78.10 & 84.83 & 85.50 \\
& Baseline C (expert-only) & 68.81 & 35.57 & 49.73 & 64.47 \\
& \cellcolor{custom_light_purple_2}\textbf{Ours, entropy-sel.}
& \cellcolor{custom_light_purple_2}\textbf{94.15}
& \cellcolor{custom_light_purple_2}\textbf{81.32}
& \cellcolor{custom_light_purple_2}\textbf{85.26}
& \cellcolor{custom_light_purple_2}\textbf{85.92} \\

\bottomrule
\end{tabular}%
\caption{Matched-scope diagnostics on \textsc{Switch-base-8}: Baselines
A--C share Ours's layer, scope, budget, and step count, varying only
the update rule (Sec.~\ref{sec:exp-setup}). Monolithic DP-Adam LoRA
and Ours (entropy-selected) shown for reference; full results
including first-/last-sparse variants and non-private upper bounds
are in Tab.~\ref{tab:switch_main_results}. Single run per cell.}
\label{tab:matched_scope}
\end{table}

\subsection{Sensitivity to the Shared/Expert Budget Split $\rho$}
\label{sec:hparam}

\begin{figure}[t]
    \centering
    \includegraphics[width=0.6\textwidth]{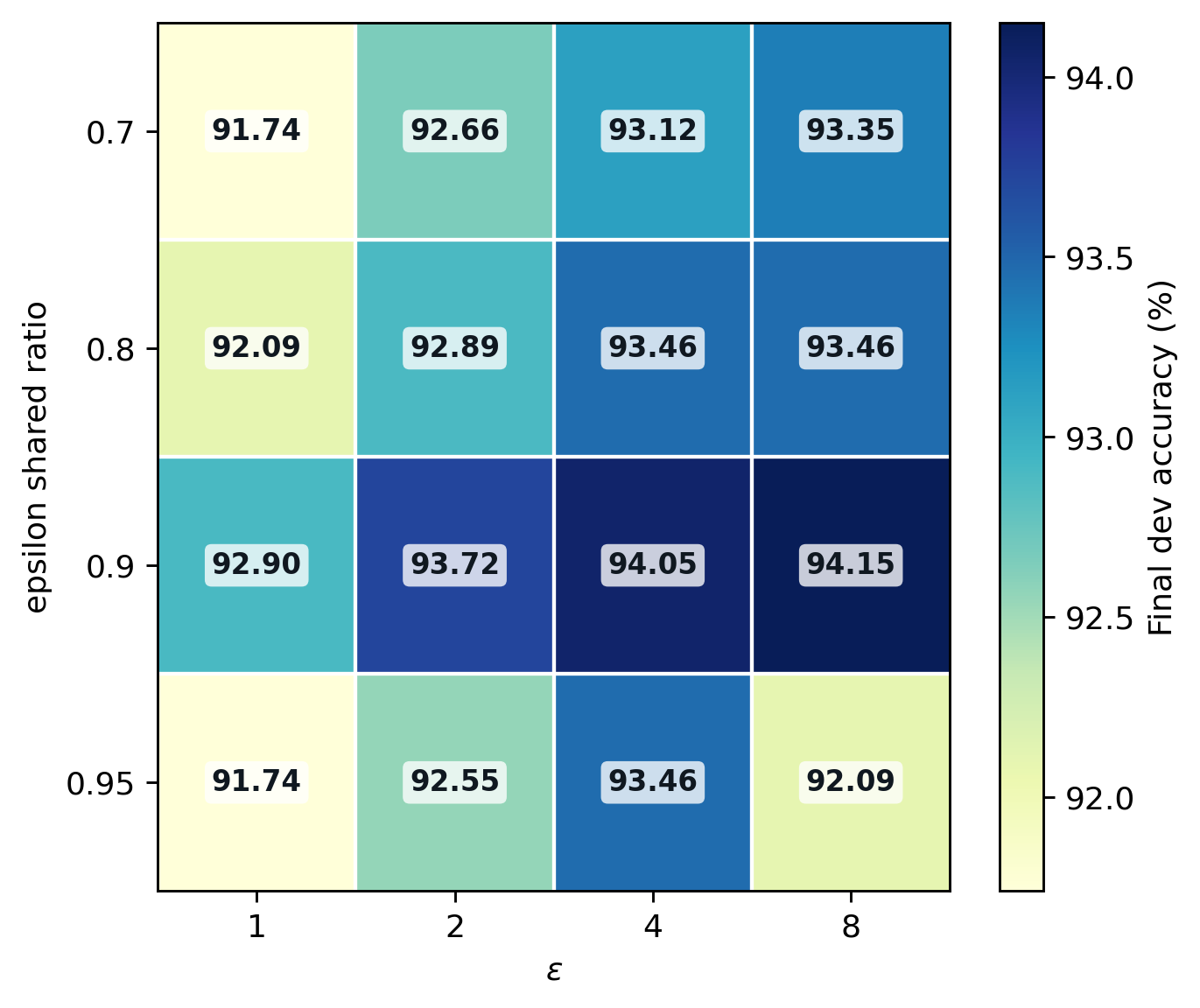}
    \caption{SST-2 accuracy (\%) of \Raptor\,  on
    \textsc{Switch-base-8} across $\varepsilon\in\{1,2,4,8\}$ and
    $\rho\in\{0.7,0.8,0.9,0.95\}$.}
    \label{fig:rho_sensitivity}
\end{figure}

\Raptor\, introduces one hyperparameter with no
analogue in monolithic DP fine-tuning: $\rho\in(0,1)$, which splits
the total privacy budget $\varepsilon$ between the shared stream
($\rho\varepsilon$) and the expert streams ($(1{-}\rho)\varepsilon$,
Sec.~\ref{sec:privacy}). Because the two streams have
different sensitivities and noise requirements - the expert stream's
noise variance carries an extra $E^2$ factor from the normalized
sensitivity $EC_e/B$ (Theorem~\ref{thm:bias-var}) - this split is not
a cosmetic tuning knob: an unfavorable choice of $\rho$ could starve
either stream of budget and erase the method's advantage entirely.
Proposition~\ref{prop:shared-dominant-rho} gives qualitative guidance
(shared-dominant $\rho$ under the surrogate analysis of
Appendix~\ref{app:budget-allocation}), but does not certify a specific
value; we therefore validate the choice of $\rho=0.9$ empirically here
by sweeping $\rho$ against $\varepsilon$ and checking that accuracy is
not unduly sensitive to this choice within a reasonable range.

Fig.~\ref{fig:rho_sensitivity} shows SST-2 accuracy across
$\rho\in\{0.7,0.8,0.9,0.95\}$ and $\varepsilon\in\{1,2,4,8\}$.
Accuracy increases monotonically with $\varepsilon$ for $\rho\le 0.9$;
at $\rho=0.95$ accuracy instead drops from $\varepsilon=4$ to
$\varepsilon=8$ ($93.46\to 92.09$), an inversion that only appears at
the most shared-dominant split tested. Sensitivity to $\rho$ is
highest at $\varepsilon=1$, where the tightest overall budget makes
misallocation between streams most costly, and shrinks at larger
$\varepsilon$, where both streams have enough budget that the split
matters less. Setting $\rho=0.95$ hurts at $\varepsilon=8$ ($92.09$
vs.\ $94.15$ for $\rho=0.9$), showing that under-budgeting the expert
stream degrades expert-specific learning even when the total budget
is generous. We use $\rho=0.9$ as the default throughout, as it is
uniformly at or near the best setting across all four values of
$\varepsilon$ tested.

\onecolumn
\section{Empirical Routing Entropy Analysis}
\label{app:entropy-analysis}

We verify whether routing entropy, estimated on external public corpora,
provides a reliable privacy-free signal for layer selection. Per
Sec.~\ref{sec:theory}, higher $H(q^{(l)})$ implies smaller
public-denominator bias at layer $l$.

\paragraph{Calibration corpora.}
For each task we use an external unlabeled corpus disjoint from all
GLUE splits and from $\mathcal{D}_{\mathrm{priv}}$: IMDb reviews for
SST-2~\cite{maas2011learning}, ANLI R2 pairs for
MNLI~\cite{nie2020adversarial}, SQuAD contexts for
QNLI~\cite{rajpurkar2016squad}. For QQP, we use PAWS-Wiki sentence pairs from the PAWS \texttt{labeled\_final} training split, with labels ignored. Exact
string-match verification finds zero pair overlap with GLUE QQP, QNLI, MNLI, SST-2 train.
\paragraph{Protocol.}
For each candidate layer $l\in\{0,\dots,5\}$ (encoder blocks
$\{1,3,5,7,9,11\}$), we apply the frozen \textsc{Switch-base-8} router to
$\mathcal{C}_{\mathrm{pub}}$ and compute the empirical owner distribution
\begin{equation}
q^{(l)}_e
=\frac{|\{x_i\in\mathcal{C}_{\mathrm{pub}}:a_l(x_i)=e\}|}
      {|\mathcal{C}_{\mathrm{pub}}|}.
\end{equation}
Tab.~\ref{tab:switch_public_calibration_all_sparse_layers} reports
$H(q)/\log E$, $\log E-H(q)$, and $\chi^2(q\|u)$ for all 24
layer-task combinations.

\paragraph{Results.}
Routing balance varies substantially across tasks and layers.
On SST-2, block~1 is by far the most balanced ($H/\log E=0.972$,
$\chi^2=0.118$), while block~9 collapses ($H/\log E=0.304$).
On MNLI, block~1 collapses ($\chi^2=4.946$) and block~7 is most
balanced ($H/\log E=0.682$, $\chi^2=1.262$).
On QNLI, block~11 leads ($H/\log E=0.902$, $\chi^2=0.393$).
On QQP, blocks~5 and~7 collapse ($\chi^2>2.9$) while block~9 is
most balanced ($H/\log E=0.837$, $\chi^2=0.683$).

Applying $l^\star=\arg\max_l H(q^{(l)})$ selects block~1 (SST-2),
block~7 (MNLI), block~11 (QNLI), and block~9 (QQP) - all
unambiguous, with margins of at least $0.029$ nats over the
runner-up. Within each task, entropy and $\chi^2$ criteria agree,
consistent with the Pinsker bound (Eq.~\eqref{eq:entropy-upper}).
A fixed ``last-layer'' heuristic would miss the optimal layer for
SST-2 (by $0.311$ nats) and QQP. These selections were fixed before
any private training; Sec.~\ref{sec:main-results} tests them
directly.

\begin{table*}[h]
\centering
\small
\begin{tabular}{llrccccc}
\toprule
Task & Calibration source & $n$
  & \makecell{Sparse\\layer}
  & \makecell{Encoder\\block}
  & $H(q)/\!\log E$ $\uparrow$
  & $\log E\!-\!H(q)$ $\downarrow$
  & $\chi^2(q\|u)$ $\downarrow$ \\
\midrule
MNLI  & ANLI R2       & 45{,}460  & 0 & 1  & 0.307 & 1.442 & 4.946 \\
MNLI  & ANLI R2       & 45{,}460  & 1 & 3  & 0.674 & 0.677 & 1.580 \\
MNLI  & ANLI R2       & 45{,}460  & 2 & 5  & 0.641 & 0.746 & 1.617 \\
MNLI  & ANLI R2       & 45{,}460  & 3 & 7  & \textbf{0.682} & \textbf{0.660} & \textbf{1.262} \\
MNLI  & ANLI R2       & 45{,}460  & 4 & 9  & 0.643 & 0.743 & 1.356 \\
MNLI  & ANLI R2       & 45{,}460  & 5 & 11 & 0.646 & 0.736 & 1.381 \\
\midrule
QNLI  & SQuAD context & 87{,}599  & 0 & 1  & 0.761 & 0.498 & 1.257 \\
QNLI  & SQuAD context & 87{,}599  & 1 & 3  & 0.881 & 0.246 & 0.387 \\
QNLI  & SQuAD context & 87{,}599  & 2 & 5  & 0.873 & 0.264 & 0.505 \\
QNLI  & SQuAD context & 87{,}599  & 3 & 7  & 0.852 & 0.307 & 0.543 \\
QNLI  & SQuAD context & 87{,}599  & 4 & 9  & 0.807 & 0.401 & 0.851 \\
QNLI  & SQuAD context & 87{,}599  & 5 & 11 & \textbf{0.902} & \textbf{0.204} & \textbf{0.393} \\
\midrule
QQP   &  PAWS-Wiki    & 49{,}289 & 0 & 1  & 0.727 & 0.568 & 1.481 \\
QQP   &  PAWS-Wiki    & 49{,}289 & 1 & 3  & 0.814 & 0.388 & 1.008 \\
QQP   &  PAWS-Wiki   & 49{,}289 & 2 & 5  & 0.574 & 0.885 & 2.922 \\
QQP   & PAWS-Wiki    & 49{,}289 & 3 & 7  & 0.498 & 1.044 & 3.379 \\
QQP   & PAWS-Wiki    & 49{,}289 & 4 & 9  & \textbf{0.837} & \textbf{0.338} & \textbf{0.683} \\
QQP   & PAWS-Wiki    & 49{,}289 & 5 & 11 & 0.825 & 0.364 & 0.721 \\
\midrule
SST-2 & IMDb unsup.   & 50{,}000  & 0 & 1  & \textbf{0.972} & \textbf{0.058} & \textbf{0.118} \\
SST-2 & IMDb unsup.   & 50{,}000  & 1 & 3  & 0.828 & 0.358 & 0.876 \\
SST-2 & IMDb unsup.   & 50{,}000  & 2 & 5  & 0.767 & 0.485 & 0.972 \\
SST-2 & IMDb unsup.   & 50{,}000  & 3 & 7  & 0.876 & 0.259 & 0.405 \\
SST-2 & IMDb unsup.   & 50{,}000  & 4 & 9  & 0.304 & 1.447 & 4.774 \\
SST-2 & IMDb unsup.   & 50{,}000  & 5 & 11 & 0.661 & 0.706 & 1.763 \\
\bottomrule
\end{tabular}
\caption{Routing statistics for all six sparse encoder layers of
\textsc{Switch-base-8} on external public corpora (labels ignored; disjoint
from GLUE splits and $\mathcal{D}_{\mathrm{priv}}$).
$H(q)/\log E$ ($\uparrow$), entropy deficit $\log E-H(q)$ in nats
($\downarrow$), and $\chi^2(q\|u)$ ($\downarrow$) for $E=8$ experts.
\textbf{Bold}: entropy-selected layer $l^\star$ per task.
All selections are unambiguous; no privacy budget is consumed.}
\label{tab:switch_public_calibration_all_sparse_layers}
\end{table*}
\paragraph{Corpus-size sensitivity.}
Fig.~\ref{fig:entropy_vs_n} and Fig.~\ref{fig:best_block_vs_n} show that
the minimum $n$ for stable layer selection correlates inversely with the
entropy margin between the top two layers
(Tab.~\ref{tab:switch_public_calibration_all_sparse_layers}): SST-2
stabilises at $n=10$ (margin $0.144$ nats), QNLI at $n\geq 100$
($0.021$ nats), MNLI at $n\geq 500$ ($0.008$ nats), and QQP at
$n\geq 1{,}000$ ($0.023$ nats, three-way near-tie).
All four tasks select the correct layer at $n=1{,}000$, under $2.2\%$
of the smallest calibration corpus used.
We recommend $n\geq 1{,}000$ as a conservative default; when the entropy
margin at that size exceeds $0.05$ nats, increasing $n$ further changes
neither the ranking nor the selected block.
\begin{figure*}[h]
  \centering
  \includegraphics[width=\linewidth]{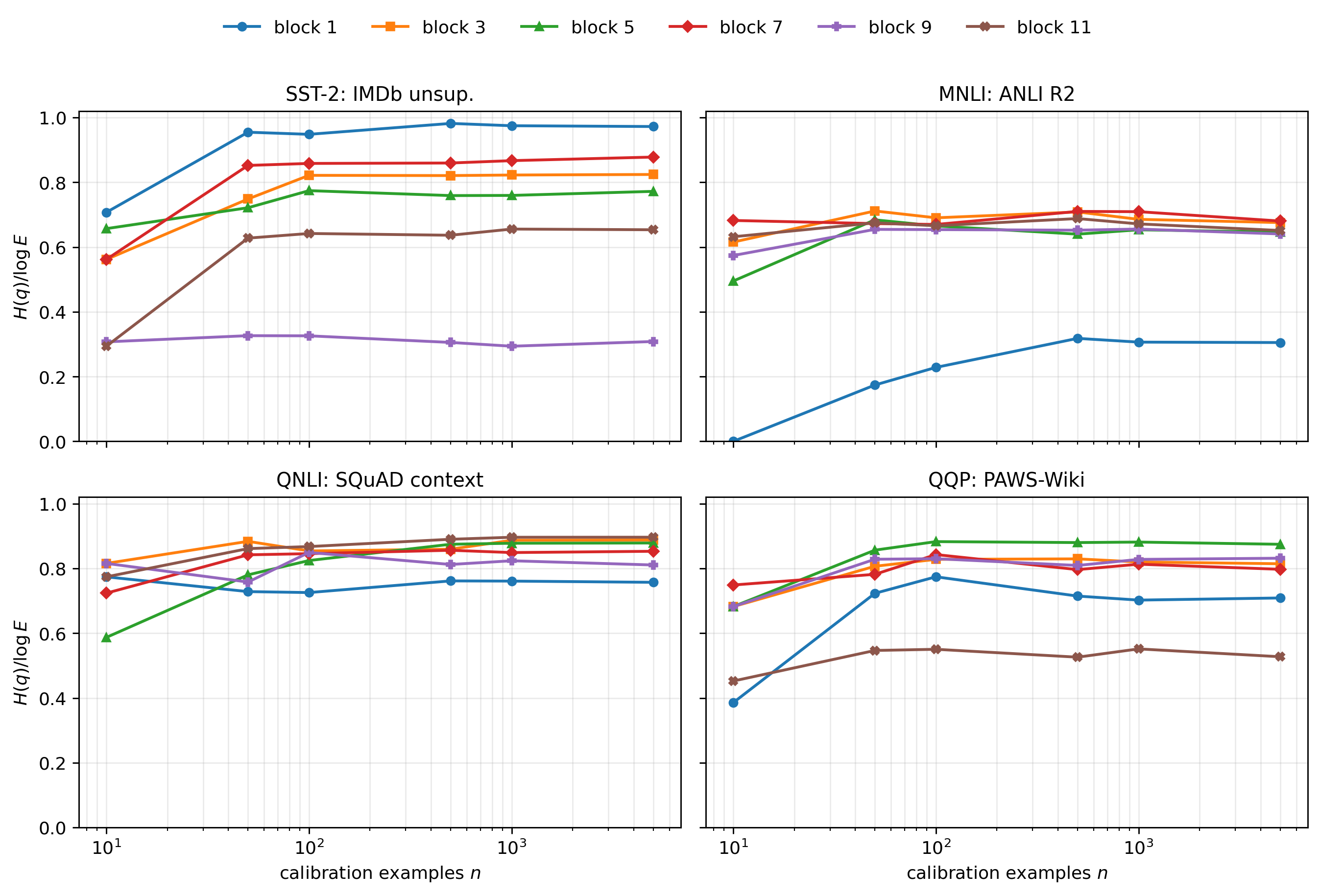}
  \caption{Normalised routing entropy $H(q^{(l)})/\log E$ as a
  function of calibration examples $n$ (log scale) for all six
  sparse encoder layers of \textsc{Switch-base-8}, across four tasks.
  Each line is one candidate layer; the entropy-selected layer at
  the full corpus size is marked in bold in
  Tab.~\ref{tab:switch_public_calibration_all_sparse_layers}.
  Entropy estimates stabilise by $n \approx 100$--$500$ for most
  layers; the main exception is MNLI block~1 (blue), which is
  heavily collapsed and grows slowly even at large $n$.
  The frozen router is applied to unlabelled inputs only;
  no private data are accessed.}
  \label{fig:entropy_vs_n}
\end{figure*}

\begin{figure}[h]
  \centering
  \includegraphics[width=0.9\linewidth]{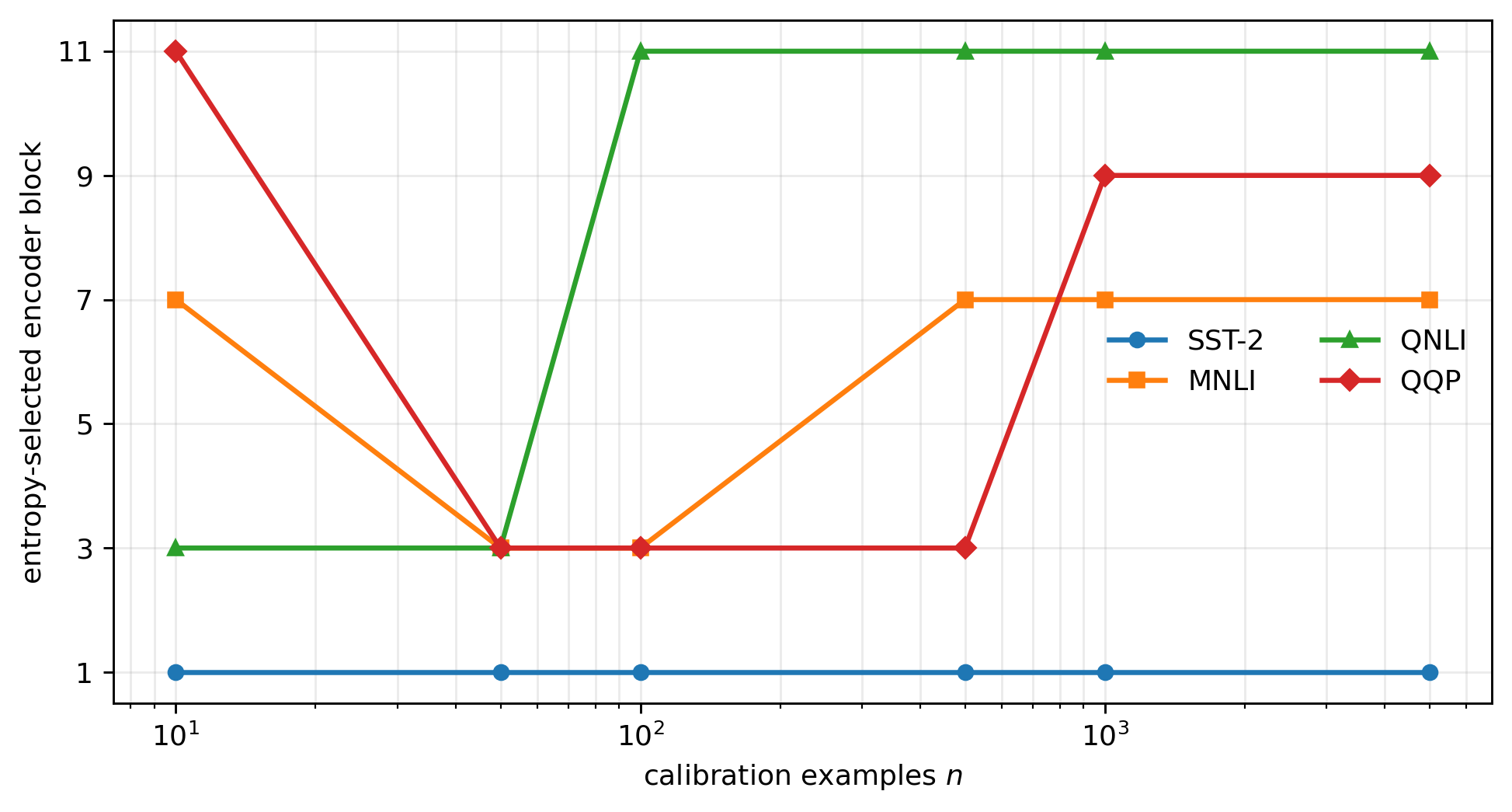}
  \caption{Public-calibration vs.\ GLUE-development normalised routing
entropy $H(q)/\log E$ for all 24 layer-task combinations (six sparse
encoder layers $\times$ four tasks). Each point is labelled with its
encoder block index. The dashed line is the diagonal $y=x$. Transfer
is closest for QNLI; MNLI shows a systematic gap, with all six layers
above the diagonal. The largest deviations are MNLI block~1 and SST-2
blocks~7 and~9, none of which is selected by the entropy criterion;
entropy-selected layers sit closer to the diagonal than collapsed,
non-selected ones.}
  \label{fig:best_block_vs_n}
\end{figure}
\paragraph{Public-to-private entropy transfer.}
The entropy-based layer-selection rule (Sec.~\ref{sec:public-layer-selection})
relies on routing statistics computed on a public corpus $\mathcal{C}_{\mathrm{pub}}$
to proxy the routing balance on the private training set $\mathcal{D}_{\mathrm{priv}}$.
Fig.~\ref{fig:entropy-scatter} checks this transfer empirically by
plotting, for each of the 24 layer-task combinations, the normalised
routing entropy $H(q)/\log E$ estimated on $\mathcal{C}_{\mathrm{pub}}$
against the corresponding entropy estimated on the GLUE development
set. Transfer is close for QNLI and, more loosely, for SST-2 and QQP;
MNLI is the exception, with all six layers sitting visibly above the
diagonal (dev-set entropy exceeding public-corpus entropy throughout),
a systematic gap rather than an isolated outlier. The largest single
deviation is MNLI block~1 (low public entropy but high dev-set
entropy), followed by SST-2 blocks~9 and~7; none of these three is
selected by $\arg\max_l H(q^{(l)})$ for its task. Entropy-selected
layers sit closer to the diagonal than collapsed, non-selected ones
across all four tasks - e.g.\ SST-2's selected block~1
($H/\log E = 0.972$ publicly) deviates far less than its own
block~9. On SST-2, QNLI, and QQP, the public-corpus argmax also
matches the dev-set ranking among top layers. On MNLI, blocks~3 and~7
are separated by only $0.008$ nats on $\mathcal{C}_{\mathrm{pub}}$
(Tab.~\ref{tab:switch_public_calibration_all_sparse_layers}), a margin narrow enough that the
dev-set ranking between these two layers may not agree with the
public-corpus one; Tab.~\ref{tab:switch_main_results} shows the
publicly-selected block~7 nonetheless yields the best downstream
accuracy on MNLI. We read this as evidence that public-corpus entropy
is a directionally reliable but imperfect proxy for private routing
balance, consistent with Remark~\ref{rem:adam-invariance}'s broader
caution that entropy is a validated heuristic rather than a certified
optimum.

\begin{figure}[h]
\centering
\includegraphics[width=0.6\textwidth]{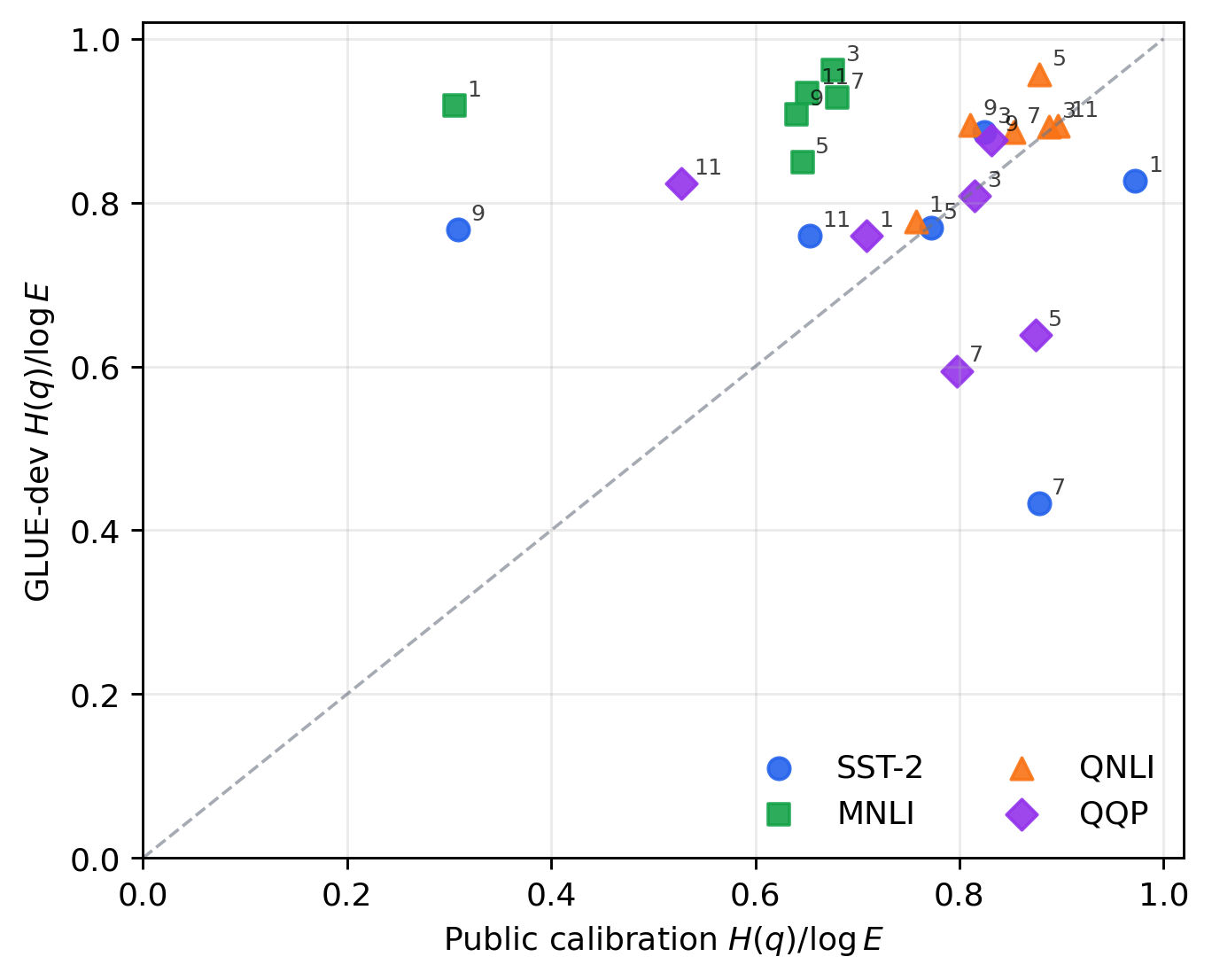}
\caption{Public-calibration vs.\ GLUE-development normalised routing entropy
  $H(q)/\log E$ for all 24 layer--task combinations (six sparse encoder
  layers $\times$ four tasks). Each point is labelled with its encoder
  block index. The dashed line is the diagonal $y=x$. Points near the
  diagonal indicate that entropy estimated on the public corpus transfers
  to the private training distribution. The two largest off-diagonal
  deviations (SST-2 blocks~7 and~9, blue circles) are collapsed layers
  not selected by the entropy criterion; the entropy-selected layer for
  each task lies near or above the diagonal.}
\label{fig:entropy-scatter}
\end{figure}

\onecolumn
\section{Mechanism Diagnostics}
\label{app:diagnostics}

The structural failures diagnosed in Sec.~\ref{sec:diagnosis} follow
from the algebraic form of monolithic private optimization: a single
clipping factor couples dense and sparse roles, the full-batch denominator
dilutes expert updates, and fixed privacy noise interacts unfavorably with
routing imbalance. These failures are properties of the mechanism definition
and do not require empirical validation to exist. Empirical diagnostics are
nevertheless useful for measuring how strongly they manifest under the
actual gradient distribution, routing frequencies, and noise realizations
of a trained model.

Tab.~\ref{tab:rq2_diagnostics_tasks} reports three such diagnostics at
$\varepsilon\approx 8$ across GLUE tasks. These results should be read as
mechanism-level evidence that is directionally informative but not
statistically powered: diagnostic runs were completed for a limited number
of configurations per task, and the table is intended to corroborate the
algebraic arguments of Sec.~\ref{sec:diagnosis} rather than to
constitute a statistically controlled comparison.

\begin{table*}[h]
\centering
\small
\begin{tabular}{llccc}
\toprule
Task & Method
& $M_{\mathrm{role}}\downarrow$
& $\overline{\mathrm{SNR}}\uparrow$
& $\mathrm{SNR}\text{-}\mathrm{CV}\downarrow$ \\
\midrule

\multirow{3}{*}{SST-2}
& Monolithic DP-Adam LoRA & $0.320$ & $1.08{\times}10^{-4}$ & $2.35$ \\
& Monolithic DP-Adam      & $0.591$ & $6.38{\times}10^{-6}$ & $1.20$ \\
& \cellcolor{custom_light_purple_2}\Raptor
& \cellcolor{custom_light_purple_2}$0.401$
& \cellcolor{custom_light_purple_2}$1.80{\times}10^{-5}$
& \cellcolor{custom_light_purple_2}$1.69$ \\

\midrule

\multirow{3}{*}{MNLI}
& Monolithic DP-Adam LoRA & $0.494$ & $9.65{\times}10^{-5}$ & $3.54$ \\
& Monolithic DP-Adam      & $0.560$ & $3.04{\times}10^{-5}$ & $1.24$ \\
& \cellcolor{custom_light_purple_2}\Raptor
& \cellcolor{custom_light_purple_2}$0.337$
& \cellcolor{custom_light_purple_2}$2.25{\times}10^{-4}$
& \cellcolor{custom_light_purple_2}$1.86$ \\

\midrule

\multirow{3}{*}{QNLI}
& Monolithic DP-Adam LoRA & $0.723$ & $2.64{\times}10^{-5}$ & $1.69$ \\
& Monolithic DP-Adam      & $0.619$ & $9.58{\times}10^{-6}$ & $1.17$ \\
& \cellcolor{custom_light_purple_2}\Raptor
& \cellcolor{custom_light_purple_2}$0.397$
& \cellcolor{custom_light_purple_2}$7.19{\times}10^{-5}$
& \cellcolor{custom_light_purple_2}$1.62$ \\

\midrule

\multirow{3}{*}{QQP}
& Monolithic DP-Adam LoRA & $0.701$ & $5.98{\times}10^{-5}$ & $2.03$ \\
& Monolithic DP-Adam      & $0.573$ & $1.81{\times}10^{-5}$ & $1.31$ \\
& \cellcolor{custom_light_purple_2}\Raptor
& \cellcolor{custom_light_purple_2}$0.324$
& \cellcolor{custom_light_purple_2}$1.14{\times}10^{-4}$
& \cellcolor{custom_light_purple_2}$2.53$ \\

\bottomrule
\end{tabular}%
\caption{Mechanism diagnostics on \textsc{Switch-base-8} at $\varepsilon\approx 8$.
$M_{\mathrm{role}}$: total variation distance between gradient-norm share
and parameter-count share across roles (lower = more proportionate signal
allocation). $\overline{\mathrm{SNR}}$: mean per-expert signal-to-noise
ratio (higher = stronger expert update signal). $\mathrm{SNR}$-CV:
coefficient of variation of per-expert SNR (lower = more uniform update
quality across experts).}
\label{tab:rq2_diagnostics_tasks}
\end{table*}

\paragraph{Role mismatch.}
For each top-level role $r$ (shared parameters, router, classifier, and
experts), let $s_r^{\mathrm{norm}}$ be the share of gradient norm assigned
to that role and $s_r^{\mathrm{param}}$ be its share of trainable
parameters. We define
\begin{equation}
M_{\mathrm{role}}=
\frac{1}{2}
\sum_r
\bigl|s_r^{\mathrm{norm}}-s_r^{\mathrm{param}}\bigr|.
\end{equation}
Lower values indicate that gradient signal is more proportionate to the
trainable parameter mass across roles. Across all tasks with available
Monolithic DP-Adam diagnostics, \Raptor\,  substantially reduces
$M_{\mathrm{role}}$ relative to Monolithic DP-Adam: from $0.591$ to $0.401$
on SST-2, from $0.560$ to $0.337$ on MNLI, and from $0.619$ to $0.397$
on QNLI. Relative to Monolithic DP-Adam LoRA, role mismatch is reduced on
MNLI, QNLI, and QQP, though not on SST-2. The role-aware decomposition
therefore does not uniformly minimize this diagnostic, but it consistently
avoids the pronounced role mismatch observed under full monolithic
DP-Adam.

\paragraph{Mean expert SNR.}
The column $\overline{\mathrm{SNR}}$ measures the average per-expert
signal-to-noise ratio, capturing the absolute strength of the expert
update signal after accounting for privacy noise. On MNLI, QNLI, and QQP,
\Raptor\, achieves the largest mean expert SNR among the available
configurations: $2.25{\times}10^{-4}$ on MNLI, $7.19{\times}10^{-5}$ on
QNLI, and $1.14{\times}10^{-4}$ on QQP. On SST-2, Monolithic DP-Adam LoRA
records the highest mean SNR, while our framework still improves over
full Monolithic DP-Adam. This pattern is consistent with the intended effect
of role-aware expert updates: the method is most beneficial when monolithic
optimization gives experts weak or diluted signal, whereas simpler tasks
such as SST-2 can already be served adequately by a strong LoRA baseline.

\paragraph{Expert SNR variation.}
The SNR-CV column reports the coefficient of variation of per-expert SNR;
lower values indicate more uniform update quality across experts. \Raptor\, reduces SNR-CV relative to Monolithic DP-Adam LoRA on SST-2, MNLI,
and QNLI, but does not always achieve the lowest SNR-CV overall: full
Monolithic DP-Adam has lower SNR-CV on SST-2, MNLI, and QNLI, and
Monolithic DP-Adam LoRA has lower SNR-CV than our framework on QQP.
This does not contradict the utility results: a low SNR-CV can arise from
uniformly \emph{weak} expert updates rather than uniformly useful ones. Full
Monolithic DP-Adam exhibits low SNR-CV on SST-2 and QNLI precisely because
its mean expert SNR is far lower than that of our framework. We
therefore interpret SNR-CV jointly with $\overline{\mathrm{SNR}}$:
role-aware training primarily improves absolute expert signal strength and
role proportionality, while only partially reducing cross-expert SNR
variation.

\paragraph{Summary.}
Tab.~\ref{tab:rq2_diagnostics_tasks} provides directional support for the
mechanism account developed in Sec.~\ref{sec:diagnosis}. \Raptor\, is not uniformly best on every diagnostic, and the limited number
of completed runs means that the reported values should be treated as
indicative rather than conclusive. Nevertheless, the method consistently
reduces the large role mismatch of full monolithic DP-Adam and substantially
increases mean expert SNR on the harder GLUE tasks. These trends are
directionally consistent with the downstream accuracy gains in
Tab.~\ref{tab:switch_main_results}: the advantage of role-aware private
training stems less from eliminating all expert imbalance and more from
preventing expert updates from being suppressed or diluted by a dense
monolithic private optimizer.

\paragraph{Corpus-Level Routing Diagnostics}

Replacing the training batch $\mathcal{B}_t^{\mathrm{exp}}$ with the full calibration
corpus $\mathcal{C}_{\mathrm{pub}}$ (Sec.~\ref{sec:public-layer-selection})
gives corpus-level estimates $\widehat{\mathrm{CV}}_l$ and
$\widehat{H}_l$, computed once from the frozen router before any private
training begins.
Tab.~\ref{tab:routing_load_diagnostics} reports these for the
entropy-selected and most-collapsed layers across all four tasks.
In every case the selected layer has lower $\widehat{\mathrm{CV}}_l$
and higher $\widehat{H}_l$ than the collapsed layer, confirming that
the two statistics are consistent and that high routing entropy
coincides with more uniform load.

\begin{table}[h]
\centering
\small
\begin{tabular}{lllcc}
\toprule
Task & Public corpus & Layer (block)
  & $\widehat{\mathrm{CV}}_{l}$
  & $\widehat{H}_{l}/\log E$ \\
\midrule
SST-2 & IMDb unsup.
  & selected (1)    & 0.344 & 0.972 \\
  &
  & collapsed (9)   & 2.185 & 0.304 \\
\midrule
MNLI  & ANLI R2
  & selected (7)    & 1.123 & 0.682 \\
  &
  & collapsed (1)   & 2.224 & 0.307 \\
\midrule
QNLI  & SQuAD context
  & selected (11)   & 0.627 & 0.902 \\
  &
  & collapsed (1)   & 1.121 & 0.761 \\
\midrule
QQP   & PAWS-Wiki
  & selected (9)    & 0.700 & 0.837 \\
  &
  & collapsed (7)  & 1.763 & 0.498 \\
\bottomrule
\end{tabular}
\caption{Corpus-level routing diagnostics for the entropy-selected and
  most-collapsed sparse layers of \textsc{Switch-base-8}, computed on external
  public calibration corpora with labels ignored
  (see Appendix~\ref{app:entropy-analysis} for corpus details).
  Higher $\widehat{\mathrm{CV}}_{l}$ and lower $\widehat{H}_{l}$
  indicate stronger routing imbalance; the two statistics are
  consistent across all tasks.
  Values are identical across private training methods because the
  router is frozen.}
\label{tab:routing_load_diagnostics}
\end{table}

\onecolumn
\section{Budget Allocation: Surrogate Analysis and Empirical Sensitivity}
\label{app:budget-allocation}

\paragraph{Surrogate analysis.}
The bias-variance decomposition of Theorem~\ref{thm:bias-var} shows
that privacy noise enters the shared and expert streams through separate
variance terms, $V^{\mathrm{DP}}_s \propto C_s^2/(\rho\varepsilon)^2$ and
$V^{\mathrm{DP}}_e \propto E^2C_e^2/((1-\rho)\varepsilon)^2$, whose
relative magnitudes depend on the budget split $\rho$. To obtain a
tractable criterion for choosing $\rho$, we weight these terms by the
downstream sensitivity of the task loss to each stream.

\begin{proposition}[Budget allocation under shared-dominant routing]
\label{prop:shared-dominant-rho}
Under balanced routing ($q_e = 1/E$), define the single-step surrogate
\begin{equation}
\mathcal{L}(\rho)
= \frac{\lambda_s C_s^2}{\rho^2}
+ \frac{\lambda_e E^2 C_e^2}{(1-\rho)^2},
\qquad \rho\in(0,1),
\label{eq:shared-dominant-loss}
\end{equation}
where $\lambda_s, \lambda_e > 0$ weight the task loss sensitivity to
shared- and expert-stream perturbations. The unique minimizer is
\begin{equation}
\rho^* = \frac{1}{1 + \left(
  \dfrac{\lambda_e E^2 C_e^2}{\lambda_s C_s^2}
\right)^{1/3}}.
\label{eq:rho-star}
\end{equation}
\end{proposition}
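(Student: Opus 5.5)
This is a one-dimensional calculus problem on $(0,1)$. Abbreviate $a=\lambda_s C_s^2>0$ and $b=\lambda_e E^2 C_e^2>0$, so that $\mathcal{L}(\rho)=a\rho^{-2}+b(1-\rho)^{-2}$. The plan is to prove four things in turn: $\mathcal{L}$ is strictly convex on $(0,1)$, it blows up at both endpoints, it has exactly one stationary point, and that point has the closed form in Eq.~\eqref{eq:rho-star}. Existence and uniqueness of the minimizer then follow together.

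\textbf{Convexity and boundary behaviour.} Differentiating twice gives
\[
\mathcal{L}''(\rho)=\frac{6a}{\rho^4}+\frac{6b}{(1-\rho)^4},
\]
which is strictly positive on $(0,1)$, so $\mathcal{L}$ is strictly convex there. Moreover $\mathcal{L}(\rho)\to\infty$ as $\rho\to0^+$ (driven by the $a\rho^{-2}$ term) and as $\rho\to1^-$ (driven by the $b(1-\rho)^{-2}$ term). A continuous function that diverges at both ends of an open interval attains its infimum in the interior. Strict convexity then makes that minimizer unique, and it must be the unique zero of $\mathcal{L}'$.

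\textbf{First-order condition.} Next I solve
\[
\mathcal{L}'(\rho)=-\frac{2a}{\rho^3}+\frac{2b}{(1-\rho)^3}=0,
\]
which is equivalent to $\bigl((1-\rho)/\rho\bigr)^3=b/a$. Both $(1-\rho)/\rho$ and $b/a$ are positive on $(0,1)$, and the real cube root is a bijection on $(0,\infty)$. Taking cube roots gives $(1-\rho)/\rho=(b/a)^{1/3}$, hence $\rho^*=1/\bigl(1+(b/a)^{1/3}\bigr)$. Substituting back $a$ and $b$ reproduces Eq.~\eqref{eq:rho-star}, and $\rho^*\in(0,1)$ holds automatically.

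\textbf{Main obstacle and interpretation.} There is no real analytic difficulty. The only point needing care is the uniqueness claim, and the strict positivity of $\mathcal{L}''$ settles it. I will add a remark tying this to the shared-dominant interpretation: $\rho^*>1/2$ exactly when $b<a$, i.e.\ when $\lambda_e E^2 C_e^2<\lambda_s C_s^2$. This is the regime the proposition's title refers to; the proposition itself does not fix a numerical value of $\rho$.
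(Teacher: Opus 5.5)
Your proof is correct and follows essentially the same route as the paper's: abbreviate the two constants, solve $\mathcal{L}'(\rho)=0$ to get $\bigl((1-\rho)/\rho\bigr)^3=b/a$, and use strict positivity of $\mathcal{L}''$ for uniqueness. Your boundary-divergence argument for existence is a harmless extra that the paper omits, since the explicit stationary point in $(0,1)$ together with strict convexity already makes it the global minimizer.
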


\begin{proof}
Setting $A = \lambda_s C_s^2$ and $B_c = \lambda_e E^2 C_e^2$,
differentiating $\mathcal{L}(\rho) = A/\rho^2 + B_c/(1-\rho)^2$ and
equating to zero gives $((1-\rho)/\rho)^3 = B_c/A$, yielding
Eq.~\eqref{eq:rho-star}. Strict convexity ($d^2\mathcal{L}/d\rho^2 =
6A/\rho^4 + 6B_c/(1-\rho)^4 > 0$) confirms uniqueness.
\end{proof}

\paragraph{Interpretation.}
Eq.~\eqref{eq:rho-star} gives two qualitative directions. First,
$\rho^*$ increases with $\lambda_s/\lambda_e$: higher shared-stream
sensitivity concentrates the budget on the shared stream. Second,
$\rho^*$ \emph{decreases} with $E$: the $E^2$ amplification of
expert-stream noise makes expert-side budget more valuable per unit
of noise reduction under the surrogate, pulling $\rho^*$ toward $0$
as $E$ grows. The shared-dominant regime $\rho^*\!\to\!1$ we adopt
therefore does not come from $E$ - it comes from the sensitivity
asymmetry: the empirical estimate
$\lambda_e/\lambda_s \lesssim 6.3\times10^{-3}$
(Tab.~\ref{tab:noise-sensitivity}) shows the task loss is
far more sensitive to shared-stream perturbations, which must
outweigh the $E^2$ factor for $\rho^*$ to sit near $1$.
Proposition~\ref{prop:shared-dominant-rho} provides qualitative
guidance, not a certified value of $\rho^*$: the sensitivity ratio
$\lambda_e/\lambda_s$ required to formally conclude $\rho^*\ge 0.9$
(Eq.~(41)) is stringent and, per Tab.~\ref{tab:noise-sensitivity},
does not hold for this model. We therefore treat $\rho=0.9$ as an
empirically validated shared-dominant choice
(Sec.~\ref{sec:experiments}), not a certified optimum.
\paragraph{Empirical sensitivity estimate.}
For each role $r \in \{\text{shared},\,\text{experts}\}$, we inject
Gaussian noise $\xi_r \sim \mathcal{N}(0,\,\sigma_r^2 I)$ matched to
the DP noise magnitude at $\varepsilon \approx 8$, apply it to $\theta_r$,
and record $\Delta\mathcal{L}_{\mathrm{val}}$. We define the one-sided
sensitivity estimate
\begin{equation}
\lambda_+ =
\frac{\max(\Delta\mathcal{L}_{\mathrm{val}},\;0)}
     {\mathbb{E}\|\xi_r\|_2^2},
\label{eq:lambda-plus}
\end{equation}
flooring non-positive values at zero: a negative or near-zero
$\Delta\mathcal{L}_{\mathrm{val}}$ indicates that the role lies below
the measurement noise floor.

\begin{table}[h]
\centering
\small
\begin{tabular}{lrrr}
\toprule
Role & Params & $\Delta\mathcal{L}_{\mathrm{val}}$ & $\lambda_+$ \\
\midrule
Shared  & $9{,}438{,}722$ & $\phantom{-}2.175\times10^{-4}$
        & $2.305\times10^{-5}$ \\
Experts & $7{,}864{,}320$ & $-1.361\times10^{-6}$
        & $0$ \\
\bottomrule
\end{tabular}
\caption{Noise-injection sensitivity on SST-2 at $\varepsilon \approx 8$.
  $\lambda_+$ floors non-positive $\Delta\mathcal{L}_{\mathrm{val}}$ at
  zero (Eq.~\eqref{eq:lambda-plus}). A non-positive expert value
  indicates sensitivity below the measurement noise floor.}
\label{tab:noise-sensitivity}
\end{table}

Tab.~\ref{tab:noise-sensitivity} shows clear shared-stream sensitivity
($\lambda_s = 2.305\times10^{-5}$) and negligible expert sensitivity.
Using the magnitude of the signed expert estimate as a conservative upper
bound,
\begin{equation}
\frac{\lambda_e}{\lambda_s}
\;\lesssim\;
\frac{1.361\times10^{-6}}{2.175\times10^{-4}}
\;\approx\; 6.3\times10^{-3}.
\label{eq:lambda-ratio-bound}
\end{equation}
For $E = 8$ and $C_s = C_e$, Eq.~\eqref{eq:lambda-ratio-bound} requires
$\lambda_e/\lambda_s \le 2.1\times10^{-5}$. The empirical bound exceeds
this threshold, so Proposition~\ref{prop:shared-dominant-rho} does not
formally certify $\rho^* = 0.9$ for this model. We therefore treat
$\rho = 0.9$ as a shared-dominant initialization: motivated by the
surrogate and consistent with the near-zero measured expert sensitivity,
but validated primarily by the empirical sweep in
Sec.~\ref{sec:hparam}.

\onecolumn
\section{Runtime and Memory}
\label{app:cost}
Tab.~\ref{tab:efficiency} reports implementation-level runtime and
memory for fine-tuning \textsc{Switch-base-8} on SST-2 at
$\varepsilon\approx 8$, comparing \Raptor\,  against the two
Monolithic DP baselines from Tab.~\ref{tab:switch_main_results}. We
include this because Role-Aware changes training scope as well as the
DP mechanism, so accuracy gains alone don't show whether they come at
a compute cost. 
Global DP-Adam trains the full model ($307.85$M params, $100\%$);
Global DP-Adam LoRA trains only adapters ($14.16$M, $4.60\%$);
our framework trains the shared stream plus one expert layer ($17.30$M,
$5.62\%$) - comparable in scope to LoRA, $\sim\!18\times$ smaller than
the full model. Our framework's gains over Monolithic DP-Adam
(Tab.~\ref{tab:switch_main_results}) are thus not explained by
training more of the model.

Peak memory is lowest for \Raptor\, ($112.16$GB), below both LoRA
($130.74$GB) and the full model ($165.51$GB), plausibly because
expert-stream clipping operates on owner subsets $\mathcal{B}_{t,l,e}$
rather than the full batch at once. Per-step time
and total GPU-hours are highest for the full model ($7.23$s,
$2.61$ GPU-hrs) and lowest for LoRA ($4.07$s, $1.47$); our framework sits
close to LoRA ($4.51$s, $1.65$ GPU-hrs) despite the extra forward pass
for the residual weight $w_i$ and the alternating schedule
(Sec.~\ref{sec:alternating}) - about $37\%$ cheaper in total
compute than the full-model baseline, at higher accuracy.

\begin{table}[h]
\centering
\small
\begin{tabular}{lccccc}
\toprule
Method & Trainable params & \% params & Peak GB & Sec./step & GPU-hours \\
\midrule
Global DP-Adam LoRA
& $14.16$M & $4.60\%$ & $130.74$ & $4.07$ & $1.47$ \\
Global DP-Adam
& $307.85$M & $100.00\%$ & $165.51$ & $7.23$ & $2.61$ \\
\cellcolor{custom_light_purple_2}\Raptor
& \cellcolor{custom_light_purple_2}$17.30$M
& \cellcolor{custom_light_purple_2}$5.62\%$
& \cellcolor{custom_light_purple_2}$\mathbf{112.16}$
& \cellcolor{custom_light_purple_2}$4.51$
& \cellcolor{custom_light_purple_2}$1.65$ \\
\bottomrule
\end{tabular}
\caption{Implementation footprint of fine-tuning \textsc{Switch-base-8}
on SST-2 at $\varepsilon\approx 8$. Raw implementation-level costs, not
a controlled same-scope benchmark---scope differs by design.
Percentages relative to the full trainable \textsc{Switch-base-8} model used by
Global DP-Adam. Bold: lowest peak memory.}
\label{tab:efficiency}
\end{table}


\end{document}